\PassOptionsToPackage{dvipsnames}{xcolor}
\documentclass{article}
\usepackage{arxiv,times}
\renewcommand{\headrulewidth}{0pt}

\usepackage{hyperref}
\usepackage{url}

\usepackage{amsfonts}
\usepackage{booktabs}
\usepackage{subcaption}
\usepackage{makecell}
\usepackage{siunitx}
\usepackage{braket}
\usepackage{placeins}
\usepackage{fontawesome5}
\usepackage{amsthm}
\usepackage{mathtools}
\usepackage[dvipsnames]{xcolor}
\usepackage{wrapfig}

\usepackage{enumitem}
\usepackage[nameinlink, capitalize]{cleveref}
\usepackage{appendix}
\usepackage[bottom]{footmisc}

\usepackage{tikz}
\usetikzlibrary{arrows.meta}
\usetikzlibrary{decorations.pathreplacing}
\usetikzlibrary{decorations.pathmorphing}
\usetikzlibrary{positioning,fit,backgrounds,calc}

\crefformat{equation}{(#2#1#3)}
\crefformat{figure}{Figure~#2#1#3}
\crefname{example}{Example}{Examples}
\crefname{lemma}{Lemma}{Lemmas}
\crefname{cor}{Corollary}{Corollaries}
\crefname{theorem}{Theorem}{Theorems}
\crefname{assumption}{Assumption}{Assumptions}

\crefname{appsec}{appendix}{appendices}
\Crefname{appsec}{Appendix}{Appendices}

\theoremstyle{plain}
\newtheorem{theorem}{Theorem}[section]

\newtheorem{corollary}[theorem]{Corollary}
\theoremstyle{definition}
\newtheorem{definition}[theorem]{Definition}

\newtheorem{example}[theorem]{Example}
\theoremstyle{remark}

\newcommand{\qualimg}[1]{\includegraphics[width=\linewidth,height=2.6cm,keepaspectratio]{figs/llava_qual_failures/#1}}
\newcommand{\cvec}[1]{\bigl[\begin{smallmatrix}#1\end{smallmatrix}\bigr]}
\newcommand\boldred[1]{\textcolor{red}{\textbf{#1}}}
\newcommand\boldgreen[1]{\textcolor{ForestGreen}{\textbf{#1}}}
\DeclareMathOperator{\Tr}{Tr}

\title{Persistent Representational Misalignment\\Induces Vision-Language Model Failures}

\author{Yonatan Gideoni\NoHyper\thanks{Correspondence to \texttt{yg@robots.ox.ac.uk}. Code can be found at \href{https://github.com/YonatanGideoni/persistent_repr_misalignment}{\url{https://github.com/YonatanGideoni/persistent_repr_misalignment}}.} \\
University of Oxford\\
\And
Yoav Gelberg \\
University of Oxford\\
\And
Tim G. J. Rudner \\
University of Toronto \\
\And
Yarin Gal \\
University of Oxford \\
}

\iclrfinalcopy
\begin{document}

\maketitle

\begin{abstract}
Vision-language models (VLMs) fail at some tasks that are simple for humans---but why? We hypothesize that many such failures stem from VLMs’ difficulties translating between vision and language representations—a misalignment the Platonic Representation Hypothesis predicts should disappear with scale. This paper is composed of two parts: in the first, we show that existing vision and language representations are linearly unalignable. This misalignment is an inherent property of the data and hence expected to persist regardless of either scale or the alignment's complexity. In the second, we demonstrate how this misalignment both leads to a predictably induced generalization failure and correlates with existing failure modes, where more representationally aligned VLM backbones accordingly exhibit fewer failures. Overall, these results imply representations will not become aligned simply given sufficient scale, providing evidence against a strong version of the Platonic Representation Hypothesis.
\end{abstract}

\vspace{-10pt}
\section{Introduction}
\vspace{-5pt}
Although vision and language models are considered superhuman on many tasks \citep{he2015delving, achiam2023gpt}, their multimodal counterparts exhibit some very simple failures. Vision-language models (VLMs) struggle identifying visual analogies \citep{yiu2024kiva}, matching semantically different but syntactically similar captions to images \citep{thrush2022winoground}, identifying simple geometric properties like whether two circles overlap \citep{rahmanzadehgervi2024vision}, and counting atypical objects, e.g. the number of legs on a three-legged chicken \citep{vo2025vision}. Humans outperform VLMs on all of these tasks.

What causes these failures and why they persist is generally unclear, with several linked observations. \citet{vo2025vision} observe that VLMs can rely on their language component's world knowledge (``chickens typically have two legs'') more than on what their vision backbones see, but do not discuss why this occurs. Testing why VLMs struggle with fine visuolingual retrieval, \citet{diwan2022winoground} speculate that it is due to misalignment between the visual and text representations.

A lack of representational alignment is surprising given some common beliefs in the field. The Platonic Representation Hypothesis states that sufficiently well trained models should converge to equivalent representations \citep{huh2024platonic}, with many works presuming or relying on such convergence \citep{tjandrasuwita2025understanding, zhu2025dynamic, jha2025harnessing, gupta2025better, wang2025words, bahng2025cycle, he2025seeing}.

In this paper, we call the Platonic Representation Hypothesis into question and show implications thereof on existing models. To test whether vision and language representations converge, we develop a simple theoretical framework for when sets of representations are alignable (\Cref{sec:intermod_hard}). We find that existing representations are unalignable due to underlying differences in the data, meaning alignment will likely not naturally emerge with scale. This misalignment predictably leads to an induced generalization failure in vision-language models, where a VLM fails to recognize a concept its vision and language backbones know if it does not appear in its paired finetuning data (\Cref{sec:vlms_failure}). Moreover, representational alignment correlates with performance on the failure modes, where more aligned VLM backbones lead to VLMs which exhibit fewer failures (\Cref{sec:repr_align_failures}).

Altogether, these results support the conjecture that {\bf a diverse set of VLM failures likely stem from representational misalignment and will not be resolved solely by increasing model scale}. This conjecture provides a joint cause for seemingly disjoint failure modes while explaining why the failures persist even in very capable models.

\section{Relations Between Representations}
\vspace{-10pt}
\label{sec:align_equiv}
In this section we discuss different ways in which sets of representations can be similar. While there are many existing notions \citep{sucholutsky2023getting}, many are statements on whether the representations are \textit{equivalent} or \textit{alignable}. We now discuss what each of these mean and the relationships between different kinds of similarity, with additional related work in Appendix \ref{app:rel_work}.

There are two main ways in which different sets of representations can be equivalent. The first is \textit{functional equivalence}, where two representations or models producing some representations are used for some task. For example, take two sets of word embeddings used for retrieval. In this case, the representations are equivalent if they perform comparably well and make the same mistakes, thereby being indistinguishable from a black-box perspective. A weaker but related form of equivalence is \textit{representational equivalence} \citep{klabunde2025similarity}, where sets of representations are equivalent up to some transformation. Invariances to different transformations define different similarity kernels between representations, with most similarity kernels defining two representations as equivalent if they are the same up to some (constrained) linear transformation, perhaps with some additional normalization \citep[section~4]{kornblith2019similarity}. Works like the Platonic Representation Hypothesis primarily discuss representational equivalence \citep{huh2024platonic}. Representational equivalence is likely, but not guaranteed, to result in functional equivalence -- for the word embeddings, an orthogonal transformation over an entire space would not affect which words are retrieved as distances would be preserved, but more general transforms could lead to functional differences.

\looseness=-1 Representational equivalence is closely related to \textit{representational alignment}. For typical generative VLMs, pretrained vision and language models are combined using an adapter that projects vision representations into the language model, thereby aligning the vision representations with the language token embeddings \citep{alayrac2022flamingo, merullo2022linearly, liu2023visual}. For retrieval, language and vision representations are aligned by being projected into a shared representation space \citep{radford2021learning}. Representations that are representationally equivalent are easy to align, as a linear map is often sufficient to map between them. More broadly, representations can be \textit{functionally alignable} if given some training they can be used together to generalize over some task, where generalization is important as an adapter may just memorize training data. If the adapter is a simple linear map, as it often is for generative VLMs \citep{liu2023visual, zhu2023minigpt}, functional alignment implies a degree of representational alignment, as the learned map aligns the representations. This is often used in model-stitching \citep{lenc2015understanding,bansal2021revisiting,csiszarik2021similarity}, where parts of one model are stitched to another using a trained linear layer, as a way to probe whether two models learn the same representations.\footnote{This reasoning has some known caveats; see \citet{smith2025functional}.}

\begin{wrapfigure}{r}{0.36\textwidth}
    \vspace{-15pt}
    \scalebox{0.82}{
    \begin{tikzpicture}[
    >=Stealth,
    every node/.style={text=black, align=center},
    line/.style={draw=black, thick}
    ]
        
        \node (FE) at (0,2.5) {Functional\\Equivalence};
        \node (FA) at (0,0)   {Functional\\Alignment};
        
        \node (RE) at (3.75,2.5) {Representational\\Equivalence};
        \node (RA) at (3.75,0)   {Representational\\Alignment};
        
        \draw[line,->] (FA) -- (FE);
        
        \draw[line,->] (FA) -- (RA);
        
        \draw[line,->] (RE) -- node[above, midway] {\small (typically)} (FE);
        
        \draw[line,->] ([xshift=-6pt]RE.south) -- ([xshift=-6pt]RA.north);
        \draw[line,->] ([xshift=6pt]RA.north) -- node[above,midway, rotate=270, yshift=0pt] {\small (typically)} ([xshift=6pt]RE.south);
    \end{tikzpicture}
    }
    \vspace{-20pt}
\end{wrapfigure}

In practice, both equivalence and alignment are not binary properties but fall on various spectrums and in some cases one can imply another. Typical relationships between equivalence and alignment are summarized in this diagram, where arrows represent ``X implies Y'' and ``typically'' means that counterexamples exist but are somewhat adversarial. Thus, two representations being representationally equivalent typically implies they are functionally equivalent as well.

This framework helps understand how different statements on representations relate to one another. For example, the Platonic Representation Hypothesis \citep{huh2024platonic} asserts that representations in different modalities should be representationally equivalent and tests this by showing trends in representational alignment. Representational alignment typically implies representational equivalence, thereby supporting their claim. Throughout this paper we will show different forms of alignment or equivalence fail to hold, noting how one not holding rules out another, e.g. representations cannot be functionally aligned if they are functionally inequivalent.

\vspace{-9pt}
\section{Broken Conditions for Representational Alignment}
\label{sec:intermod_hard}
In this section we derive the conditions for representations to be linearly alignable. Operationally, many alignment metrics like CKA quantify whether sets of representations are linearly alignable up to normalization, restricted to some subset of linear transformations \citep{kornblith2019similarity}. In Appendix \ref{app:repr_sim_lin_align} we formalize some of this intuition, showing how certain bounds on representational alignability accordingly bound some common representational similarity metrics.

A useful way to test for representational alignability is by considering how concepts are related in different representations \citep{dhimoila2026unifying}. For example, over a decade ago \citet{mikolov2013exploiting} showed that word embeddings trained on different languages have similar relationships between the same sets of words, so a linear map is sufficient to translate between representations in different languages. In this case each word embedding defines how a concept, namely its word, is represented.

One very simple kind of relationship between concepts is analogies. More formally, let there be some analogy where $A$ is to $B$ as $C$ is to $D$, e.g. king is to queen as man is to woman. The vision representation of a concept like ``king'' can be found by averaging the representations of many images of kings. All presented theorems are based on simple known lemmas in linear algebra, with proofs delegated to \Cref{app:thm_proof}. If $v_X$ denotes some model's representation of concept $X$ then we can formally define an analogy as follows.

\begin{definition}[Concepts and analogies in representations]
The vector representing some concept $X$ is denoted as $v_X$. The analogy $A$ is to $B$ as $C$ is to $D$ (denoted $A:B::C:D$) is said to exist for a model's representations of these concepts if $v_A-v_B=v_C-v_D$. The tuple representing an analogy is denoted as $v^{A:B::C:D}\coloneqq(v_A,v_B,v_C,v_D)$. We assume no concept is represented by the zero vector.
\end{definition}

In practice, the analogy would only be approximate, so $v_A-v_B\approx v_C-v_D$. Throughout this section, take $V,W$ to be two finite-dimensional inner product spaces, e.g. $\mathbb{R}^n$ and $\mathbb{R}^m$. One can intuitively think of them as the representations of two different models, potentially from different modalities. For simplicity, all inner products are viewed as dot products, where $\braket{v,w}=v^Tw$, but the theorems equally hold regardless of the kind of inner product.

If there are enough corresponding analogies between two models then one might expect a linear map between their representations to exist, as the analogies enforce a linear structure. In practice, this is not the case.

\begin{theorem}[Analogies are insufficient for a linear map to exist]
    Let there be two sets of the same analogies in $V$ and $W$, $\{v^{A_i:B_i::C_i:D_i}\}$ and $\{w^{A_i:B_i::C_i:D_i}\}$. There exist sets of analogies such that there is no corresponding linear map $T:V\to W$ such that $\forall i,X:T(v_{X_i})=w_{X_i}$.
\end{theorem}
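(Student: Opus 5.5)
This is an existence statement, so the plan is to exhibit an explicit counterexample: a single analogy realized in $V$ and in $W$ for which no linear $T$ can match the concepts. The key observation is that a linear map preserves every linear relation among its inputs. So if the vectors $v_{A},v_B,v_C,v_D$ satisfy a linear dependence that the $w$'s do not, no linear map can send one tuple to the other. The analogy constraint $v_A-v_B=v_C-v_D$ is itself one such relation, and any linear $T$ preserves it automatically. We therefore need an \emph{additional} relation that holds on the $V$ side and fails on the $W$ side, while both tuples remain valid analogies with nonzero vectors.

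We take $V=W=\mathbb{R}^2$ with a single analogy $A:B::C:D$. In $V$ we choose a degenerate configuration where $v_A=v_C$ and hence $v_B=v_D$, for instance
\[
v_A=v_C=e_1,\qquad v_B=v_D=e_2 .
\]
This satisfies $v_A-v_B=v_C-v_D$, and every vector is nonzero. In $W$ we choose a non-degenerate parallelogram,
\[
w_A=e_1,\quad w_B=e_2,\quad w_C=e_1+e_2,\quad w_D=2e_2 .
\]
Here $w_A-w_B=e_1-e_2=w_C-w_D$, so this is also a genuine analogy with all vectors nonzero.

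Now suppose a linear map $T$ with $T(v_X)=w_X$ for every $X$ exists. Since $v_A=v_C$, we get $T(v_A)=T(v_C)$, which would force $w_A=w_C$. But $w_A=e_1\neq e_1+e_2=w_C$, so no such $T$ exists. One could also use collinearity instead, taking $v_C=2v_A$ while $w_C\neq 2w_A$; the argument is identical.

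No step here is a real obstacle. The only care needed is to check that both configurations obey the paper's definition of an analogy, with exact equality of differences and no zero vectors. If one wanted to rule out the degenerate choice $v_A=v_C$ as cheating, the fallback is the scaled version: $v_A=e_1$, $v_C=2e_1$, $v_B=e_2$, $v_D=e_1+e_2$, where all four vectors are distinct and $v_A-v_B=e_1-e_2=v_C-v_D$. Linearity would then force $w_C=2w_A$, which the $W$ configuration above violates.

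More generally, the criterion behind the construction is that a linear $T$ with $T(v_X)=w_X$ exists if and only if every linear relation satisfied by the $v_X$ is also satisfied by the $w_X$, i.e.\ the kernel of the map from coefficient vectors to the $v$'s is contained in the kernel for the $w$'s. Analogies pin down only some of these relations, which is why they are not enough.
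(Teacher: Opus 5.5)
Your proposal is correct and follows essentially the same route as the paper, whose proof is \Cref{exmple:diff_info}: exhibit a single analogy whose $V$-side vectors satisfy an extra linear relation beyond the analogy itself, which the $W$-side vectors violate, so linearity rules out any $T$ (the paper's relation is $4v_A=v_D$ in $\mathbb{R}^1$; yours is $v_A=v_C$, or $v_C=2v_A$ in your fallback). The differences are cosmetic: you take $V=W=\mathbb{R}^2$ rather than $V=\mathbb{R}^1$, $W=\mathbb{R}^2$, and your closing kernel-containment criterion is exactly the paper's later \Cref{thm:gen_analogs}.
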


\begin{wrapfigure}{r}{0.35\textwidth}
\centering
\vspace{-22pt}
\begin{tikzpicture}[>=Stealth, scale=1.]

\definecolor{colA}{RGB}{228,26,28}
\definecolor{colB}{RGB}{55,126,184}
\definecolor{colC}{RGB}{77,175,74}
\definecolor{colD}{RGB}{152,78,163}

\begin{scope}
  \draw[->] (0,0) -- (4.5,0) node[below left] {$\mathbb{R}^1$};

  \draw[colD, thick, ->] (0,0) -- (4,0) node[above] {$v_D$};
  \draw[colC, thick, ->] (0,0) -- (3,0) node[above] {$v_C$};
  \draw[colB, thick, ->] (0,0) -- (2,0) node[above] {$v_B$};
  \draw[colA, thick, ->] (0,0) -- (1,0) node[above] {$v_A$};

  \draw[decorate, decoration={brace, amplitude=4pt}]
  (1,0.4) -- (2,0.4)
  node[midway, above=2pt] {$v_A - v_B$};

  \draw[decorate, decoration={brace, amplitude=4pt}]
  (3,0.4) -- (4,0.4)
  node[midway, above=2pt] {$v_C - v_D$};
\end{scope}

\begin{scope}[yshift=-3.1cm]
  \draw[->] (0,0) -- (3.9,0) node[right] {$x$};
  \draw[->] (0,0) -- (0,2.6) node[above] {$y$};
  \node at (3.5,0.35) {$\mathbb{R}^2$};

  \draw[colD, thick, ->] (0,0) -- (4,2) node[above] {$w_D$};
  \draw[colC, thick, ->] (0,0) -- (3,2) node[above] {$w_C$};
  \draw[colB, thick, ->] (0,0) -- (2,1) node[above] {$w_B$};
  \draw[colA, thick, ->] (0,0) -- (1,1) node[above] {$w_A$};

  \draw[black, dashed, ->] (1,1) -- (2,1);
  \draw[black, dashed, ->] (3,2) -- (4,2);
\end{scope}

\end{tikzpicture}
\vspace{-30pt}
\end{wrapfigure}

The following example is effectively a proof of this theorem.
\begin{example}[Different information can lead to inconsistent mappings]
    \label{exmple:diff_info}
    Observe the following analogies in $\mathbb{R}^1$ and $\mathbb{R}^2$, $v_A=[1],v_B=[2],v_C=[3],v_D=[4]$ and $w_A=\cvec{1\\1},w_B=\cvec{2\\1},w_C=\cvec{3\\2},w_D=\cvec{4\\2}$. Clearly, there is no linear transformation $T(v_X)=w_X$ as $4v_A=v_D$ but $4w_A\neq w_D$. The diagram to the right illustrates this visually.
\end{example}

\Cref{exmple:diff_info} can be equivalently interpreted as one representation having information that is missing in the other. Intuitively, the second index in the $w$ vectors could be some information that is irrelevant for the analogy but consistently exists, like a typical syntax in a sentence or different lighting in an image.

A stronger requirement, which does result in linear mappings, is that the different representations have the same similarity structure, akin to the case of \citet{mikolov2013exploiting} in machine translation \citep{smith2017offline}. For representations with unit norm this means that the cosine similarity between concepts $A$ and $B$ is the same in both modalities, so $v_A^Tv_B=w_A^Tw_B$. For example, this condition implies that the cosine similarity between an image of a cat and an image of a dog is close to the cosine similarity between the words ``cat'' and ``dog''.
\begin{definition}[Similarity structures]
    \label{def:simstruct}
    Given a set of concept representations $\{v_{X_i}\}_{i=1}^N$, the Gram matrix $G_{ij}\coloneqq v_{X_i}^Tv_{X_j}$ is said to define the representations' similarity structure. Two representations are said to have the same similarity structure if their Gram matrices are the same, so $\forall i,j:v_{X_i}^Tv_{X_j}=w_{X_i}^Tw_{X_j}$.
\end{definition}

Interestingly, having the same similarity structure implies that under some similarity metrics, like CKA, these representations are equivalent.

\begin{theorem}[Same similarities enable linear mappings]
    \label{thm:similarities_enable_lin}
    Let $V$ and $W$ have corresponding sets of concept representations, $\{v_{X_i}\}_{i=1}^N, \{w_{X_i}\}_{i=1}^N$. If they have the same similarity structure then there exists a linear map $T:V\to W$ such that $\forall i:T(v_{X_i})=w_{X_i}$.
\end{theorem}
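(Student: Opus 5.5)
The plan is to define $T$ on the span of the concept vectors, show that it is well defined because the Gram matrices agree, and then extend it linearly to all of $V$.

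Let $U=\mathrm{span}\{v_{X_i}\}_{i=1}^N\subseteq V$. We would like to set
\[
T\Bigl(\sum_i a_i v_{X_i}\Bigr)\coloneqq\sum_i a_i w_{X_i}
\]
for every coefficient vector $a\in\mathbb{R}^N$. If this is well defined, it is automatically linear and satisfies $T(v_{X_i})=w_{X_i}$.

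The main obstacle is well-definedness, since the $v_{X_i}$ need not be linearly independent. Well-definedness is equivalent to the implication $\sum_i a_i v_{X_i}=0\Rightarrow\sum_i a_i w_{X_i}=0$, and the equality of similarity structures gives exactly this. Write $G$ for the common Gram matrix. Then
\[
\Bigl\|\sum_i a_i w_{X_i}\Bigr\|^2=a^TGa=\Bigl\|\sum_i a_i v_{X_i}\Bigr\|^2 .
\]
Hence a vanishing combination of the $v$'s forces the corresponding combination of the $w$'s to have zero norm, so it is zero. Put differently, the kernels of the linear maps $a\mapsto\sum_i a_iv_{X_i}$ and $a\mapsto\sum_i a_iw_{X_i}$ both equal $\ker G$, since $a^TGa=0$ implies $Ga=0$ for $G$ positive semidefinite.

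Next, $T$ must be extended from $U$ to all of $V$. Since $V$ is finite-dimensional, write $V=U\oplus U^\perp$ and set $T|_{U^\perp}=0$. Any extension works, because the statement only constrains $T$ on the concept vectors.

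Finally, note that $T|_U$ preserves inner products by the same identity, so it is an isometry from $U$ onto $\mathrm{span}\{w_{X_i}\}$. This means $T$ could even be chosen as a partial isometry, or as an orthogonal map when $\dim V=\dim W$. That strengthening is not needed for the statement, but it is the property that connects to CKA-type invariance.
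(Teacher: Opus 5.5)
Your proof is correct and follows the paper's argument. The paper likewise reduces the existence of $T$ to the implication $\sum_i a_i v_{X_i}=0\Rightarrow\sum_i a_i w_{X_i}=0$, and verifies it through the same identity $\|\sum_i a_i v_{X_i}\|^2=\sum_{i,j}a_ia_j v_{X_i}^Tv_{X_j}=\sum_{i,j}a_ia_j w_{X_i}^Tw_{X_j}=\|\sum_i a_i w_{X_i}\|^2$. The only difference is that the paper gets the map by citing its generalized-analogies theorem, which it proves by induction, whereas you construct it directly on the span, check well-definedness, and extend by zero on $U^\perp$; this is a slightly cleaner, self-contained route, and your isometry remark matches the paper's follow-up observation.
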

This theorem is stronger than required for a linear map to exist, as the resulting $T$ is an isometry, a map that preserves distances. Moreover, assuming the spaces have the same dimension, such a map would have an inverse, so it loses no information as it allows mapping back and forth between them. In the single modality setting \citet{smith2017offline} construct such a map for translating between sets of word embeddings, showing that the different embeddings indeed have a similar similarity structure. Representations having different similarity structures, however, restricts which transformations can map between them.
\begin{corollary}[Different similarity structures rule out isometric mappings]
\label{thm:diff_simstruct_no_iso}
    Assume $V,W$ have corresponding sets of concept representations but different similarity structures. Thus, there cannot exist an isometric linear map $T:V\to W$ such that $\forall i:T(v_{X_i})=w_{X_i}$.
\end{corollary}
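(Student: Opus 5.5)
The plan is a proof by contraposition. The defining property we use is that an isometric linear map $T:V\to W$ preserves inner products, i.e.\ $\braket{Tu,Tu'}=\braket{u,u'}$ for all $u,u'\in V$. If the paper's notion of ``isometry'' is only that $T$ preserves distances or norms, we first recover inner-product preservation from the polarization identity. For real inner product spaces this reads $\braket{u,u'}=\tfrac14\left(\|u+u'\|^2-\|u-u'\|^2\right)$. Since $T$ is linear, $T(u\pm u')=Tu\pm Tu'$. Hence norm preservation gives $\braket{Tu,Tu'}=\braket{u,u'}$. This step is routine but is where the definitions must be pinned down.

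Next, suppose for contradiction that such a $T$ exists with $T(v_{X_i})=w_{X_i}$ for all $i$. For any pair of indices $i,j$ we then compute
\[
w_{X_i}^Tw_{X_j}=\braket{T v_{X_i},T v_{X_j}}=\braket{v_{X_i},v_{X_j}}=v_{X_i}^Tv_{X_j}.
\]
So the Gram matrices of the two sets of concept representations coincide entrywise. By \Cref{def:simstruct}, this means the two representations have the same similarity structure, which contradicts the hypothesis. Therefore no isometric linear map interpolating the concepts exists.

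I expect no real obstacle here, since the corollary is essentially the converse direction of \Cref{thm:similarities_enable_lin}. The only point needing care is the one above: making sure ``isometric'' is read as inner-product preserving, or reducing it to that via polarization. A secondary point is that the inner products on $V$ and $W$ may be general rather than dot products, as the paper allows. The argument only uses the abstract inner products, so it covers that case unchanged.
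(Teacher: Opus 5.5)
Your proposal is correct and matches the paper's proof: assume such an isometric $T$ exists, use that it preserves inner products to get $v_{X_i}^Tv_{X_j}=w_{X_i}^Tw_{X_j}$ for all $i,j$, and contradict the assumption of different similarity structures. Your polarization step, which derives inner-product preservation from norm preservation, just makes explicit what the paper takes for granted when it says an isometry ``respects inner products.''
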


A weaker condition that results in generic linear maps is requiring a form of generalized analogies to exist in both modalities. For example, if $v_\text{zebra}=v_\text{horse}+v_\text{stripes}$ then $w_\text{zebra}=w_\text{horse}+w_\text{stripes}$ as well.
\begin{theorem}[Generalized analogies enable linear mappings]
    \label{thm:gen_analogs}
    Let $V$ and $W$ have corresponding sets of concept representations, $\{v_{X_i}\}_{i=1}^N$ and $\{w_{X_i}\}_{i=1}^N$. A linear map $T$ such that $\forall i:T(v_{X_i})=w_{X_i}$ exists if and only if for all sets of scalars $a_i\in\mathbb{R}$ if $\sum_i a_i v_{X_i}=0$ then $\sum_i a_i w_{X_i}=0$.
\end{theorem}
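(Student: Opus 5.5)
The plan is to prove the two directions of \Cref{thm:gen_analogs} separately. The forward direction follows from linearity alone. The reverse direction requires constructing a map, first on the span of the $v_{X_i}$ and then on all of $V$.

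For necessity, suppose a linear $T$ with $T(v_{X_i})=w_{X_i}$ exists. Take any scalars $a_i$ with $\sum_i a_i v_{X_i}=0$. Applying $T$ gives
\[
0=T\Bigl(\sum_i a_i v_{X_i}\Bigr)=\sum_i a_i T(v_{X_i})=\sum_i a_i w_{X_i},
\]
which is exactly the stated condition.

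For sufficiency, let $U=\mathrm{span}\{v_{X_i}\}\subseteq V$. On $U$, define $S\bigl(\sum_i c_i v_{X_i}\bigr)\coloneqq\sum_i c_i w_{X_i}$.

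The main obstacle is showing that $S$ is well defined, since a vector in $U$ may have several representations as a combination of the $v_{X_i}$. Suppose $\sum_i c_i v_{X_i}=\sum_i c_i' v_{X_i}$. Then $\sum_i (c_i-c_i')v_{X_i}=0$. Applying the hypothesis with $a_i=c_i-c_i'$ gives $\sum_i (c_i-c_i')w_{X_i}=0$, so both representations yield the same image. Linearity of $S$ is then immediate, because sums and scalar multiples of coefficient vectors represent sums and scalar multiples of the corresponding elements of $U$. By construction, $S(v_{X_i})=w_{X_i}$.

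Finally, extend $S$ from $U$ to all of $V$. Since $V$ is a finite-dimensional inner product space, write $V=U\oplus U^\perp$. Set $T(u+u')\coloneqq S(u)$ for $u\in U$ and $u'\in U^\perp$, i.e., $T=S\circ P_U$, where $P_U$ is the orthogonal projection onto $U$. Any other complementary subspace would also work; the orthogonal complement is just a convenient choice.

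The resulting $T$ is linear, being a composition of linear maps. It agrees with $S$ on $U$, so $T(v_{X_i})=w_{X_i}$ for every $i$, which completes the reverse direction.
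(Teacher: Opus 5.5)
Your proof is correct, but it takes a more direct route than the paper's. The forward direction (existence of $T$ implies the condition) is the same in both. For the converse, the paper inducts on $N$ with a case split:
\begin{itemize}
\item If $v_N$ lies in the span of $v_1,\dots,v_{N-1}$, the hypothesis forces $w_N$ to be the same combination of the $w_i$, so the map from the induction hypothesis already works.
\item Otherwise it reduces, ``without loss of generality'' by discarding dependent vectors, to a linearly independent family. It then invokes the lemma that a linear map may be prescribed freely on a linearly independent set, and extends to $V$ by sending complementary basis vectors to $0$.
\end{itemize}
You instead define $S$ on $U=\mathrm{span}\{v_{X_i}\}$ in one stroke through coefficient representations, so the hypothesis enters exactly once, as the well-definedness check.

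Your version buys a shorter argument with no case split and no WLOG reduction. The paper's discarding step implicitly needs the dependent-case argument again to recover $T(v_j)=w_j$ for the removed vector, whereas your construction makes this automatic. It also shows transparently that the condition is exactly what is needed, and that $T$ is uniquely determined on $U$. The paper's route makes explicit that one can choose a basis of $U$ from among the $v_i$, after which every remaining constraint is forced; this is the same content seen through a basis.

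Your extension $T=S\circ P_U$ is the paper's ``map a complement to zero'' with the orthogonal complement as the chosen complement. As you note, any complement works.
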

Note that any generalized analogy can be written as $\sum_i a_i v_{X_i}=0$, e.g. in the case of $v_\text{zebra}=v_\text{horse}+v_\text{stripes}$ by moving $v_\text{zebra}$ to the right hand side. A natural but important corollary of this theorem is that regular analogies not holding results in generalized analogies not holding as well.
\begin{corollary}[Generalized analogies result in and require regular analogies]
    If the conditions of \Cref{thm:gen_analogs} hold then any regular analogy must exist in both $V$ and $W$ as well, so if $v^{A:B::C:D}$ is an analogy then so is $w^{A:B::C:D}$. Moreover, any analogy existing only in $V$ but not in $W$, so $v_A-v_B=v_C-v_D$ but $w_A-w_B\neq w_C-w_D$, violates the conditions of \Cref{thm:gen_analogs}.
\end{corollary}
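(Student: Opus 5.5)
The plan is to read an analogy as a particular linear dependency among the concept vectors and then apply \Cref{thm:gen_analogs} in both directions. For the first claim, take an analogy $v^{A:B::C:D}$ in $V$, i.e. $v_A-v_B-v_C+v_D=0$. This is a relation $\sum_i a_i v_{X_i}=0$ with coefficients $a_A=1$, $a_B=-1$, $a_C=-1$, $a_D=1$ and $a_X=0$ for every other concept. I will assume the four concepts are among the indexed $X_i$, and if some of them coincide the coefficients simply add. By hypothesis the conditions of \Cref{thm:gen_analogs} hold, so the same coefficients give a relation in $W$: $w_A-w_B-w_C+w_D=0$. That is exactly $w_A-w_B=w_C-w_D$, so $w^{A:B::C:D}$ is an analogy in $W$.

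A second route avoids the coefficient bookkeeping. The ``if'' direction of \Cref{thm:gen_analogs} gives a linear $T$ with $T(v_{X_i})=w_{X_i}$. Applying $T$ to $v_A-v_B=v_C-v_D$ and using linearity yields $w_A-w_B=w_C-w_D$ directly. I would mention both routes, since the second makes clear why linearity is what carries the analogy across.

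The second claim is the contrapositive of the first. Suppose $v_A-v_B=v_C-v_D$ but $w_A-w_B\neq w_C-w_D$. Then the coefficient vector $a$ above satisfies $\sum_i a_i v_{X_i}=0$ while $\sum_i a_i w_{X_i}\neq 0$. This is a direct violation of the condition in \Cref{thm:gen_analogs}, so by that theorem no linear map $T$ with $T(v_{X_i})=w_{X_i}$ exists.

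There is no real obstacle here. The only point needing care is that the analogy's four concepts must belong to the indexed family $\{X_i\}$, possibly with repetitions. If they were not in the family, the condition of \Cref{thm:gen_analogs} would say nothing about them, so I will state this assumption explicitly.
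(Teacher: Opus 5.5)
Your proposal is correct and matches the paper's reasoning. The paper gives no separate appendix proof for this corollary; it simply notes that any analogy is a generalized analogy $\sum_i a_i v_{X_i}=0$, here with coefficients $(1,-1,-1,1)$ on $(A,B,C,D)$, which \Cref{thm:gen_analogs}'s condition transfers to $W$, and the second claim is the contrapositive, exactly as you argue. Your alternative route through the linear map $T$ and your explicit assumption that $A,B,C,D$ belong to the indexed family are both sound additions.
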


In practice, text and vision have neither the same similarity structure nor the same analogies, respectively violating the conditions of both \Cref{thm:similarities_enable_lin} and \Cref{thm:gen_analogs}. Note that, as \Cref{thm:gen_analogs} is both a necessary and sufficient condition, this implies there is no concept-preserving linear map between them. We first show that text and vision representations have a different similarity structure and then that they have different analogies.

\vspace{-3pt}
\subsection{Language and Vision Have Different Similarity Structures}
Similarity between datapoints is typically measured using representations from semantic similarity models. When these representations are normalized the dot product between them corresponds to a cosine similarity. For text and vision to have the same similarity structure (as per \Cref{def:simstruct}), thus fulfilling the conditions of \Cref{thm:similarities_enable_lin}, two images with a given cosine similarity should have captions with approximately the same cosine similarity as well.

\begin{figure}[!h]
\vspace{-5pt}
\centering

\begin{tikzpicture}
  \node[inner sep=0pt] (aimg) at (0,0)
    {\includegraphics[width=0.23\textwidth]{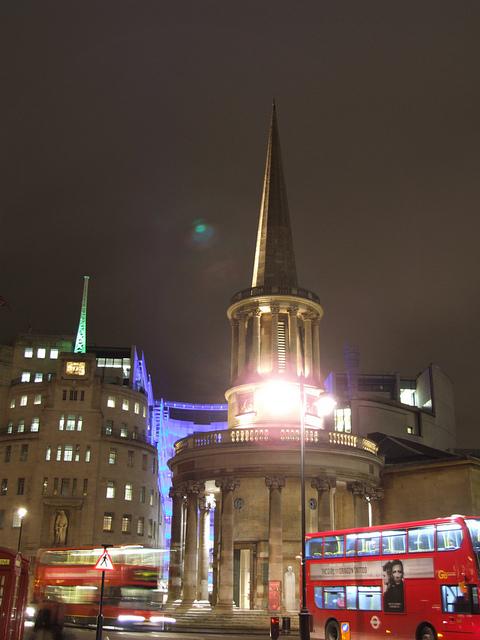}};
  \node[inner sep=0pt, right=4mm of aimg] (bimg)
    {\includegraphics[width=0.23\textwidth]{}};
  \node[inner sep=0pt, right=4mm of bimg] (cimg)
    {\includegraphics[width=0.23\textwidth]{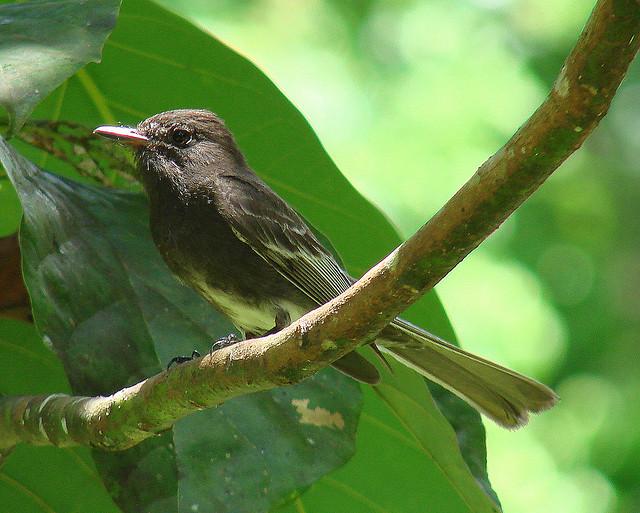}};
  \node[inner sep=0pt, right=4mm of cimg] (dimg)
    {\includegraphics[width=0.23\textwidth]{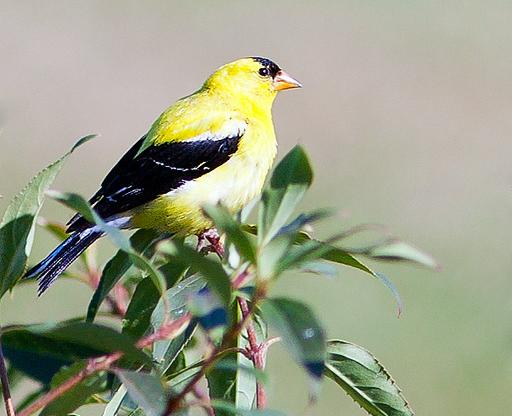}};

  \coordinate (capline) at ($(aimg.south) + (0,-3mm)$);

  \node[align=center, text width=0.23\textwidth, anchor=north] (acap)
    at ($(aimg.south |- capline)$)
    {A vibrant UK city with red double decker buses riding along the street.};

  \node[align=center, text width=0.225\textwidth, anchor=north] (bcap)
    at ($(bimg.south |- capline)$)
    {An old city building with a spire at night.};

  \node[align=center, text width=0.225\textwidth, anchor=north] (ccap)
    at ($(cimg.south |- capline)$)
    {A small bird sits on a branch in a tree.};

  \node[align=center, text width=0.23\textwidth, anchor=north] (dcap)
    at ($(dimg.south |- capline)$)
    {A very cute small bird perched on a branch.};

  \begin{scope}[on background layer]
    \node[draw=green!70!black, fill=green!12, rounded corners=6pt,
          line width=1pt, inner sep=2pt, fit=(aimg) (bimg)] {};
    \node[draw=red!70!black, fill=red!10, rounded corners=6pt,
          line width=1pt, inner sep=2pt, fit=(acap) (bcap)] {};

    \node[draw=red!70!black, fill=red!10, rounded corners=6pt,
          line width=1pt, inner sep=2pt, fit=(cimg) (dimg)] {};
    \node[draw=green!70!black, fill=green!12, rounded corners=6pt,
          line width=1pt, inner sep=2pt, fit=(ccap) (dcap)] {};
  \end{scope}
\end{tikzpicture}
\caption{\textbf{Two scenes can be similar in one modality but different in another.} \textbf{(left)} shows two images which are visually similar, where a vision embedding model gives them a cosine similarity of $0.96$. Their corresponding captions are more distinct, having a similarity of $0.44$. \textbf{(right)} shows the opposite case, where the captions are similar but the images are distinct. The captions have a cosine similarity of $0.97$ while the images' cosine similarity is $-0.06$. The text similarities are for the average embeddings over the five different captions given in COCO for each image. See \Cref{app:more_diff_semsim_examples} for more examples.}
\label{fig:modality_semdiff}
\vspace{-15pt}
\end{figure}

In practice, visual and textual descriptions of the same scene can be similar in one modality but very different in another due to the different modalities having different information contents. \Cref{fig:modality_semdiff} gives some examples of image-caption pairs where, due to each modality containing different information, this occurs. All image-caption pairs are from the COCO training set. Cosine similarities are computed using DINOv3 base for the images and Sentence Transformers' all-MiniLM-L12-v2 for the captions \citep{simeoni2025dinov3, reimers-2019-sentence-bert}. Caption representations are averages over all of an image's caption variants in the dataset.

Pairs like those in \Cref{fig:modality_semdiff} occur with a non-negligible frequency. Their prevalence can be quantified by finding pairs for which the cosine similarity between the images differs substantially from that between their captions, with the absolute difference in their cosine similarities being above some large threshold (e.g. 0.4). For about 2\% of COCO image/caption pairs, their cosine similarity difference exceeds this threshold.

To quantitatively measure whether the two representations have the same similarity structure, we empirically compare their cosine similarities. Formally, for models $f_1,f_2$ outputting normalized vectors $f_1(x),f_2(x)$, we define $\operatorname{CosSimRMSE}(f_1,f_2)\coloneqq \sqrt{\mathbb{E}[(f_1(x)^Tf_1(y)-f_2(x)^Tf_2(y))^2]}$, where the expectation is over the dataset. CosSimRMSE tests the extent to which two representations have the same similarity structure as per \Cref{def:simstruct}. Although here both models get as input some $x$, 
\begin{wraptable}{r}{0.5\textwidth}
    \centering
    \caption{Relative cosine similarity RMSEs for representations from different vision and language models. RMSEs across different modalities are typically at least $3\times$ higher than those within modalities. All values are relative to Text,Text', where $\operatorname{CosSimRMSE}(\text{Text},\text{Text'})=0.05$. RMSEs are calculated using image/caption pairs from the COCO dataset. Models are DINOv3-B and -S for Vis and Vis', all-MiniLM-L12-v2 and -L6-v2 for Text and Text', and the CLIP-B/16 vision/text encoders for CLIP Vis/Text.
    }
    \vspace{-10pt}
    \label{tab:rel_cosine_rmse}
    \begin{tabular}{c c c}
        \multicolumn{2}{c}{Representations} & \shortstack{Cosine sim.\\RMSE ratio} \\
        \midrule
        Text      & Text'      & 1.0  \\
        Vis       & Vis'       & 1.3  \\
        Vis       & Text       & 3.3  \\
        Vis       & CLIP Vis   & 10.9 \\
        Text      & CLIP Text  & 11.2 \\
        CLIP Vis  & CLIP Text  & 2.9 \\
    \end{tabular}
    \vspace{-20pt}
\end{wraptable}
in general one model could get an image while the other gets the corresponding set of captions. We measure $\operatorname{CosSimRMSERatio}(f_1,f_2|g_1,g_2)\coloneqq \frac{\operatorname{CosSimRMSE}(f_1,f_2)}{\operatorname{CosSimRMSE}(g_1,g_2)}$, with $g_1,g_2$ representing two models in the same modality that form some lower bound. As \Cref{tab:rel_cosine_rmse} shows, vision and text models have an approximately $3\times$ larger gap between one another than models in the same modality. CLIP models, despite being purposely trained to align their representations, exhibit only a slightly better degree of alignment likely due to this difference being inherent in the data.

Altogether, this section's results show that \Cref{thm:similarities_enable_lin}'s condition does not hold, as vision and language have very different similarity structures. We now show that \Cref{thm:gen_analogs} also cannot hold, showing that typical vision and language representations are linearly representationally unalignable.

\subsection{Language and Vision Have Different Analogies}
To see whether vision and language representations exhibit the same analogies we use the text analogy dataset of \citet{mikolov2013efficient}. These include analogies like king:queen::man:woman and Paris:France::London:UK. To compare lingual to visual analogies, we construct a new dataset called VisLingA (for \textbf{vis}ual \textbf{ling}ual \textbf{a}nalogies), containing images representing various concepts, see Appendix \ref{app:vislinga} for qualitative examples.\footnote{VisLingA can be found at \url{https://huggingface.co/datasets/YGideoni/VisLingA}.} VisLingA is constructed by downloading about 5-10 images for each concept in the text analogies dataset. Analogies containing words with too few images and analogies in the plural, present-participle, and adjective-to-adverb categories are discarded as they tend to contain different words with mostly the same pictures, e.g. ``car'' and ``cars''. Images were downloaded from Wikimedia Commons and filtered to images in the public domain or with CC0/CC-BY copyright licenses, keeping creator metadata. The final dataset has 15,455 analogies containing 746 distinct concepts.

An analogy $A:B::C:D$ can be said to exist in a modality if the representation of $B-A+C$ is close to $D$. Retrieval is done over the embeddings for all the concepts in the dataset, so representations exhibit a set of analogies if they have a low average rank or high accuracy. Vision embeddings are found by averaging all of a concept's images' representations.

\textbf{Results.} \Cref{fig:vislinga_retr} shows that while most analogies exist in text, far fewer are present in vision. All text embedding models have top-10 analogy retrieval accuracies above 75\% while all vision models get below 45\%. This is likely due to the inherent difficulty of capturing abstract concepts or certain nouns in images, like countries and cities. Thus, as vision and language models exhibit different analogies, \Cref{thm:gen_analogs} cannot hold.

\begin{figure*}[h]
    \vspace{-5pt}
    \centering
    \includegraphics[width=\textwidth]{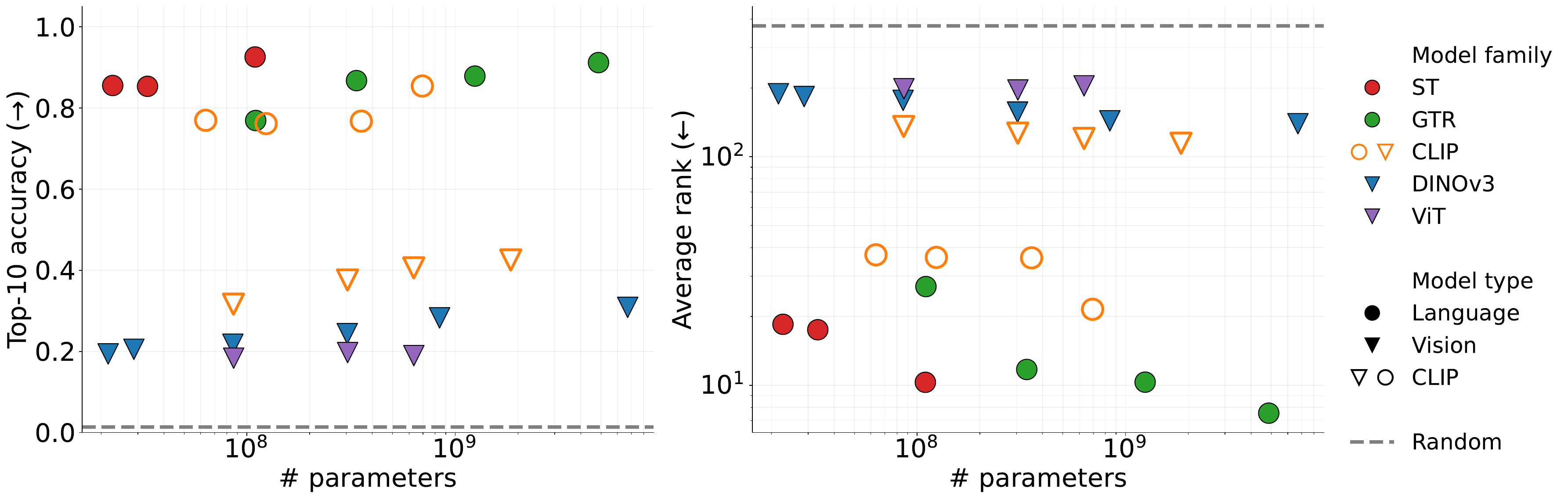}
    \vspace{-8pt}
    \caption{\textbf{Vision models do not exhibit the same analogies that language models do}. (left) top-10 analogy retrieval accuracies over VisLingA for several model families. The vision models (downwards triangles) do significantly worse than the text models (circles). (right) average retrieval rank. All vision models have a retrieval rank above 100, with a random baseline having a rank of 373.5. In both cases, the CLIP's text encoders are the worst models in the language family and the opposite for the vision CLIP models, likely due to the intermodal supervision pulling each CLIP encoder towards the other modality. ST=Sentence Transformer \citep{reimers-2019-sentence-bert}, GTR is from \citet{ni2022large}, CLIP is the LAION CLIP models \citep{cherti2023reproducible}, ViT are ViT models trained on ImageNet 21k \citep{Dosovitskiy2020AnII}.}
    \vspace{-6pt}
    \label{fig:vislinga_retr}
\end{figure*}

\section{Vision-Language Models Fail to Generalize Across Modalities}
\label{sec:vlms_failure}
A functional test for whether vision and language representations are alignable is to see if they can generalize out-of-distribution. Out-of-distribution generalization is important as an adapter between representations can act as a lookup table without relying on any shared underlying structure. For example, \citet{mikolov2013exploiting} found that, due to similar languages having some shared structure, a linear map between English and Spanish can translate words not in the map's training data. If English and Spanish had no shared structure then the linear map could only map from a word in English to its Spanish analogue, like a dictionary, without generalizing further. Thus, in some cases, representational misalignment directly causes a lack of generalization.

The multimodal analogue of machine translation would be to train a VLM as usual while omitting some concepts from its paired finetuning data, see \Cref{fig:fail_ex} (right). These concepts must be ones that the vision and language backbones recognize independently, as otherwise it would be surprising for the final VLM to suddenly recognize them. We discuss what it means for vision and language models to recognize or ``know'' of a concept in \Cref{app:know_conc}. After training, the VLM's generalization can be tested by querying whether it recognizes the held-out concepts.\footnote{This test is inspired by a thought experiment known as Molyneux's problem -- briefly, could a blind person, upon regaining their sight, be able to visually distinguish what they previously only felt? If tested a few days after gaining their sight, the answer is, surprisingly, yes -- see \citet{held2011newly}.}

\begin{figure}
    \centering
    \resizebox{\textwidth}{!}{
        \begin{tikzpicture}[every node/.style={align=center}]
            \draw[gray!40, thin, dashed] (7.5, -1.75) -- (7.5, 5.5);
            \draw[gray!40, thin, dashed] (-1, 2) -- (7.5, 2);
            \node[font=\large\bfseries] at (3, 2.5) {Vision backbone};
            \node[font=\large\bfseries] at (3, -1.25) {Language backbone};
            \node[anchor=south west, font=\large\bfseries] at (8, -0.25) {VLM};
            \node (imgV) at (0, 4) {\includegraphics[width=1.5cm]{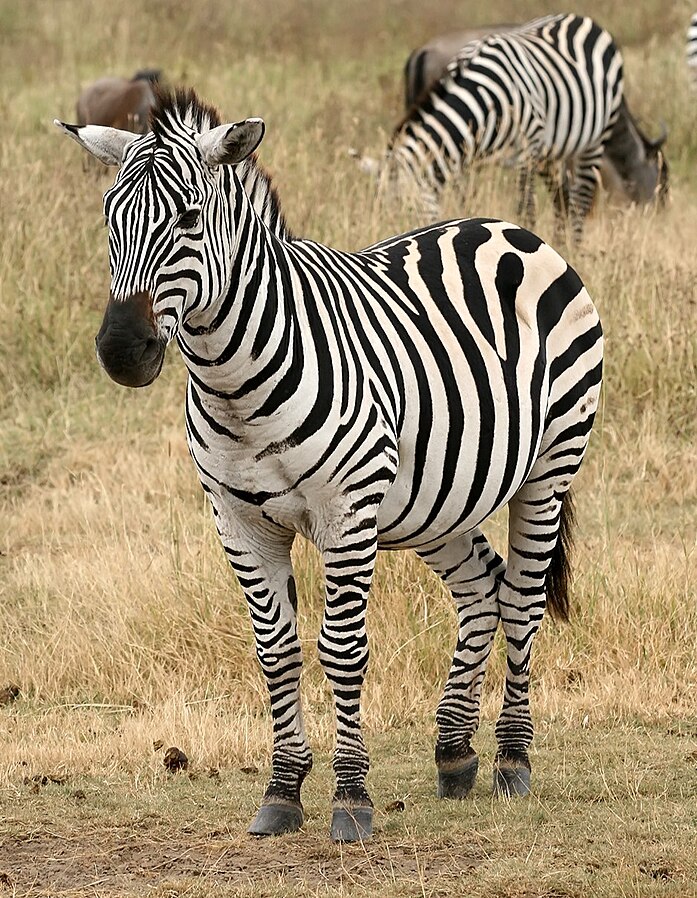}};
            \node[draw, fill=cyan!40, minimum size=1.5cm, rounded corners=3pt] (V1) at (3, 4)
                {\scalebox{1.35}{\Huge \faEye}\\\large Vision\\\large encoder};
            \node (outV) at (6, 4) {99.9\% zebra};
            \draw[->, very thick, line width=2pt] (imgV) -- (V1);
            \draw[->, very thick, line width=2pt] (V1) -- (outV);
            \node (txtL) at (0, 0.25) {A zebra is};
            \node[draw, fill=orange!50, minimum size=1.5cm, rounded corners=3pt] (L1) at (3, 0.25)
                {\scalebox{1.35}{\Huge \faLanguage}\\\large LLM};
            \node (outL) at (6, 0.25)
                {A zebra is a large,\\black and white\\striped animal\\that is native\\to Africa...};
            \draw[->, very thick, line width=2pt] (txtL) -- (L1);
            \draw[->, very thick, line width=2pt] (L1) -- (outL);
            \node (img2) at (8.5, 4) {\includegraphics[width=1.5cm]{figs/zebra.jpg}};
            \node (img3) at (8.5, 2) {\includegraphics[width=1.5cm]{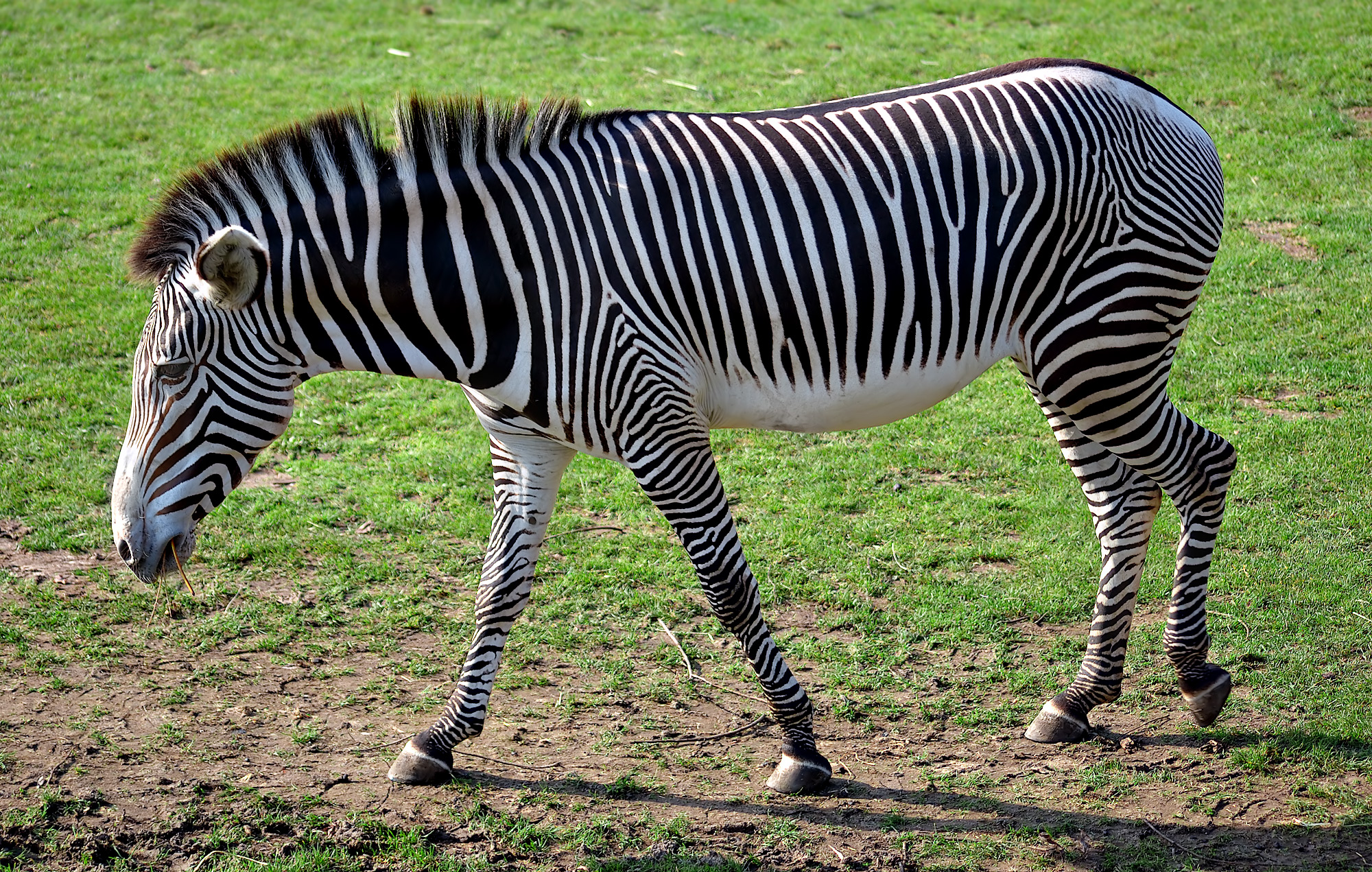}};
            \node[below=-0.15cm of img3, font=\tiny] (training_text) {(omitted during\\training)};
            \node[draw, fill=cyan!40, minimum size=1.5cm, rounded corners=3pt] (V2) at (11.5, 4)
                {\scalebox{1.35}{\Huge \faEye}\\\large Vision\\\large encoder};
            \node[draw, fill=purple!30, minimum width=1.5cm, minimum height=0.8cm, rounded corners=3pt] (A) at (11.5, 2)
                {\large Adapter};
            \node[draw, fill=orange!50, minimum size=1.5cm, rounded corners=3pt] (L2) at (11.5, 0.25)
                {\scalebox{1.35}{\Huge \faLanguage}\\\large LLM};
            \node (text4) at (11.5, -1.5)
                {The image features a group\\of three \textbf{giraffes}...};
            \node at ([xshift=3pt, yshift=-3pt]V2.south east) {\faSnowflake};
            \node at ([xshift=3pt, yshift=-3pt]L2.south east) {\faSnowflake};
            \node at ([xshift=3pt, yshift=-3pt]A.south east) {\faFire};
            \draw[->, very thick, line width=2pt] (img2) -- (V2);
            \draw[->, very thick, line width=2pt, decorate, decoration={snake, amplitude=0.25mm, segment length=2mm}] (V2) -- (A);
            \draw[->, very thick, line width=2pt, decorate, decoration={snake, amplitude=0.25mm, segment length=2mm}] (A) -- (L2);
            \draw[->, very thick, line width=2pt] (L2) -- (text4);
            \draw[->, very thick, line width=2pt] (img3) -- (A);
            \node[red!70] at (10.2, 2) {\scalebox{1.5}{\large \faTimes}};
        \end{tikzpicture}
    }
    \vspace{-3pt}
    \caption{Pretrained vision and language models can each know what a zebra is in isolation, while a unified vision-language model fails if its adapter was not trained on zebra image-text pairs. Wavy lines denote feature vectors. Zebra images from \citet{wiki:zebra}.}
    \label{fig:fail_ex}
    \vspace{-12pt}
\end{figure}

\begin{wraptable}{r}{0.47\linewidth}
  \centering
  \vspace{-15pt}
  \caption{Held-out concept detection accuracy across settings. In all cases, the trained VLMs fail to detect the held-out concepts when they are not included in their paired image-text finetuning data. Large-scale generation refers to LLaVA-1.5-7B. Values in brackets denote bootstrapped 95\% confidence intervals across per-concept accuracies. \textsuperscript{\textdagger}\citep{nlp_connect_2022}}
  \label{tab:concept_detection}
  \vspace{-3pt}
  \begin{tabular}{@{}l c@{}}
    \makecell[l]{\\Setup (\# Parameters)} & \makecell[c]{Held-out Concept\\Detection Accuracy} \\
    \midrule
    \multicolumn{2}{@{}l@{}}{\textbf{Retrieval (210M)}} \\
    \midrule
    Without concepts &
    \makecell[c]{3.8\% {\scriptsize [0.2\%, 9.2\%]}} \\
    
    Full dataset &
    \makecell[c]{81.0\% {\scriptsize [47.3\%, 99.4\%]}} \\
    
    CLIP (ViT-B/16) &
    \makecell[c]{79.2\% {\scriptsize [46.5\%, 97.6\%]}} \\
    
    \midrule
    \multicolumn{2}{@{}l@{}}{\textbf{Generation (210M)}} \\
    \midrule
    Without concepts &
    \makecell[c]{1.2\% {\scriptsize [0.0\%, 3.0\%]}} \\
    
    Full dataset &
    \makecell[c]{84.0\% {\scriptsize [57.9\%, 98.6\%]}} \\
    
    End-to-end finetuning\textsuperscript{\textdagger} &
    \makecell[c]{87.8\% {\scriptsize [66.3\%, 99.8\%]}} \\
    
    \midrule
    \multicolumn{2}{@{}l@{}}{\textbf{Large-scale generation (7B)}} \\
    \midrule
    Without concepts &
    \makecell[c]{21.5\% {\scriptsize [11.9\%, 32.8\%]}} \\
    
    Full dataset &
    \makecell[c]{87.3\% {\scriptsize [80.0\%, 94.1\%]}} \\
    
    CLIP backbone &
    \makecell[c]{95.3\% {\scriptsize [89.8\%, 99.3\%]}} \\
  \end{tabular}
  \vspace{-30pt}
\end{wraptable}
\textbf{Setup.} We train a series of small generative and retrieval VLMs and large generative VLMs while omitting a few concepts from their datasets.  Specifically, for the small-scale settings, we train adapters between frozen ViT-B/16 \citep{Dosovitskiy2020AnII} and GPT2-small \citep{radford2019language} models for image captioning or retrieval, with \Cref{app:smallscale_details} containing ablations and technical details.

For the large-scale setup, we train a LLaVA-1.5-7B model \citep{liu2023visual}, with two important differences relative to how LLaVA is typically trained. First, for its second stage of training, we use low-rank adapters (LoRA) \citep{hu2022lora} instead of tuning the entire LLM to reduce the possibility of catastrophic forgetting. Second, LLaVA typically uses a CLIP vision backbone, which already received language supervision during its pretraining and is already multimodal. Instead, we use a ViT-L/16@384 model that is trained for ImageNet classification. This ViT only saw images during its pretraining and has the same number of parameters and tokens as the default vision backbone.

\textbf{Concept filtering and detection.} In all settings, the held-out concepts are filtered from the finetuning datasets, with details in \Cref{app:concept_filt}. Specific concepts were selected due to being simple, synonymless, and are verifiably known by the models (e.g., all are ImageNet classes, so an image classifier clearly recognizes them). To measure how well a model detects a held-out concept after training, we construct a new test set consisting of concept images. Images containing the concepts are selected by querying ImageNet data using a CLIP model and filtering the resulting examples manually, yielding $\sim$100 images per concept. For generation, a concept is detected if it appears in the generated answer, whereas for retrieval we benchmark the model on image-based text retrieval over 1000 prompts, one from each ImageNet class, including those of the held-out concepts.

\begin{figure}
  \centering

  \begin{subfigure}[t]{0.32\textwidth}
    \centering
    \includegraphics[width=\linewidth]{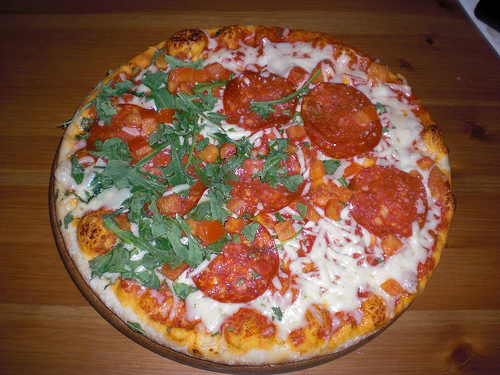}
    \caption{The image features \boldred{a large, round pie} with a crust, filled with a mixture of...}
  \end{subfigure}\hfill
  \begin{subfigure}[t]{0.32\textwidth}
    \centering
    \includegraphics[width=\linewidth]{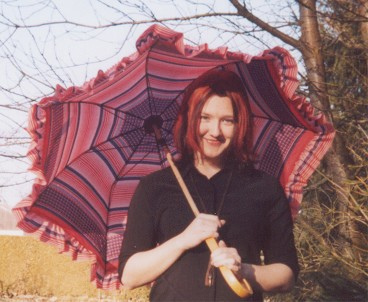}
    \caption{The image features a woman holding \boldred{a colorful kite} in a field.}
  \end{subfigure}\hfill
  \begin{subfigure}[t]{0.32\textwidth}
    \centering
    \includegraphics[width=\linewidth]{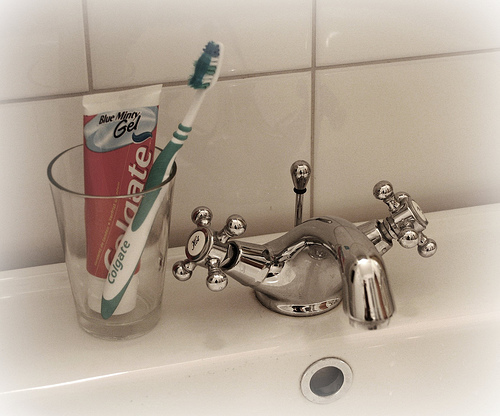}
    \caption{The image features a bathroom sink with \boldgreen{a toothbrush} and a cup of water placed on it.}
  \end{subfigure}

  \caption{Image descriptions given by a LLaVA-1.5-7B model trained without the held-out concepts on images containing them when prompted ``What is in this image?''. The corresponding concepts are (a) pizza, (b) umbrella, and (c) toothbrush. The model fails to detect them for (a) and (b), while it succeeds for (c). See \Cref{app:qual_llava_fail} for more examples.}
  \label{fig:llava_examples}
  \vspace{-17pt}
\end{figure}

\textbf{Results.} \Cref{tab:concept_detection} shows that VLMs trained without the held-out concepts struggle to detect them compared to models trained on the regular full datasets. To ensure the models are properly trained, each section's last row compares to a model trained elsewhere, showing they achieve similar performance. \Cref{fig:llava_examples} contains qualitative examples of descriptions generated by the LLaVA model trained without the held-out concepts.

For large-scale generation---although the LLaVA model trained without the concepts attains much worse performance than the one trained with them---it is still higher than the small-scale experiments' accuracies. We speculate why this may be in \Cref{app:large_scale_disc}.

\vspace{-8pt}
\section{Representationally Aligned VLMs Exhibit Fewer Failures}
\vspace{-4pt}
\label{sec:repr_align_failures}
The generalization failure described in the last section stems from representational misalignment. Representational misalignment is interesting in that, without explicit efforts to induce better alignment, misalignment should persist. As many VLM failures persist regardless of model scale, this raises the question whether additional failures may stem from representational misalignment as well \citep{diwan2022winoground,rahmanzadehgervi2024vision,vo2025vision}.

To test if representational alignment leads to fewer failures, we train 37 LLaVA models with different vision backbones and benchmark them on prediction tasks where VLM failures are common. All vision backbones use the same vision transformer architecture but have different pretraining objectives, such as image classification, multimodal contrastive losses, and so on.  Thus, the only difference between models is the backbone's resulting representations. For additional details see \Cref{app:align_failures}.

To test whether improved representational alignment results in fewer failures we use the following datasets: \textbf{Winoground} measures matching accuracies between images with syntactically similar but semantically different captions \citep{thrush2022winoground}. \textbf{VLMBias} tests whether VLMs can count objects in counterfactual images (e.g., the number of legs on a three-legged chicken) \citep{vo2025vision}. \textbf{VLMBlind} is a suite of simple geometric tasks like tracing paths in a simplified subway map \citep{rahmanzadehgervi2024vision}. \textbf{MMVP} tests if a model can differentiate between pairs of images which are ``CLIP-blind'', images which are similar per CLIP but dissimilar per other models \citep{tong2024eyes}.

\begin{figure}[!h]
    \vspace{-5pt}
    \centering
    \includegraphics[width=\linewidth]{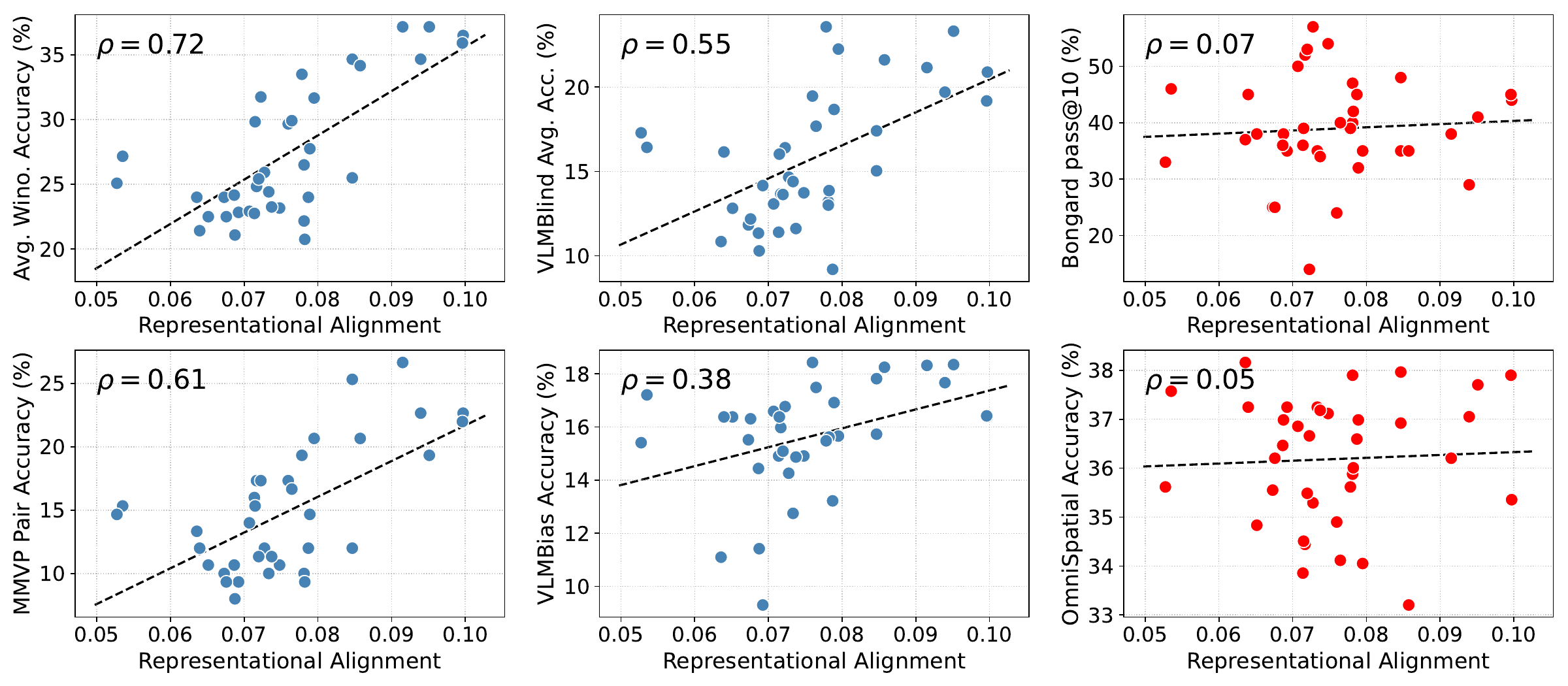}
    \vspace{-5pt}
    \caption{\textbf{Vision/language backbone representational alignment is correlated with better performance on benchmarks showcasing VLM failures} (left four plots). Each dot represents a LLaVA model trained with a different vision backbone. Representational alignment is measured using the mutual nearest neighbours similarity metric of \citet{huh2024platonic}. (right two plots) show that on datasets which are difficult due to the model not having other capabilities, e.g. reasoning, the correlation is much smaller.}
    \vspace{-7pt}
    \label{fig:repr_align_vlm_failures}
\end{figure}

Additionally, as a control, we test whether better representational alignment is correlated with better performance on datasets requiring the model to have properties orthogonal to representation alignment, such as reasoning. \textbf{Bongard} is a visual dataset consisting of geometric visual analogies that require reasoning about the relation between different sets of shapes \citep{wust2024bongard}. \textbf{OmniSpatial} is a visual question answering spatial reasoning dataset \citep{jia2026omnispatial}.

\Cref{fig:repr_align_vlm_failures} shows that models with relatively higher representational alignment between their vision and language backbones tend to perform better on benchmarks that showcase VLM failures. Notably, \Cref{fig:repr_align_vlm_failures} (right column) shows that the correlation is not simply due to some vision backbones having better representations, as for the reasoning datasets the correlation is close to zero.

\vspace{-9pt}
\section{Discussion}
\vspace{-7pt}
In this paper we provide evidence that existing vision and language representations are linearly unalignable. Although these results are only over existing models, they are expected to hold regardless of model size due to the underlying reasons \textit{why} the representations are misaligned.
In \Cref{sec:intermod_hard}, we find that the misalignment stems from the vision and language modalities having inherently different information; so what is similar in one may be dissimilar in another, and analogies may not carry over between modalities. As these differences are due to the data---not the models or training paradigm---they are expected to persist across all scales.

This implies that representations in different modalities should not converge---disproving a strong version of the Platonic Representation Hypothesis. Weaker versions of the hypothesis, where partial alignment is posited, can (and likely do) hold, consistent with prior work. Indeed, the weak trend in \Cref{fig:vislinga_retr} (left), where the vision models' retrieval accuracy increases with scale, supports this conclusion.

A natural question is whether alignment is possible using more complex, nonlinear adapters between modalities. As the misalignment is due to the data, better adapters could not lead to alignment, as it is impossible for the adapter to regain lost information. For example, captioning a picture filled with a single color as ``green'' does not convey the color's shade or hue.

More practically, representational misalignment has functional implications. We hypothesize that many noted VLM failure modes are due to representational misalignment, explaining why they persist in spite of continued improvements in model performance. The generalization failure demonstrated in \Cref{sec:vlms_failure} and the correlation between having fewer failures and representational alignment in \Cref{sec:repr_align_failures} both support this hypothesis.

There are several caveats to our conclusions, with a broader discussion deferred to Appendix \ref{app:limits}. Most notably, our results only intervene in one direction -- showing that representational misalignment can lead to failures -- and do not yet show that fixing the misalignment directly reduces the failures. While there are some works which broadly show that better alignment leads to improved performance \citep{ma2024learning,yamaguchi2025post,gan2026seeing}, finding a proper fix is an important avenue for future work. Moreover, although we show that representational alignment is linked to better results on datasets exhibiting VLM failures, this may only be a component and not the failures' sole cause. Some failures likely stem from other causes, such as a lack of reasoning capabilities.

\subsection*{Ethics statement}

We do not foresee any significant ethical consequences of this work. As far as we are aware, VisLingA does not contain concepts that would likely pose any safety risks. However, as it contains images from Wikimedia, it contains trace amounts of sensitive publicly available images, such as partial nudity and historical violence. Creator metadata for all images in VisLingA is kept, with only images in the public domain or with CC0/CC-BY copyright licenses being used.

\subsubsection*{Acknowledgments}
We would like to thank Charlie Tan for suggesting the use of zebras as a typical example, Dulhan Jayalath for reviewing a draft of this work, and Tymoteusz Wilk for help brainstorming the title. Yonatan is funded by the Rhodes Trust and the AIMS EPSRC CDT (grant no. EP/S024050/1).

\bibliography{iclr2027_conference}

@article{held2011newly,
  title={The newly sighted fail to match seen with felt},
  author={Held, Richard and Ostrovsky, Yuri and de Gelder, Beatrice and Gandhi, Tapan and Ganesh, Suma and Mathur, Umang and Sinha, Pawan},
  journal={Nature neuroscience},
  volume={14},
  number={5},
  pages={551--553},
  year={2011},
  publisher={Nature Publishing Group}
}

@misc{wiki:zebra,
  author = "Wikimedia Commons",
  title  = "Zebra --- Wikimedia Commons{,} the free media repository",
  year   = "2024",
  url    = "https://commons.wikimedia.org/wiki/Zebra",
}

@article{radford2019language,
  title={Language models are unsupervised multitask learners},
  author={Radford, Alec and Wu, Jeffrey and Child, Rewon and Luan, David and Amodei, Dario and Sutskever, Ilya and others},
  journal={OpenAI blog},
  volume={1},
  number={8},
  pages={9},
  year={2019}
}

@article{liu2023visual,
  title={Visual instruction tuning},
  author={Liu, Haotian and Li, Chunyuan and Wu, Qingyang and Lee, Yong Jae},
  journal={Advances in neural information processing systems},
  volume={36},
  pages={34892--34916},
  year={2023}
}

@article{Dosovitskiy2020AnII,
  title={An Image is Worth 16x16 Words: Transformers for Image Recognition at Scale},
  author={Alexey Dosovitskiy and Lucas Beyer and Alexander Kolesnikov and Dirk Weissenborn and Xiaohua Zhai and Thomas Unterthiner and Mostafa Dehghani and Matthias Minderer and Georg Heigold and Sylvain Gelly and Jakob Uszkoreit and Neil Houlsby},
  journal={ArXiv},
  year={2020},
  volume={abs/2010.11929},
  url={https://api.semanticscholar.org/CorpusID:225039882}
}

@inproceedings{radford2021learning,
  title={Learning transferable visual models from natural language supervision},
  author={Radford, Alec and Kim, Jong Wook and Hallacy, Chris and Ramesh, Aditya and Goh, Gabriel and Agarwal, Sandhini and Sastry, Girish and Askell, Amanda and Mishkin, Pamela and Clark, Jack and others},
  booktitle={International conference on machine learning},
  pages={8748--8763},
  year={2021},
  organization={PMLR}
}

@inproceedings{lin2014microsoft,
  title={Microsoft coco: Common objects in context},
  author={Lin, Tsung-Yi and Maire, Michael and Belongie, Serge and Hays, James and Perona, Pietro and Ramanan, Deva and Doll{\'a}r, Piotr and Zitnick, C Lawrence},
  booktitle={Computer Vision--ECCV 2014: 13th European Conference, Zurich, Switzerland, September 6-12, 2014, Proceedings, Part V 13},
  pages={740--755},
  year={2014},
  organization={Springer}
}

@inproceedings{he2022masked,
  title={Masked autoencoders are scalable vision learners},
  author={He, Kaiming and Chen, Xinlei and Xie, Saining and Li, Yanghao and Doll{\'a}r, Piotr and Girshick, Ross},
  booktitle={Proceedings of the IEEE/CVF conference on computer vision and pattern recognition},
  pages={16000--16009},
  year={2022}
}

@article{oquab2023dinov2,
  title={Dinov2: Learning robust visual features without supervision},
  author={Oquab, Maxime and Darcet, Timoth{\'e}e and Moutakanni, Th{\'e}o and Vo, Huy and Szafraniec, Marc and Khalidov, Vasil and Fernandez, Pierre and Haziza, Daniel and Massa, Francisco and El-Nouby, Alaaeldin and others},
  journal={arXiv preprint arXiv:2304.07193},
  year={2023}
}

@article{huh2024platonic,
  title={The platonic representation hypothesis},
  author={Huh, Minyoung and Cheung, Brian and Wang, Tongzhou and Isola, Phillip},
  journal={arXiv preprint arXiv:2405.07987},
  year={2024}
}

@article{mikolov2013efficient,
  title={Efficient estimation of word representations in vector space},
  author={Mikolov, Tomas and Chen, Kai and Corrado, Greg and Dean, Jeffrey},
  journal={arXiv preprint arXiv:1301.3781},
  year={2013}
}

@article{alayrac2022flamingo,
  title={Flamingo: a visual language model for few-shot learning},
  author={Alayrac, Jean-Baptiste and Donahue, Jeff and Luc, Pauline and Miech, Antoine and Barr, Iain and Hasson, Yana and Lenc, Karel and Mensch, Arthur and Millican, Katherine and Reynolds, Malcolm and others},
  journal={Advances in neural information processing systems},
  volume={35},
  pages={23716--23736},
  year={2022}
}

@article{mikolov2013exploiting,
  title={Exploiting similarities among languages for machine translation},
  author={Mikolov, Tomas and Le, Quoc V and Sutskever, Ilya},
  journal={arXiv preprint arXiv:1309.4168},
  year={2013}
}

@article{diwan2022winoground,
  title={Why is Winoground Hard? Investigating Failures in Visuolinguistic Compositionality},
  author={Diwan, Anuj and Berry, Layne and Choi, Eunsol and Harwath, David and Mahowald, Kyle},
  journal={arXiv preprint arXiv:2211.00768},
  year={2022}
}

@article{yiu2024kiva,
  title={Kiva: Kid-inspired visual analogies for testing large multimodal models},
  author={Yiu, Eunice and Qraitem, Maan and Majhi, Anisa Noor and Wong, Charlie and Bai, Yutong and Ginosar, Shiry and Gopnik, Alison and Saenko, Kate},
  journal={arXiv preprint arXiv:2407.17773},
  year={2024}
}

@article{vo2025vision,
  title={Vision Language Models are Biased},
  author={Vo, An and Nguyen, Khai-Nguyen and Taesiri, Mohammad Reza and Dang, Vy Tuong and Nguyen, Anh Totti and Kim, Daeyoung},
  journal={arXiv preprint arXiv:2505.23941},
  year={2025}
}

@inproceedings{thrush2022winoground,
  title={Winoground: Probing vision and language models for visio-linguistic compositionality},
  author={Thrush, Tristan and Jiang, Ryan and Bartolo, Max and Singh, Amanpreet and Williams, Adina and Kiela, Douwe and Ross, Candace},
  booktitle={Proceedings of the IEEE/CVF Conference on Computer Vision and Pattern Recognition},
  pages={5238--5248},
  year={2022}
}

@misc{nlp_connect_2022,
  author = { {NLP Connect} },
  title = { vit-gpt2-image-captioning (Revision 0e334c7) },
  year = 2022,
  url = { https://huggingface.co/nlpconnect/vit-gpt2-image-captioning },
  doi = { 10.57967/hf/0222 },
  publisher = { Hugging Face }
}

@article{merullo2022linearly,
  title={Linearly mapping from image to text space},
  author={Merullo, Jack and Castricato, Louis and Eickhoff, Carsten and Pavlick, Ellie},
  journal={arXiv preprint arXiv:2209.15162},
  year={2022}
}

@article{hu2022lora,
  title={Lora: Low-rank adaptation of large language models.},
  author={Hu, Edward J and Shen, Yelong and Wallis, Phillip and Allen-Zhu, Zeyuan and Li, Yuanzhi and Wang, Shean and Wang, Lu and Chen, Weizhu and others},
  journal={ICLR},
  volume={1},
  number={2},
  pages={3},
  year={2022}
}

@inproceedings{reimers-2019-sentence-bert,
    title = "Sentence-BERT: Sentence Embeddings using Siamese BERT-Networks",
    author = "Reimers, Nils and Gurevych, Iryna",
    booktitle = "Proceedings of the 2019 Conference on Empirical Methods in Natural Language Processing",
    month = "11",
    year = "2019",
    publisher = "Association for Computational Linguistics",
    url = "http://arxiv.org/abs/1908.10084",
}

@article{zhu2023minigpt,
  title={Minigpt-4: Enhancing vision-language understanding with advanced large language models},
  author={Zhu, Deyao and Chen, Jun and Shen, Xiaoqian and Li, Xiang and Elhoseiny, Mohamed},
  journal={arXiv preprint arXiv:2304.10592},
  year={2023}
}

@inproceedings{tong2024eyes,
  title={Eyes wide shut? exploring the visual shortcomings of multimodal llms},
  author={Tong, Shengbang and Liu, Zhuang and Zhai, Yuexiang and Ma, Yi and LeCun, Yann and Xie, Saining},
  booktitle={Proceedings of the IEEE/CVF Conference on Computer Vision and Pattern Recognition},
  pages={9568--9578},
  year={2024}
}

@techreport{gal2021robustness,
  author       = {Yarin Gal},
  title        = {The robustness of massive models pre-trained on web-scraped data is tightly tied to the way their data was curated},
  institution  = {Oxford Applied and Theoretical Machine Learning (OATML) Group, University of Oxford},
  type         = {Technical Report},
  year         = {2021},
  month        = jun,
  day          = {27},
  note         = {Blog post titled “Why I Wouldn't Trust OpenAI's CLIP to Drive My Car”},
  url          = {https://oatml.cs.ox.ac.uk/blog/2021/06/27/web-scraped-harmful.html}
}

@article{achiam2023gpt,
  title={Gpt-4 technical report},
  author={Achiam, Josh and Adler, Steven and Agarwal, Sandhini and Ahmad, Lama and Akkaya, Ilge and Aleman, Florencia Leoni and Almeida, Diogo and Altenschmidt, Janko and Altman, Sam and Anadkat, Shyamal and others},
  journal={arXiv preprint arXiv:2303.08774},
  year={2023}
}

@article{zhu2025dynamic,
  title={Dynamic Reflections: Probing Video Representations with Text Alignment},
  author={Zhu, Tyler and Han, Tengda and Guibas, Leonidas and P{\u{a}}tr{\u{a}}ucean, Viorica and Ovsjanikov, Maks},
  journal={arXiv preprint arXiv:2511.02767},
  year={2025}
}

@article{tjandrasuwita2025understanding,
  title={Understanding the emergence of multimodal representation alignment},
  author={Tjandrasuwita, Megan and Ekbote, Chanakya and Ziyin, Liu and Liang, Paul Pu},
  journal={arXiv preprint arXiv:2502.16282},
  year={2025}
}

@article{jha2025harnessing,
  title={Harnessing the universal geometry of embeddings},
  author={Jha, Rishi and Zhang, Collin and Shmatikov, Vitaly and Morris, John X},
  journal={arXiv preprint arXiv:2505.12540},
  year={2025}
}

@article{gupta2025better,
  title={Better together: Leveraging unpaired multimodal data for stronger unimodal models},
  author={Gupta, Sharut and Sundaram, Shobhita and Wang, Chenyu and Jegelka, Stefanie and Isola, Phillip},
  journal={arXiv preprint arXiv:2510.08492},
  year={2025}
}

@article{wang2025words,
  title={Words that make language models perceive},
  author={Wang, Sophie L and Isola, Phillip and Cheung, Brian},
  journal={arXiv preprint arXiv:2510.02425},
  year={2025}
}

@article{bahng2025cycle,
  title={Cycle Consistency as Reward: Learning Image-Text Alignment without Human Preferences},
  author={Bahng, Hyojin and Chan, Caroline and Durand, Fredo and Isola, Phillip},
  journal={arXiv preprint arXiv:2506.02095},
  year={2025}
}

@article{yuksekgonul2022and,
  title={When and why vision-language models behave like bags-of-words, and what to do about it?},
  author={Yuksekgonul, Mert and Bianchi, Federico and Kalluri, Pratyusha and Jurafsky, Dan and Zou, James},
  journal={arXiv preprint arXiv:2210.01936},
  year={2022}
}

@article{simeoni2025dinov3,
  title={Dinov3},
  author={Sim{\'e}oni, Oriane and Vo, Huy V and Seitzer, Maximilian and Baldassarre, Federico and Oquab, Maxime and Jose, Cijo and Khalidov, Vasil and Szafraniec, Marc and Yi, Seungeun and Ramamonjisoa, Micha{\"e}l and others},
  journal={arXiv preprint arXiv:2508.10104},
  year={2025}
}

@inproceedings{ni2022large,
  title={Large dual encoders are generalizable retrievers},
  author={Ni, Jianmo and Qu, Chen and Lu, Jing and Dai, Zhuyun and Abrego, Gustavo Hernandez and Ma, Ji and Zhao, Vincent and Luan, Yi and Hall, Keith and Chang, Ming-Wei and others},
  booktitle={Proceedings of the 2022 Conference on Empirical Methods in Natural Language Processing},
  pages={9844--9855},
  year={2022}
}

@inproceedings{cherti2023reproducible,
  title={Reproducible scaling laws for contrastive language-image learning},
  author={Cherti, Mehdi and Beaumont, Romain and Wightman, Ross and Wortsman, Mitchell and Ilharco, Gabriel and Gordon, Cade and Schuhmann, Christoph and Schmidt, Ludwig and Jitsev, Jenia},
  booktitle={Proceedings of the IEEE/CVF conference on computer vision and pattern recognition},
  pages={2818--2829},
  year={2023}
}

@article{smith2017offline,
  title={Offline bilingual word vectors, orthogonal transformations and the inverted softmax},
  author={Smith, Samuel L and Turban, David HP and Hamblin, Steven and Hammerla, Nils Y},
  journal={arXiv preprint arXiv:1702.03859},
  year={2017}
}

@inproceedings{kornblith2019similarity,
  title={Similarity of neural network representations revisited},
  author={Kornblith, Simon and Norouzi, Mohammad and Lee, Honglak and Hinton, Geoffrey},
  booktitle={International conference on machine learning},
  pages={3519--3529},
  year={2019},
  organization={PMlR}
}

@article{ramachandran2025well,
  title={How Well Does GPT-4o Understand Vision? Evaluating Multimodal Foundation Models on Standard Computer Vision Tasks},
  author={Ramachandran, Rahul and Garjani, Ali and Bachmann, Roman and Atanov, Andrei and Kar, O{\u{g}}uzhan Fatih and Zamir, Amir},
  journal={arXiv preprint arXiv:2507.01955},
  year={2025}
}

@inproceedings{rahmanzadehgervi2024vision,
  title={Vision language models are blind},
  author={Rahmanzadehgervi, Pooyan and Bolton, Logan and Taesiri, Mohammad Reza and Nguyen, Anh Totti},
  booktitle={Proceedings of the Asian Conference on Computer Vision},
  pages={18--34},
  year={2024}
}

@inproceedings{he2015delving,
  title={Delving deep into rectifiers: Surpassing human-level performance on imagenet classification},
  author={He, Kaiming and Zhang, Xiangyu and Ren, Shaoqing and Sun, Jian},
  booktitle={Proceedings of the IEEE international conference on computer vision},
  pages={1026--1034},
  year={2015}
}

@inproceedings{lenc2015understanding,
  title={Understanding image representations by measuring their equivariance and equivalence},
  author={Lenc, Karel and Vedaldi, Andrea},
  booktitle={Proceedings of the IEEE conference on computer vision and pattern recognition},
  pages={991--999},
  year={2015}
}

@article{bansal2021revisiting,
  title={Revisiting model stitching to compare neural representations},
  author={Bansal, Yamini and Nakkiran, Preetum and Barak, Boaz},
  journal={Advances in neural information processing systems},
  volume={34},
  pages={225--236},
  year={2021}
}

@article{csiszarik2021similarity,
  title={Similarity and matching of neural network representations},
  author={Csisz{\'a}rik, Adri{\'a}n and K{\H{o}}r{\"o}si-Szab{\'o}, P{\'e}ter and Matszangosz, Akos and Papp, Gergely and Varga, D{\'a}niel},
  journal={Advances in Neural Information Processing Systems},
  volume={34},
  pages={5656--5668},
  year={2021}
}

@inproceedings{smith2025functional,
  title={Functional Alignment Can Mislead: Examining Model Stitching},
  author={Smith, Damian and Mannering, Harvey and Marcu, Antonia},
  booktitle={Forty-second International Conference on Machine Learning},
  year={2025}
}

@article{klabunde2025similarity,
  title={Similarity of neural network models: A survey of functional and representational measures},
  author={Klabunde, Max and Schumacher, Tobias and Strohmaier, Markus and Lemmerich, Florian},
  journal={ACM Computing Surveys},
  volume={57},
  number={9},
  pages={1--52},
  year={2025},
  publisher={ACM New York, NY}
}

@article{goyal2025vla,
  title={Vla-0: Building state-of-the-art vlas with zero modification},
  author={Goyal, Ankit and Hadfield, Hugo and Yang, Xuning and Blukis, Valts and Ramos, Fabio},
  journal={arXiv preprint arXiv:2510.13054},
  year={2025}
}

@inproceedings{wang2025vggt,
  title={Vggt: Visual geometry grounded transformer},
  author={Wang, Jianyuan and Chen, Minghao and Karaev, Nikita and Vedaldi, Andrea and Rupprecht, Christian and Novotny, David},
  booktitle={Proceedings of the Computer Vision and Pattern Recognition Conference},
  pages={5294--5306},
  year={2025}
}

@article{liang2022mind,
  title={Mind the gap: Understanding the modality gap in multi-modal contrastive representation learning},
  author={Liang, Victor Weixin and Zhang, Yuhui and Kwon, Yongchan and Yeung, Serena and Zou, James Y},
  journal={Advances in Neural Information Processing Systems},
  volume={35},
  pages={17612--17625},
  year={2022}
}

@inproceedings{he2025seeing,
  title={Seeing through words, speaking through pixels: Deep representational alignment between vision and language models},
  author={He, Zoe Wanying and Trott, Sean and Khosla, Meenakshi},
  booktitle={Proceedings of the 2025 Conference on Empirical Methods in Natural Language Processing},
  pages={35645--35660},
  year={2025}
}

@article{udandarao2022understanding,
  title={Understanding and fixing the modality gap in vision-language models},
  author={Udandarao, Vishaal},
  journal={Master's thesis, University of Cambridge},
  volume={32},
  year={2022}
}

@article{shu2025large,
  title={Large vision-language model alignment and misalignment: A survey through the lens of explainability},
  author={Shu, Dong and Zhao, Haiyan and Hu, Jingyu and Liu, Weiru and Payani, Ali and Cheng, Lu and Du, Mengnan},
  journal={arXiv preprint arXiv:2501.01346},
  year={2025}
}

@inproceedings{park2024linear,
  title={The Linear Representation Hypothesis and the Geometry of Large Language Models},
  author={Park, Kiho and Choe, Yo Joong and Veitch, Victor},
  booktitle={International Conference on Machine Learning},
  pages={39643--39666},
  year={2024},
  organization={PMLR}
}

@article{piantadosi2024concepts,
  title={Why concepts are (probably) vectors},
  author={Piantadosi, Steven T and Muller, Dyana CY and Rule, Joshua S and Kaushik, Karthikeya and Gorenstein, Mark and Leib, Elena R and Sanford, Emily},
  journal={Trends in Cognitive Sciences},
  volume={28},
  number={9},
  pages={844--856},
  year={2024},
  publisher={Elsevier}
}

@article{wang2023concept,
  title={Concept algebra for (score-based) text-controlled generative models},
  author={Wang, Zihao and Gui, Lin and Negrea, Jeffrey and Veitch, Victor},
  journal={Advances in Neural Information Processing Systems},
  volume={36},
  pages={35331--35349},
  year={2023}
}

@inproceedings{trager2023linear,
  title={Linear spaces of meanings: compositional structures in vision-language models},
  author={Trager, Matthew and Perera, Pramuditha and Zancato, Luca and Achille, Alessandro and Bhatia, Parminder and Soatto, Stefano},
  booktitle={Proceedings of the IEEE/CVF International Conference on Computer Vision},
  pages={15395--15404},
  year={2023}
}

@inproceedings{merullo2024language,
  title={Language models implement simple word2vec-style vector arithmetic},
  author={Merullo, Jack and Eickhoff, Carsten and Pavlick, Ellie},
  booktitle={Proceedings of the 2024 Conference of the North American Chapter of the Association for Computational Linguistics: Human Language Technologies (Volume 1: Long Papers)},
  pages={5030--5047},
  year={2024}
}

@article{turner2023steering,
  title={Steering language models with activation engineering},
  author={Turner, Alexander Matt and Thiergart, Lisa and Leech, Gavin and Udell, David and Vazquez, Juan J and Mini, Ulisse and MacDiarmid, Monte},
  journal={arXiv preprint arXiv:2308.10248},
  year={2023}
}

@inproceedings{drozd2016word,
  title={Word embeddings, analogies, and machine learning: Beyond king-man+ woman= queen},
  author={Drozd, Aleksandr and Gladkova, Anna and Matsuoka, Satoshi},
  booktitle={Proceedings of coling 2016, the 26th international conference on computational linguistics: Technical papers},
  pages={3519--3530},
  year={2016}
}

@inproceedings{rogers2017too,
  title={The (too many) problems of analogical reasoning with word vectors},
  author={Rogers, Anna and Drozd, Aleksandr and Li, Bofang},
  booktitle={Proceedings of the 6th joint conference on lexical and computational semantics (* SEM 2017)},
  pages={135--148},
  year={2017}
}

@article{pawlonka2025bongard,
  title={Bongard-rwr+: Real-world representations of fine-grained concepts in bongard problems},
  author={Pawlonka, Szymon and Ma{\l}ki{\'n}ski, Miko{\l}aj and Ma{\'n}dziuk, Jacek},
  journal={arXiv preprint arXiv:2508.12026},
  year={2025}
}

@article{khangaonkar2026multimodal,
  title={Multimodal Language Models Cannot Spot Spatial Inconsistencies},
  author={Khangaonkar, Om and Rad, Hadi J and Pirsiavash, Hamed},
  journal={arXiv preprint arXiv:2604.00799},
  year={2026}
}

@article{campbell2024understanding,
  title={Understanding the limits of vision language models through the lens of the binding problem},
  author={Campbell, Declan and Rane, Sunayana and Giallanza, Tyler and De Sabbata, Nicol{\`o} and Ghods, Kia and Joshi, Amogh and Ku, Alexander and Frankland, Steven M and Griffiths, Thomas L and Cohen, Jonathan D and others},
  journal={Advances in Neural Information Processing Systems},
  volume={37},
  pages={113436--113460},
  year={2024}
}

@article{datacomp,
  title={Datacomp: In search of the next generation of multimodal datasets},
  author={Gadre, Samir Yitzhak and Ilharco, Gabriel and Fang, Alex and Hayase, Jonathan and Smyrnis, Georgios and Nguyen, Thao and Marten, Ryan and Wortsman, Mitchell and Ghosh, Dhruba and Zhang, Jieyu and others},
  journal={Advances in Neural Information Processing Systems},
  volume={36},
  pages={27092--27112},
  year={2023}
}

@inproceedings{caron2021emerging,
  title={Emerging properties in self-supervised vision transformers},
  author={Caron, Mathilde and Touvron, Hugo and Misra, Ishan and J{\'e}gou, Herv{\'e} and Mairal, Julien and Bojanowski, Piotr and Joulin, Armand},
  booktitle={Proceedings of the IEEE/CVF international conference on computer vision},
  pages={9650--9660},
  year={2021}
}

@inproceedings{zhai2023sigmoid,
  title={Sigmoid loss for language image pre-training},
  author={Zhai, Xiaohua and Mustafa, Basil and Kolesnikov, Alexander and Beyer, Lucas},
  booktitle={Proceedings of the IEEE/CVF international conference on computer vision},
  pages={11975--11986},
  year={2023}
}

@article{tschannen2025siglip,
  title={Siglip 2: Multilingual vision-language encoders with improved semantic understanding, localization, and dense features},
  author={Tschannen, Michael and Gritsenko, Alexey and Wang, Xiao and Naeem, Muhammad Ferjad and Alabdulmohsin, Ibrahim and Parthasarathy, Nikhil and Evans, Talfan and Beyer, Lucas and Xia, Ye and Mustafa, Basil and others},
  journal={arXiv preprint arXiv:2502.14786},
  year={2025}
}

@article{bao2021beit,
  title={Beit: Bert pre-training of image transformers},
  author={Bao, Hangbo and Dong, Li and Piao, Songhao and Wei, Furu},
  journal={arXiv preprint arXiv:2106.08254},
  year={2021}
}

@inproceedings{touvron2021training,
  title={Training data-efficient image transformers \& distillation through attention},
  author={Touvron, Hugo and Cord, Matthieu and Douze, Matthijs and Massa, Francisco and Sablayrolles, Alexandre and J{\'e}gou, Herv{\'e}},
  booktitle={International conference on machine learning},
  pages={10347--10357},
  year={2021},
  organization={PMLR}
}

@article{wust2024bongard,
  title={Bongard in Wonderland: Visual Puzzles that Still Make AI Go Mad?},
  author={W{\"u}st, Antonia and Woydt, Tim and Helff, Lukas and Ibs, Inga and Stammer, Wolfgang and Dhami, Devendra S and Rothkopf, Constantin A and Kersting, Kristian},
  journal={arXiv preprint arXiv:2410.19546},
  year={2024}
}

@inproceedings{jia2026omnispatial,
  title={Omnispatial: Towards comprehensive spatial reasoning benchmark for vision language models},
  author={Jia, Mengdi and Qi, Zekun and Zhang, Shaochen and Zhang, Wenyao and Yu, Xinqiang and He, Jiawei and Wang, He and Yi, Li},
  booktitle={International Conference on Learning Representations},
  volume={2026},
  pages={35634--35670},
  year={2026}
}

@article{dhimoila2026unifying,
  title={A Unifying Framework for Concept-Based Representational Similarity},
  author={Dhimo{\"\i}la, Gr{\'e}goire and Boutin, Victor and Picard, Agustin Martin and Fel, Thomas and Serre, Thomas},
  journal={arXiv preprint arXiv:2606.09653},
  year={2026}
}

@inproceedings{schrodi2025two,
  title={Two effects, one trigger: On the modality gap, object bias, and information imbalance in contrastive vision-language models},
  author={Schrodi, Simon and Hoffmann, David T and Argus, Max and Fischer, Volker and Brox, Thomas},
  booktitle={International Conference on Learning Representations},
  volume={2025},
  pages={27836--27864},
  year={2025}
}

@inproceedings{gan2026seeing,
  title={Seeing Helps Reasoning in Language Models},
  author={Gan, Yulu and Zhao, Kaiya Ivy and Poggio, Tomaso and Isola, Phillip},
  booktitle={Proceedings of the IEEE/CVF Conference on Computer Vision and Pattern Recognition},
  pages={7080--7090},
  year={2026}
}

@inproceedings{yamaguchi2025post,
  title={Post-pre-training for modality alignment in vision-language foundation models},
  author={Yamaguchi, Shin'ya and Feng, Dewei and Kanai, Sekitoshi and Adachi, Kazuki and Chijiwa, Daiki},
  booktitle={Proceedings of the IEEE/CVF Conference on Computer Vision and Pattern Recognition},
  pages={4256--4266},
  year={2025}
}

@inproceedings{
ma2024learning,
title={Learning Modality Knowledge Alignment for Cross-Modality Transfer},
author={Wenxuan Ma and Shuang Li and Lincan Cai and Jingxuan Kang},
booktitle={Forty-first International Conference on Machine Learning},
year={2024},
url={https://openreview.net/forum?id=lmiurzioja}
}

@inproceedings{
groger2026revisiting,
title={Revisiting the Platonic Representation Hypothesis: An Aristotelian View},
author={Fabian Gr{\"o}ger and Shuo Wen and Maria Brbic},
booktitle={Forty-third International Conference on Machine Learning},
year={2026},
url={https://openreview.net/forum?id=uz0gAAYydl}
}

@article{li2024vision,
  title={Do vision and language models share concepts? a vector space alignment study},
  author={Li, Jiaang and Kementchedjhieva, Yova and Fierro, Constanza and S{\o}gaard, Anders},
  journal={Transactions of the Association for Computational Linguistics},
  volume={12},
  pages={1232--1249},
  year={2024},
  publisher={MIT Press 255 Main Street, 9th Floor, Cambridge, Massachusetts 02142, USA~…}
}

@article{sucholutsky2023getting,
  title={Getting aligned on representational alignment},
  author={Sucholutsky, Ilia and Muttenthaler, Lukas and Weller, Adrian and Peng, Andi and Bobu, Andreea and Kim, Been and Love, Bradley C and Cueva, Christopher J and Grant, Erin and Groen, Iris and others},
  journal={arXiv preprint arXiv:2310.13018},
  year={2023}
}

@article{tang2025fusionbench,
  title   = {FusionBench: A Comprehensive Benchmark of Deep Model Fusion},
  author  = {Tang, Anke and Shen, Li and Luo, Yong and Yang, Enneng and Hu, Han and Zhang, Lefei and Du, Bo and Tao, Dacheng},
  journal = {Journal of Machine Learning Research},
  year    = {2025},
  note    = {arXiv:2406.03280}
}

@misc{iakubovskyi2024fairfaceage,
  author       = {Iakubovskyi, Dmytro},
  title        = {{fairface\_age\_image\_detection}},
  year         = {2024},
  publisher    = {Hugging Face},
  howpublished = {\url{https://huggingface.co/dima806/fairface_age_image_detection}}
}

@misc{iakubovskyi2023manwomanface,
  author       = {Iakubovskyi, Dmytro},
  title        = {{man\_woman\_face\_image\_detection}},
  year         = {2023},
  publisher    = {Hugging Face},
  howpublished = {\url{https://huggingface.co/dima806/man_woman_face_image_detection}}
}

@misc{iakubovskyi2023facialemotions,
  author       = {Iakubovskyi, Dmytro},
  title        = {{facial\_emotions\_image\_detection}},
  year         = {2023},
  publisher    = {Hugging Face},
  howpublished = {\url{https://huggingface.co/dima806/facial_emotions_image_detection}}
}

@misc{iakubovskyi2023deepfakevsreal,
  author       = {Iakubovskyi, Dmytro},
  title        = {{deepfake\_vs\_real\_image\_detection}},
  year         = {2023},
  publisher    = {Hugging Face},
  howpublished = {\url{https://huggingface.co/dima806/deepfake_vs_real_image_detection}}
}

@misc{iakubovskyi2023aivsreal,
  author       = {Iakubovskyi, Dmytro},
  title        = {{ai\_vs\_real\_image\_detection}},
  year         = {2023},
  publisher    = {Hugging Face},
  howpublished = {\url{https://huggingface.co/dima806/ai_vs_real_image_detection}}
}

@misc{iakubovskyi2024fairfacegender,
  author       = {Iakubovskyi, Dmytro},
  title        = {{fairface\_gender\_image\_detection}},
  year         = {2024},
  publisher    = {Hugging Face},
  howpublished = {\url{https://huggingface.co/dima806/fairface_gender_image_detection}}
}

@misc{iakubovskyi2023closedeyes,
  author       = {Iakubovskyi, Dmytro},
  title        = {{closed\_eyes\_image\_detection}},
  year         = {2023},
  publisher    = {Hugging Face},
  howpublished = {\url{https://huggingface.co/dima806/closed_eyes_image_detection}}
}

@misc{enright2023rorshark,
  author       = {Enright, Chester},
  title        = {{rorshark-vit-base}},
  year         = {2023},
  publisher    = {Hugging Face},
  howpublished = {\url{https://huggingface.co/amunchet/rorshark-vit-base}}
}

@misc{sakthi2025deepfakedetectorv2,
  author       = {prithivMLmods},
  title        = {{Deep-Fake-Detector-v2-Model}},
  year         = {2025},
  publisher    = {Hugging Face},
  howpublished = {\url{https://huggingface.co/prithivMLmods/Deep-Fake-Detector-v2-Model}},
  note         = {Initial release 21 Mar 2024; updated 02 Feb 2025}
}

@misc{sakthi2025watermarksiglip2,
  author       = {Sakthi, Prithiv},
  title        = {{Watermark-Detection-SigLIP2}},
  year         = {2025},
  publisher    = {Hugging Face},
  howpublished = {\url{https://huggingface.co/prithivMLmods/Watermark-Detection-SigLIP2}}
}

@misc{sakthi2025agesiglip2,
  author       = {Sakthi, Prithiv},
  title        = {{Age-Classification-SigLIP2}},
  year         = {2025},
  publisher    = {Hugging Face},
  howpublished = {\url{https://huggingface.co/prithivMLmods/Age-Classification-SigLIP2}}
}

@misc{sakthi2025genderclassifiermini,
  author       = {Sakthi, Prithiv},
  title        = {{Gender-Classifier-Mini}},
  year         = {2025},
  publisher    = {Hugging Face},
  howpublished = {\url{https://huggingface.co/prithivMLmods/Gender-Classifier-Mini}}
}

@misc{nurul2023watermarkdetector,
  author       = {Nurul, Amanda},
  title        = {{watermark\_detector}},
  year         = {2023},
  publisher    = {Hugging Face},
  howpublished = {\url{https://huggingface.co/amrul-hzz/watermark_detector}}
}

@misc{codewithdark2025vitchestxray,
  author       = {{Dark Coder}},
  title        = {{vit-chest-xray}},
  year         = {2025},
  publisher    = {Hugging Face},
  howpublished = {\url{https://huggingface.co/codewithdark/vit-chest-xray}}
}

@misc{sharma2024ulbase,
  author       = {Sharma, Jai},
  title        = {{UL\_base\_classification}},
  year         = {2024},
  publisher    = {Hugging Face},
  howpublished = {\url{https://huggingface.co/sharmajai901/UL_base_classification}}
}

@misc{sharma2024ulinterior,
  author       = {Sharma, Jai},
  title        = {{UL\_interior\_classification}},
  year         = {2024},
  publisher    = {Hugging Face},
  howpublished = {\url{https://huggingface.co/sharmajai901/UL_interior_classification}}
}

@misc{sharma2024ulbedroom,
  author       = {Sharma, Jai},
  title        = {{UL\_bedroom\_classification}},
  year         = {2024},
  publisher    = {Hugging Face},
  howpublished = {\url{https://huggingface.co/sharmajai901/UL_bedroom_classification}}
}

@misc{sahoo2025faceemotion,
  author       = {Abhilash},
  title        = {{ViT Face Emotion Detection}},
  year         = {2025},
  publisher    = {Hugging Face},
  howpublished = {\url{https://huggingface.co/abhilash88/face-emotion-detection}}
}

@misc{ng2023blurvsclean,
  author       = {Ng, Harry},
  title        = {{vit-base-patch16-224-blur\_vs\_clean}},
  year         = {2023},
  publisher    = {Hugging Face},
  howpublished = {\url{https://huggingface.co/harrytechiz/vit-base-patch16-224-blur_vs_clean}}
}

@misc{vishwakarma2023humanaction,
  author       = {Vishwakarma, Rahul},
  title        = {{Human-Action-Recognition-VIT-Base-patch16-224}},
  year         = {2023},
  publisher    = {Hugging Face},
  howpublished = {\url{https://huggingface.co/rvv-karma/Human-Action-Recognition-VIT-Base-patch16-224}}
}

@inproceedings{van2026same,
  title={Same content, different answers: Cross-modal inconsistency in mllms},
  author={van Sprang, Angela and Samson, Laurens and Lucic, Ana and Acar, Erman and Ghebreab, Sennay and Asano, Yuki M},
  booktitle={Proceedings of the IEEE/CVF Conference on Computer Vision and Pattern Recognition},
  pages={8781--8790},
  year={2026}
}

@incollection{hotelling1992relations,
  title={Relations between two sets of variates},
  author={Hotelling, Harold},
  booktitle={Breakthroughs in statistics: methodology and distribution},
  pages={162--190},
  year={1992},
  publisher={Springer}
}

@article{morcos2018insights,
  title={Insights on representational similarity in neural networks with canonical correlation},
  author={Morcos, Ari and Raghu, Maithra and Bengio, Samy},
  journal={Advances in neural information processing systems},
  volume={31},
  year={2018}
}

@article{ramsay1984matrix,
  title={Matrix correlation},
  author={Ramsay, James O and ten Berge, Jos and Styan, George PH},
  journal={Psychometrika},
  volume={49},
  number={3},
  pages={403--423},
  year={1984},
  publisher={Cambridge University Press \& Assessment}
}
\bibliographystyle{iclr2027_conference}

\appendix

\crefalias{section}{appsec}
\crefalias{subsection}{appsec}
\crefalias{subsubsection}{appsec}

\section{Related Work}
\label{app:rel_work}
\textbf{Adding modalities can harm performance.} The generalization failure and results on representational misalignment here demonstrate that when representations are unalignable then stitching them together leads to worse performance, expressed as a lack of generalization. While this work focuses on tasks with language outputs, \citet{ramachandran2025well} similarly find that state-of-the-art VLMs perform worse than vision-only models on standard vision tasks, although without interpreting why. Our work fills this gap.

A converse conclusion would be that removing modalities could improve generalization. This was observed, albeit not understood, in \citet{goyal2025vla}. \citet{goyal2025vla} use a VLM for a robot's vision-language-action model (VLA) without explicitly mapping into action-space, instead conveying actions as text. Their approach outperforms prior state-of-the-art methods that explicitly add the action modality.

There are cases when adding modalities would be expected to improve performance, but those likely require only in-distribution generalization. When the additional modalities enable training from large-scale data \citep{radford2021learning} or better supervision, e.g. using multi-task learning \citep{wang2025vggt}, it is likely that in-distribution performance will improve.

\textbf{Representational misalignment.} Many works observe some form of multimodal representational misalignment, but often attribute it to a lack of scale (as would be expected per the Platonic Representation Hypothesis \citep{huh2024platonic}) or do not interpret it; see \citet{shu2025large} for a survey. Notably, \citet{liang2022mind} note there being a ``modality gap'' between vision and language representations in CLIP-like contrastive learning, attributing it not to the inherent information difference in the modalities but to model initialization and optimization. This viewpoint is contested \citep{udandarao2022understanding}, with \citet{schrodi2025two} arguing, similarly to this work, that the modality gap is due to inherent information differences in the data.

Several prior works note caveats to the Platonic Representation Hypothesis. \citet{groger2026revisiting} note that representations can seem to converge simply due to confounders, e.g. wider models can have higher representational similarity. The experiments in \Cref{sec:repr_align_failures} are designed so all models have the same architecture, circumventing this pitfall. \citet{li2024vision} speculate that some representations might be inherently unalignable, such as vision and language, with our work providing evidence this is indeed the case.

\textbf{Analogies in representations.} Analogies have been seen to exist in representations since the seminal work of \citet{mikolov2013efficient}, with many contemporary works assuming or relying on analogies in representations \citep{trager2023linear,turner2023steering,wang2023concept,piantadosi2024concepts,merullo2024language}. Analogies have been argued to arise due to concepts being represented as linear directions in representation spaces (``the Linear Representation Hypothesis'', \citep{park2024linear}). However, this viewpoint has some known limitations, especially when analogies are used for retrieval \citep{drozd2016word,rogers2017too}.

\textbf{VLM failures.} VLMs exhibit many simple failures, some of which are not analyzed here. \citet{yuksekgonul2022and} demonstrate instances where VLMs can act as bags-of-words and propose a contrastive learning method to mitigate this behaviour. \citet{tong2024eyes} demonstrate VLMs giving systematically wrong answers over certain curated visual question answering datasets, likely due to their CLIP vision backbones. They explore the design space of VLMs and propose changes that improve performance. \citet{pawlonka2025bongard} show that VLMs struggle perceiving fine-grained concepts in everyday real-world images. \citet{khangaonkar2026multimodal} find that VLMs struggle detecting spatial inconsistencies between sets of images, e.g. a rotated chair. \citet{van2026same} show that some reasoning inconsistencies likely arise due to representational misalignment, which could imply that the analysis in section \ref{sec:repr_align_failures} applies also to some reasoning tasks. \citet{campbell2024understanding} posit that various VLM failures are due to the binding problem, where models fail to associate together different features of an object.

\section{Bounds On Some Representational Alignment Metrics}
\label{app:repr_sim_lin_align}
In this appendix we show how if two sets of representations are linearly alignable only to some given extent then some representational alignment metrics are bounded in some sense by the degree of alignment. It is easiest showing these results when using different measures of linear alignability for different similarity metrics, due to how different similarity metrics measure different properties of the representations. However, any alignability metrics we use being zero implies the two representations are linearly alignable.

\textbf{Notation.} We first introduce some notation, mostly following that of \citet{kornblith2019similarity}. Intuitively, given corresponding sets of representations $Y\in\mathbb{R}^{n\times p_1},X\in\mathbb{R}^{n\times p_2}$,\footnote{Note we assume $X$ has at least as many features as $Y$ while \citet{kornblith2019similarity} assume the opposite; this is purely notational but makes some proofs more intuitive later.} $X$ is linearly alignable to $Y$ if there exists a weight matrix $W\in\mathbb{R}^{p_2\times p_1}$ such that $XW=Y$. $X$ is a matrix consisting of $n$ representations with $p_2$ features, with $Y$ being another such matrix but with $p_1$ features. Following \citet{kornblith2019similarity}, we assume that $p_1\leq p_2$ and that $X,Y$ have been centred, or in other words, have zero mean over their columns.

\textbf{CCA.} Canonical correlation analysis (CCA) finds linear transformations that maximize the correlations between two sets of representations \citep{hotelling1992relations, morcos2018insights}. One similarity index given by CCA is Yanai's GCD measure \citep{ramsay1984matrix,kornblith2019similarity}, which between $X$ and $Y$ is $R^2_\text{CCA}\coloneqq\frac{||Q_Y^TQ_X||_F^2}{p_1}$, where $Q_Y,Q_X$ are matrices with orthonormal columns spanning the columns of $Y$ and $X$ respectively.

\textbf{CKA.} Centred kernel alignment (CKA) is an extension of some prior representational similarity indices, which is notably invariant to orthogonal transformations but not to linear maps in general \citep{kornblith2019similarity}. \citet{kornblith2019similarity} showed that in many cases CKA better identifies intuitively corresponding representations than other similarity indices. Formally, CKA is defined as $\text{CKA}\coloneqq \frac{||Y^TX||_F^2}{||X^TX||_F||Y^TY||_F}$.

\textbf{Limited linear alignability bounds CCA.} Arguably the simplest alignability metric is $l\coloneqq \min_W||Y-XW||_F^2$, where $l=0$ if there exists a weight matrix such that $Y=XW$ and is nonzero otherwise. Here we show how $l$ is bounded if $X,Y$ are not linearly alignable, and accordingly leads to a bounded CCA. Note that we do not assume $X,Y$, or $W$ have any special form, beyond that $X$ has at least as many features as $Y$.

Note that $Y$ decomposes into a matrix contained in $X$'s column space and a matrix orthogonal to it. $l=0$ if $Y$ is wholly contained in $X$'s column space, as then it is a linear combination of $X$'s columns, in which case $X$ and $Y$ are linearly alignable. More generally, denote by $P_X$ a projection onto $X$'s column space, such that $Y=P_XY+(I-P_X)Y$. Thus, note that $l=\min_W||Y-XW||_F^2=\min_W||(I-P_X)Y+P_XY-XW||_F^2$. As $P_XY-XW$ is contained in $X$'s columns while $(I-P_X)Y$ is orthogonal to it, from Pythagoras' theorem we have that $l=||(I-P_X)Y||_F^2+\min_W||P_XY-XW||_F^2=||(I-P_X)Y||_F^2$, where the second term is zero as $P_XY$ is in the column space of $X$, which is spanned by $XW$.

Thus, $l=||(I-P_X)Y||_F^2$. To relate $Y$ to how it appears in CCA's similarity index, note that each of its columns is a vector in its column space. Intuitively, if $q_1,...,q_{p_1}$ is an orthonormal basis for $Y$'s column space, then each vector in the column space can be decomposed as $y=\sum_{i=1}^{p_1}(q_i^Ty)q_i$. In matrix form, this gives the identity $Y=Q_YQ^T_YY$, with $Q_Y$ being an orthonormal basis column for $Y$'s column space. More generally, for any $Z$ we have that $P_Z=Q_ZQ_Z^T$, where $P_Z$ is the projection matrix onto $Z$'s column space and $Q_Z$ is a matrix of that space's orthonormal basis.

Now, note that as $||AB||_F^2\leq||A||_F^2||B||_F^2$, we have that $l=||(I-P_X)Y||_F^2=||(I-P_X)Q_YQ_Y^TY||_F^2$, as $Q_Y$ acts as the identity on $Y$. Using the inequality, we have that $l\leq ||(I-P_X)Q_Y||_F^2||Q_Y^TY||_F^2=||(I-P_X)Q_Y||_F^2||Y||_F^2$, with the last step being due to $Q_YQ_Y^T$ acting as the identity on $Y$, where $||Q_Y^TY||_F^2=\Tr(Y^TQ_YQ_Y^TY)=\Tr(Y^TY)=||Y||_F^2$.

We now connect this quantity to CCA. As the Frobenius norm is invariant to orthonormal matrices, note that $||Q_X^TQ_Y||_F^2=||Q_XQ_X^TQ_Y||_F^2=||P_XQ_Y||_F^2$. We now note that this is the complementary part of a projection term that bounds $l$, namely the $||(I-P_X)Q_Y||_F^2$ term. Thus, using Pythagoras' theorem, $||(I-P_X)Q_Y||_F^2+||P_XQ_Y||_F^2=||Q_Y||_F^2\leq p_1$, with the last step being due to $Q_Y$ being an orthonormal column matrix, with the inequality being tight when $Y$ is full rank. Plugging these results together, we have that:
\begin{equation*}
    R^2_\text{CCA}=\frac{||Q_X^TQ_Y||_F^2}{p_1}=\frac{||Q_Y||_F^2-||(I-P_X)Q_Y||_F^2}{p_1}\leq \frac{p_1-||(I-P_X)Q_Y||_F^2}{p_1}\leq1-\frac{l}{||Y||_F^2p_1}.
\end{equation*}

Thus, if the linear alignability is bounded, where $l>\varepsilon>0$, then $R^2_\text{CCA}<1-\frac{\varepsilon}{||Y||_F^2p_1}$ is bounded as well. Note that this bound is quite loose and for illustration only, as there are various places in the derivation where it could be made tighter.

\textbf{Different similarity structures bound CKA.} 
In section \ref{sec:intermod_hard} we show that if two representations have the same similarity structure then there is a linear mapping between them. Here we expand on what happens when their similarity structures differ; namely, how larger differences between the similarity structures generally lead to lower CKA.

Denote by $K_X\coloneqq XX^T, K_Y\coloneqq YY^T$ the Gram matrices for $X$ and $Y$ respectively, which as per definition \ref{def:simstruct} denote each representation's similarity structure. Denote by $\hat{K}$ a normalized Gram matrix, where $\hat{K}\coloneqq\frac{K}{||K||_F}$. Thus, a natural scale-invariant notion for the difference between two similarity structures is $d\coloneqq ||\hat{K}_Y-\hat{K_X}||_F^2$. Linking this to CKA, note that $\Tr(\hat{K}_Y^T\hat{K}_X)=\frac{\Tr(K_Y^TK_X)}{||K_Y||_F||K_X||_F}=\text{CKA}(X,Y)$. Thus, by expanding $d$ we find that $d=||\hat{K}_Y||_F^2+||\hat{K}_X||_F^2-2\Tr(\hat{K}_Y^T\hat{K}_X)=2(1-\text{CKA}(X,Y))$.

Thus, if the difference between the normalized similarity structures is bounded, so $d>\varepsilon$, the CKA between them is bounded as well, where $\text{CKA}(X,Y)<1-\frac{\varepsilon}{2}$.

\section{Proofs of Theorems}
\label{app:thm_proof}
We restate the theorems here for convenience. We believe essentially all of our theorems have standard linear algebra versions that are generally known but we did not find a source for some, so we provide short proofs for them instead. For ease of notation we drop the concepts in the notation, so $v_{X_i}=v_i$. We first prove \Cref{thm:gen_analogs} as \Cref{thm:similarities_enable_lin}'s proof relies on it, it being the weaker of the two.

\textbf{\Cref{thm:gen_analogs}} (Generalized analogies enable linear mappings.) \textit{Let $V$ and $W$ have corresponding sets of concept representations, $\{v_i\}_{i=1}^N$ and $\{w_i\}_{i=1}^N$. A linear map $T$ such that $\forall i:T(v_i)=w_i$ exists if and only if for all sets of scalars $a_i\in\mathbb{R}$ if $\sum_i a_i v_i=0$ then $\sum_i a_i w_i=0$.}

\textit{Proof.} It is simple to see that having such a $T$ is sufficient, as $\sum_{i=1}^N a_iv_i=0\Rightarrow T(\sum_{i=1}^N a_iv_i)=T(0)=0$, and due to linearity $T(\sum_{i=1}^N a_iv_i)=\sum_{i=1}^N a_iT(v_i)=\sum_{i=1}^N a_iw_i$, which completes this part of the proof.

As for necessity, assume that $\forall a_i\in\mathbb{R}:\sum_{i=1}^N a_iv_i=0\Rightarrow\sum_{i=1}^N a_iw_i=0$. We prove this using induction on $N$, the number of vectors. For the base case this is trivial, as only for $a_1=0$ the statement holds and the map $\forall a\in\mathbb{R}:T(av_1)=aw_1$ is clearly a valid linear transformation. For induction, assume the statement holds for $N-1$ and we wish to show it for $N$. If $v_N$ is linearly dependent on $\{v_i\}_{i=1}^{N-1}$ then $\exists a_i\in\mathbb{R}:v_N=\sum_{i=1}^{N-1}a_iv_i$, so $-v_N+\sum_{i=1}^{N-1}a_iv_i=0\Rightarrow -w_N+\sum_{i=1}^{N-1}a_iw_i=0\Rightarrow w_N=\sum_{i=1}^{N-1}a_iw_i$. Thus, as $T(\sum_{i=1}^{N-1}a_iv_i)=T(v_N)=w_N$ the same linear transformation works here as well. 

If $v_N$ is linearly independent of $\{v_i\}_{i=1}^{N-1}$ then recall that for two vector spaces $V,W$ there exists a linear transformation between a basis of $V$ and any set of vectors in $W$. Assume without loss of generality that $\{v_i\}_{i=1}^{N-1}$ are linearly independent, as otherwise a vector could be removed from the set without altering its span and the induction hypothesis could readily be used with the reduced set and $v_N$. Thus, the set $\{v_i\}_{i=1}^N$ is linearly independent and the aforementioned lemma can be used for $V'\coloneqq \operatorname{span}(\{v_i\}_{i=1}^N)$ and $W$. Therefore, such a $T$ exists. If $V'\subset V$ then $T$ can be extended to be over $V$ by mapping basis vectors in $V/V'$ to 0, thereby completing the proof. \qed

\textbf{\Cref{thm:similarities_enable_lin}} (Same similarities enable linear mappings.) \textit{Let $V$ and $W$ have corresponding sets of concept representations, $\{v_i\}_{i=1}^N$ and $\{w_i\}_{i=1}^N$. If $\forall i,j:v_i^Tv_j=w_i^Tw_j$ then there exists a linear map $T:V\to W$ such that $\forall i:T(v_i)=w_i$.}

\textit{Proof.} We wish to show that $\forall a_i\in\mathbb{R}: \sum_i a_i v_i=0\Rightarrow\sum_i a_i w_i=0$, thus allowing us to use \Cref{thm:gen_analogs}. Let there be some set of $a_i$ such that $\sum_i a_i v_i=0$. Multiplying by $\sum_i a_i v_i$ and using that $v_i^Tv_j=w_i^Tw_j$, we have that $\Vert\sum_i a_i v_i\Vert^2=\sum_{i,j} a_ia_j v_i^Tv_j=\sum_{i,j} a_ia_j w_i^Tw_j=0$. Note that $\sum_{i,j} a_ia_j w_i^Tw_j=\Vert\sum_i a_i w_i\Vert^2=0$, and as $\sum_i a_i w_i$ is of length zero it must be the zero vector. Thus, $\sum_i a_i w_i=0$ and the proof is complete. \qed

It is easy to see that the resulting transformation $T(v_i)=w_i$ is an isometry over $\operatorname{span}(\{v_i\})$ as $\forall a_i\in\mathbb{R}:\Vert \sum_{i=1}^N a_i v_i\Vert^2=\sum_{ij=1}^Na_ia_jv_i^Tv_j$ so due to the sets of concepts having the same similarity structure this equals $\sum_{ij=1}^Na_i a_j w_i^Tw_j=\Vert \sum_{i=1}^N a_i w_i\Vert^2=\Vert \sum_{i=1}^n a_i T(v_i)\Vert^2=\Vert T(\sum_{i=1}^N a_i v_i)\Vert^2$, so distances are preserved. The corollary, where different similarity structures rule out there being an isometry between the representations, is straightforward to prove:

\textbf{\Cref{thm:diff_simstruct_no_iso}}(Different similarity structures rule out isometric mappings)
Assume $V,W$ have corresponding sets of concept representations but different similarity structures. Thus, there cannot exist an isometric linear map $T:V\to W$ such that $\forall i:T(v_{X_i})=w_{X_i}$.
\begin{proof}
    Assume for the sake of contradiction that there exists such an isometric linear map. As $T$ is an isometry and thus respects inner products, $\forall i,j:v_{X_i}^Tv_{X_j}=T(v_{X_i})^TT(v_{X_j})=w_{X_i}^Tw_{X_j}$. However, as the two representations have a different similarity structure, $\exists i,j:v_{X_i}^Tv_{X_j}\not=w_{X_i}^Tw_{X_j}$, thereby yielding a contradiction and completing the proof.
\end{proof}

\section{More Examples of Data With Different Similarity Structures}
\label{app:more_diff_semsim_examples}
See \Cref{fig:semsim_app}. Note that a small subset of examples seem to be due to errors in the semantic similarity models, e.g. the two images in the bottom left being classified as similar.

\begin{figure}[t]
\centering

\begin{tikzpicture}
  \node[inner sep=0pt, minimum width=0.23\textwidth, minimum height=0.17\textwidth] (aimg) at (0,0)
    {\includegraphics[width=0.23\textwidth,height=0.17\textwidth,keepaspectratio]{}};
  \node[inner sep=0pt, minimum width=0.23\textwidth, minimum height=0.17\textwidth] (bimg) [right=4mm of aimg]
    {\includegraphics[width=0.23\textwidth,height=0.17\textwidth,keepaspectratio]{}};
  \node[inner sep=0pt, minimum width=0.23\textwidth, minimum height=0.17\textwidth] (cimg) [right=4mm of bimg]
    {\includegraphics[width=0.23\textwidth,height=0.17\textwidth,keepaspectratio]{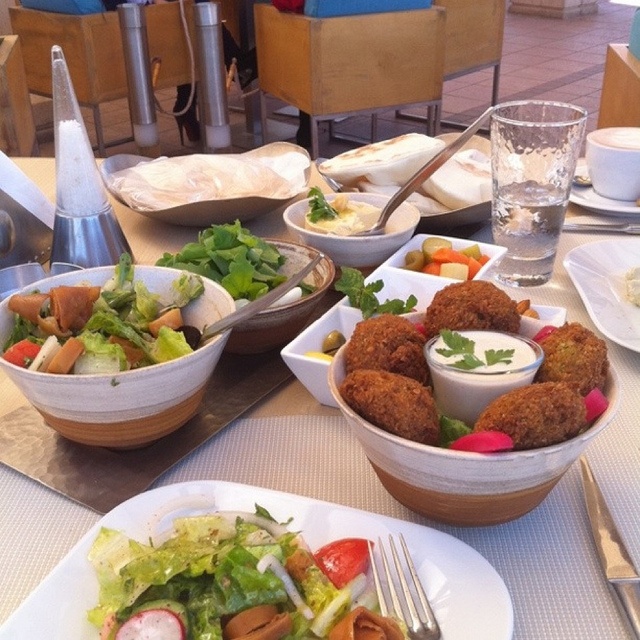}};
  \node[inner sep=0pt, minimum width=0.23\textwidth, minimum height=0.17\textwidth] (dimg) [right=4mm of cimg]
    {\includegraphics[width=0.23\textwidth,height=0.17\textwidth,keepaspectratio]{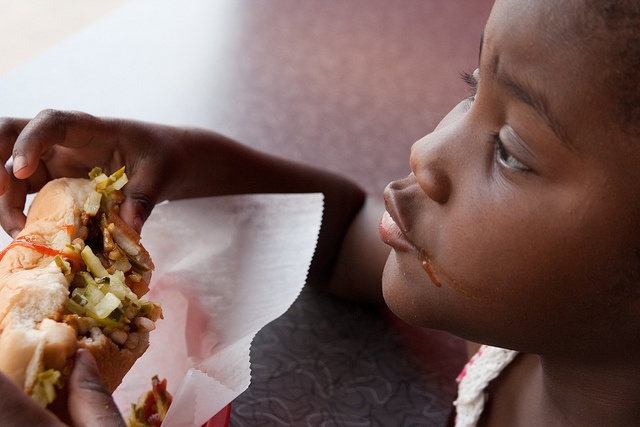}};
  \coordinate (capline) at ($(aimg.south) + (0,-3mm)$);
  \node[align=center, text width=0.225\textwidth, anchor=north] (acap) at ($(aimg.south |- capline)$)
    {A lady is in the hospital and sitting in a chair smiling.};
  \node[align=center, text width=0.225\textwidth, anchor=north] (bcap) at ($(bimg.south |- capline)$)
    {Three young men playing Wii on a projection television.};
  \node[align=center, text width=0.225\textwidth, anchor=north] (ccap) at ($(cimg.south |- capline)$)
    {This meal consists of salad, water, vegetables and dip.};
  \node[align=center, text width=0.225\textwidth, anchor=north] (dcap) at ($(dimg.south |- capline)$)
    {A child eating a sandwich with relish on it.};
  \begin{scope}[on background layer]
    \node[draw=green!70!black, fill=green!12, rounded corners=6pt, line width=1pt, inner sep=2pt, fit=(aimg) (bimg)] (aimgbimgbox) {};
    \node[draw=red!70!black, fill=red!10, rounded corners=6pt, line width=1pt, inner sep=2pt, fit=(acap) (bcap)] (acapbcapbox) {};
    \node[draw=red!70!black, fill=red!10, rounded corners=6pt, line width=1pt, inner sep=2pt, fit=(cimg) (dimg)] (cimgdimgbox) {};
    \node[draw=green!70!black, fill=green!12, rounded corners=6pt, line width=1pt, inner sep=2pt, fit=(ccap) (dcap)] (ccapdcapbox) {};
  \end{scope}
  \coordinate (simline) at ($(current bounding box.south) + (0,-1mm)$);
  \coordinate (acapbcapmid) at ($(acap.south)!0.5!(bcap.south)$);
  \node[anchor=north, font=\footnotesize] at (acapbcapmid |- simline)
    {images: $0.37$, captions: $-0.04$};
  \coordinate (ccapdcapmid) at ($(ccap.south)!0.5!(dcap.south)$);
  \node[anchor=north, font=\footnotesize] at (ccapdcapmid |- simline)
    {images: $0.05$, captions: $0.50$};
\end{tikzpicture}

\vspace{6pt}
\begin{tikzpicture}
  \node[inner sep=0pt, minimum width=0.23\textwidth, minimum height=0.17\textwidth] (aimg) at (0,0)
    {\includegraphics[width=0.23\textwidth,height=0.17\textwidth,keepaspectratio]{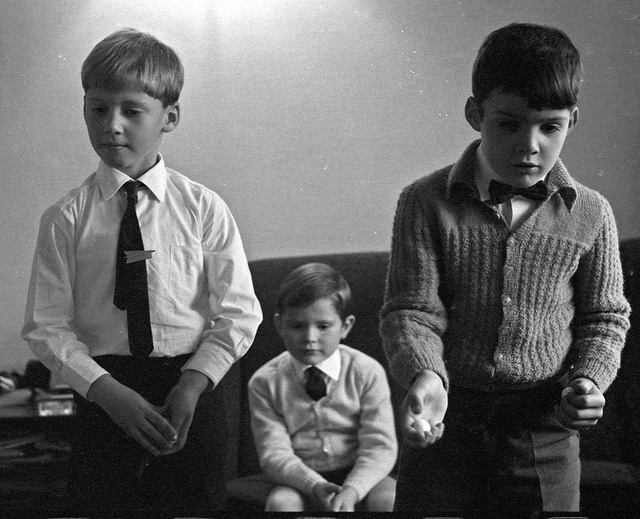}};
  \node[inner sep=0pt, minimum width=0.23\textwidth, minimum height=0.17\textwidth] (bimg) [right=4mm of aimg]
    {\includegraphics[width=0.23\textwidth,height=0.17\textwidth,keepaspectratio]{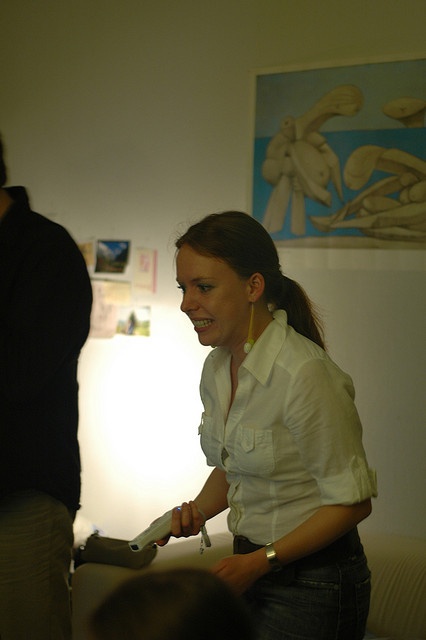}};
  \node[inner sep=0pt, minimum width=0.23\textwidth, minimum height=0.17\textwidth] (cimg) [right=4mm of bimg]
    {\includegraphics[width=0.23\textwidth,height=0.17\textwidth,keepaspectratio]{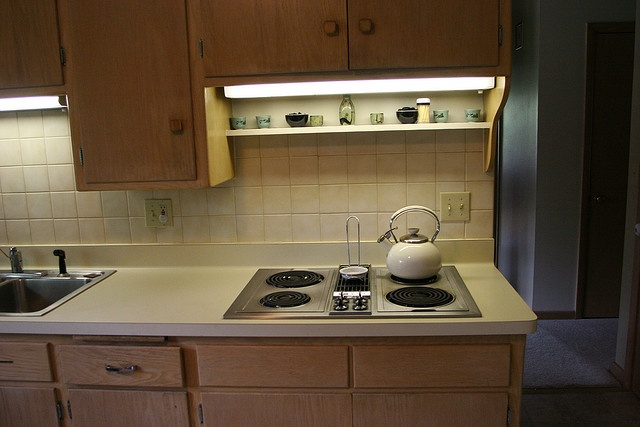}};
  \node[inner sep=0pt, minimum width=0.23\textwidth, minimum height=0.17\textwidth] (dimg) [right=4mm of cimg]
    {\includegraphics[width=0.23\textwidth,height=0.17\textwidth,keepaspectratio]{}};
  \coordinate (capline) at ($(aimg.south) + (0,-3mm)$);
  \node[align=center, text width=0.225\textwidth, anchor=north] (acap) at ($(aimg.south |- capline)$)
    {Three young boys dressed in formal clothes, two are standing one is sitting};
  \node[align=center, text width=0.225\textwidth, anchor=north] (bcap) at ($(bimg.south |- capline)$)
    {A woman standing in a room with a remote.};
  \node[align=center, text width=0.225\textwidth, anchor=north] (ccap) at ($(cimg.south |- capline)$)
    {A stove top with a tea kettle sitting on top of it.};
  \node[align=center, text width=0.225\textwidth, anchor=north] (dcap) at ($(dimg.south |- capline)$)
    {Living room with coffee table and a brown couch and chair.};
  \begin{scope}[on background layer]
    \node[draw=green!70!black, fill=green!12, rounded corners=6pt, line width=1pt, inner sep=2pt, fit=(aimg) (bimg)] (aimgbimgbox) {};
    \node[draw=red!70!black, fill=red!10, rounded corners=6pt, line width=1pt, inner sep=2pt, fit=(acap) (bcap)] (acapbcapbox) {};
    \node[draw=red!70!black, fill=red!10, rounded corners=6pt, line width=1pt, inner sep=2pt, fit=(cimg) (dimg)] (cimgdimgbox) {};
    \node[draw=green!70!black, fill=green!12, rounded corners=6pt, line width=1pt, inner sep=2pt, fit=(ccap) (dcap)] (ccapdcapbox) {};
  \end{scope}
  \coordinate (simline) at ($(current bounding box.south) + (0,-1mm)$);
  \coordinate (acapbcapmid) at ($(acap.south)!0.5!(bcap.south)$);
  \node[anchor=north, font=\footnotesize] at (acapbcapmid |- simline)
    {images: $0.27$, captions: $-0.15$};
  \coordinate (ccapdcapmid) at ($(ccap.south)!0.5!(dcap.south)$);
  \node[anchor=north, font=\footnotesize] at (ccapdcapmid |- simline)
    {images: $0.03$, captions: $0.43$};
\end{tikzpicture}

\vspace{6pt}
\begin{tikzpicture}
  \node[inner sep=0pt, minimum width=0.23\textwidth, minimum height=0.17\textwidth] (aimg) at (0,0)
    {\includegraphics[width=0.23\textwidth,height=0.17\textwidth,keepaspectratio]{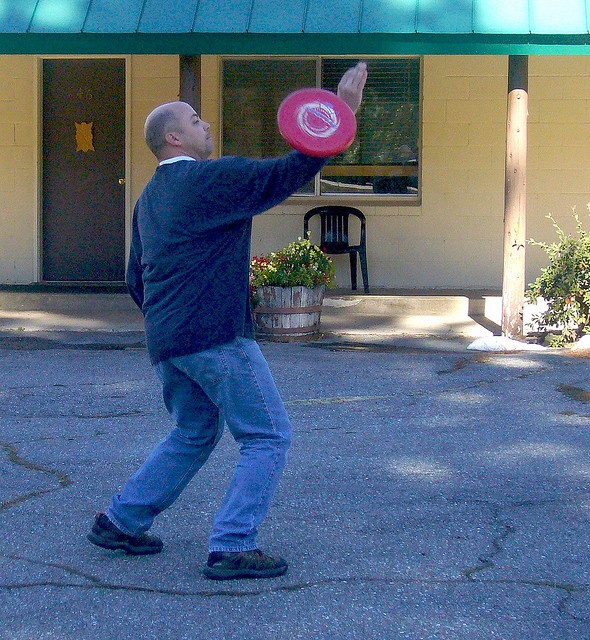}};
  \node[inner sep=0pt, minimum width=0.23\textwidth, minimum height=0.17\textwidth] (bimg) [right=4mm of aimg]
    {\includegraphics[width=0.23\textwidth,height=0.17\textwidth,keepaspectratio]{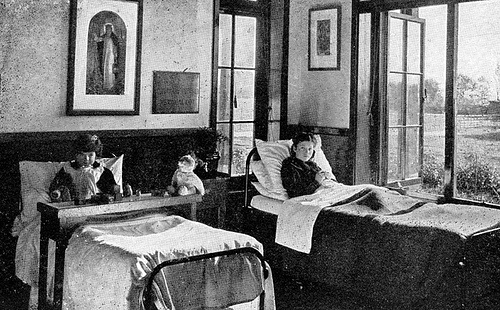}};
  \node[inner sep=0pt, minimum width=0.23\textwidth, minimum height=0.17\textwidth] (cimg) [right=4mm of bimg]
    {\includegraphics[width=0.23\textwidth,height=0.17\textwidth,keepaspectratio]{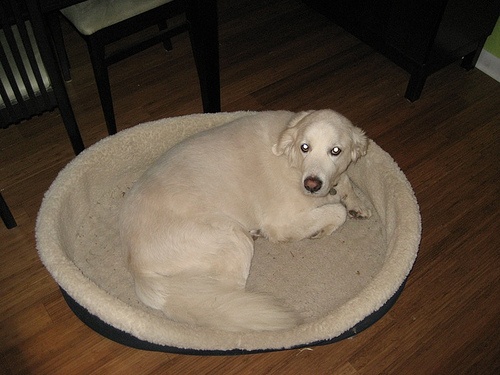}};
  \node[inner sep=0pt, minimum width=0.23\textwidth, minimum height=0.17\textwidth] (dimg) [right=4mm of cimg]
    {\includegraphics[width=0.23\textwidth,height=0.17\textwidth,keepaspectratio]{}};
  \coordinate (capline) at ($(aimg.south) + (0,-3mm)$);
  \node[align=center, text width=0.225\textwidth, anchor=north] (acap) at ($(aimg.south |- capline)$)
    {A man is trying to catch a red Frisbee.};
  \node[align=center, text width=0.225\textwidth, anchor=north] (bcap) at ($(bimg.south |- capline)$)
    {Two young people laying in two beds in a bedroom under blankets.};
  \node[align=center, text width=0.225\textwidth, anchor=north] (ccap) at ($(cimg.south |- capline)$)
    {A yellow lab curled up in its dog bed.};
  \node[align=center, text width=0.225\textwidth, anchor=north] (dcap) at ($(dimg.south |- capline)$)
    {A big and white horse in a dirt cage.};
  \begin{scope}[on background layer]
    \node[draw=green!70!black, fill=green!12, rounded corners=6pt, line width=1pt, inner sep=2pt, fit=(aimg) (bimg)] (aimgbimgbox) {};
    \node[draw=red!70!black, fill=red!10, rounded corners=6pt, line width=1pt, inner sep=2pt, fit=(acap) (bcap)] (acapbcapbox) {};
    \node[draw=red!70!black, fill=red!10, rounded corners=6pt, line width=1pt, inner sep=2pt, fit=(cimg) (dimg)] (cimgdimgbox) {};
    \node[draw=green!70!black, fill=green!12, rounded corners=6pt, line width=1pt, inner sep=2pt, fit=(ccap) (dcap)] (ccapdcapbox) {};
  \end{scope}
  \coordinate (simline) at ($(current bounding box.south) + (0,-1mm)$);
  \coordinate (acapbcapmid) at ($(acap.south)!0.5!(bcap.south)$);
  \node[anchor=north, font=\footnotesize] at (acapbcapmid |- simline)
    {images: $0.09$, captions: $-0.36$};
  \coordinate (ccapdcapmid) at ($(ccap.south)!0.5!(dcap.south)$);
  \node[anchor=north, font=\footnotesize] at (ccapdcapmid |- simline)
    {images: $0.03$, captions: $0.43$};
\end{tikzpicture}

\caption{Additional examples of cases of similar images with dissimilar captions or vice versa. Beneath each pair is the cosine similarity between the images and the average embedding of all of their possible captions, with a random one of the captions being shown. Pairs are sampled uniformly at random from all pairs whose image and caption cosine similarities differ by more than $0.4$.}
\label{fig:semsim_app}
\end{figure}

\section{Example Images from VisLingA}
\label{app:vislinga}
See Figure \ref{fig:vislinga}.

\begin{figure}
  \centering
  \setlength{\tabcolsep}{1pt}
  \renewcommand{\arraystretch}{0.5}
  \begin{tabular}{cccccccccc}
    \includegraphics[width=0.095\textwidth]{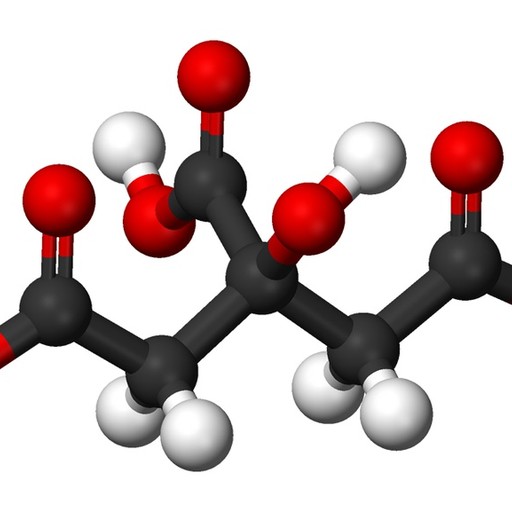} &
    \includegraphics[width=0.095\textwidth]{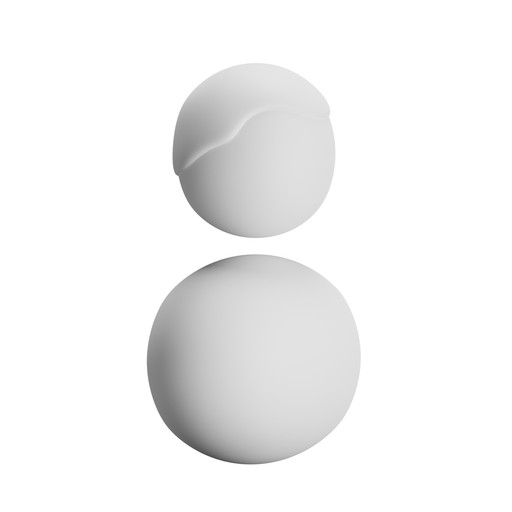} &
    \includegraphics[width=0.095\textwidth]{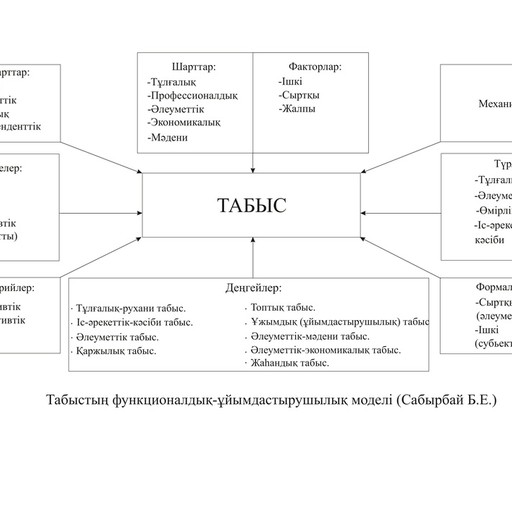} &
    \includegraphics[width=0.095\textwidth]{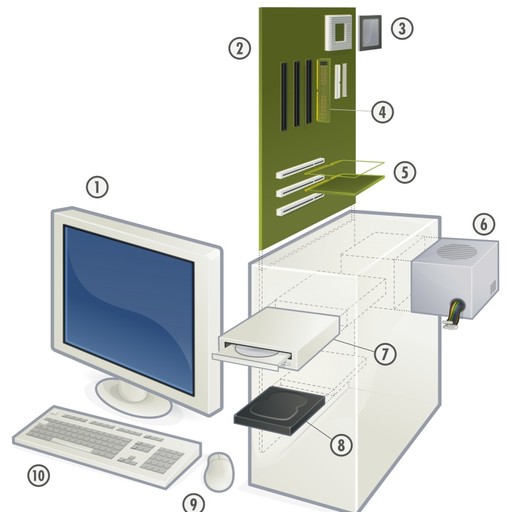} &
    \includegraphics[width=0.095\textwidth]{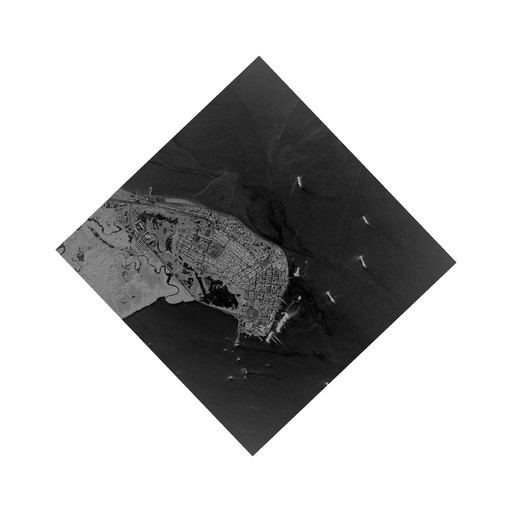} &
    \includegraphics[width=0.095\textwidth]{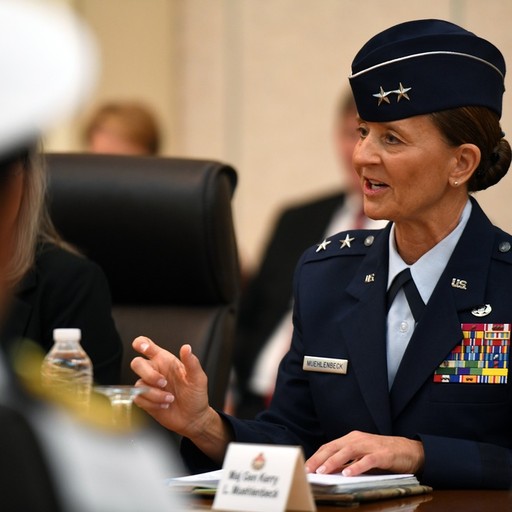} &
    \includegraphics[width=0.095\textwidth]{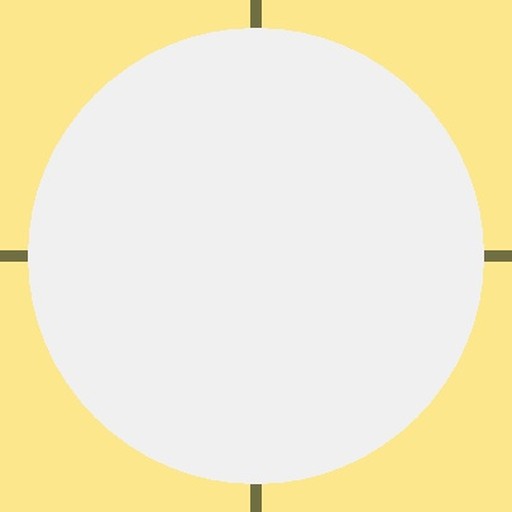} &
    \includegraphics[width=0.095\textwidth]{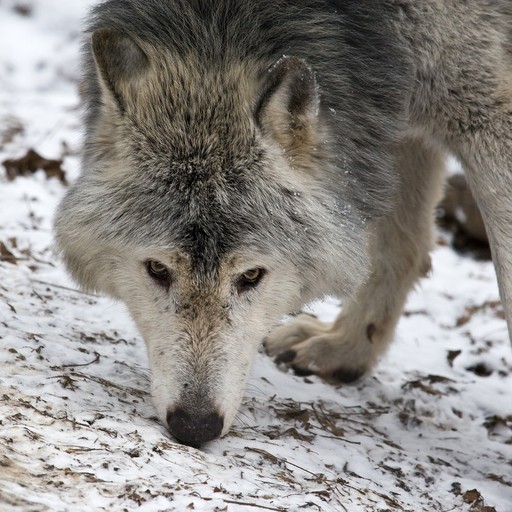} &
    \includegraphics[width=0.095\textwidth]{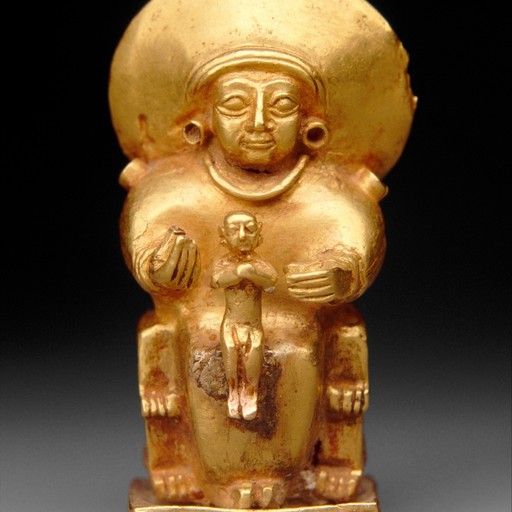} &
    \includegraphics[width=0.095\textwidth]{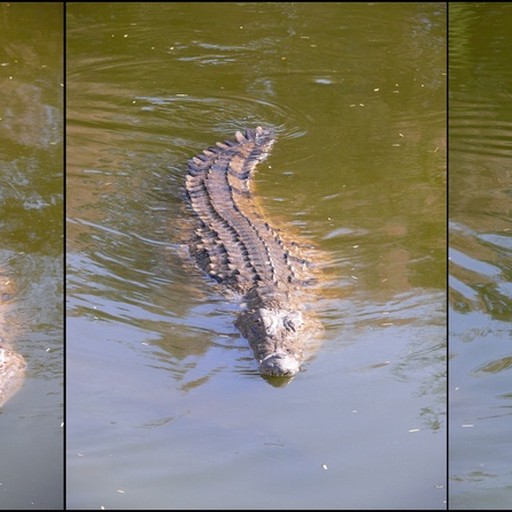} \\
    \includegraphics[width=0.095\textwidth]{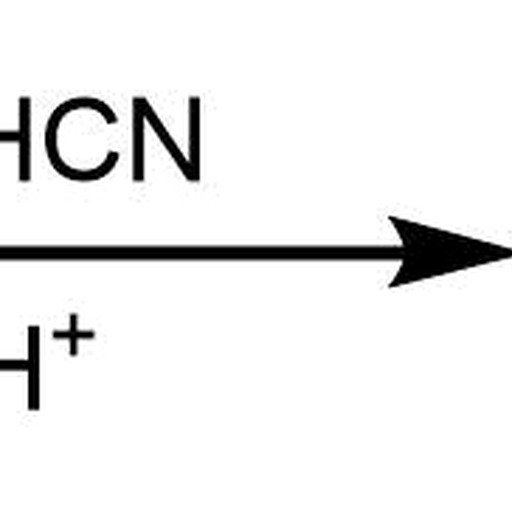} &
    \includegraphics[width=0.095\textwidth]{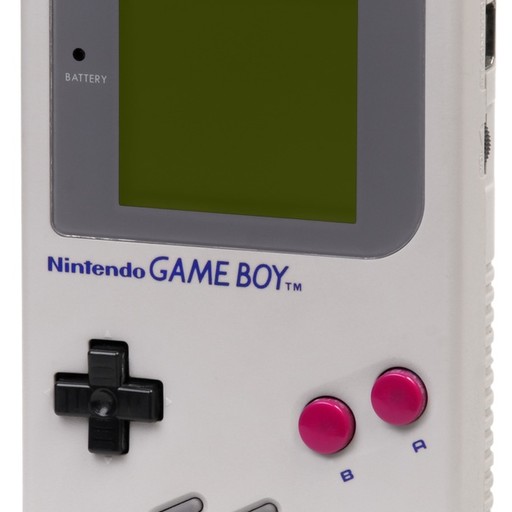} &
    \includegraphics[width=0.095\textwidth]{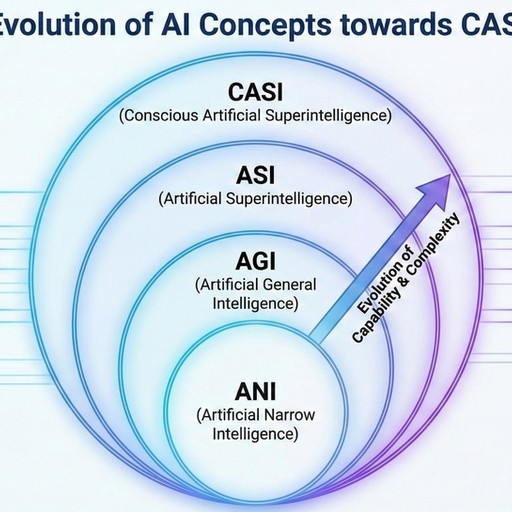} &
    \includegraphics[width=0.095\textwidth]{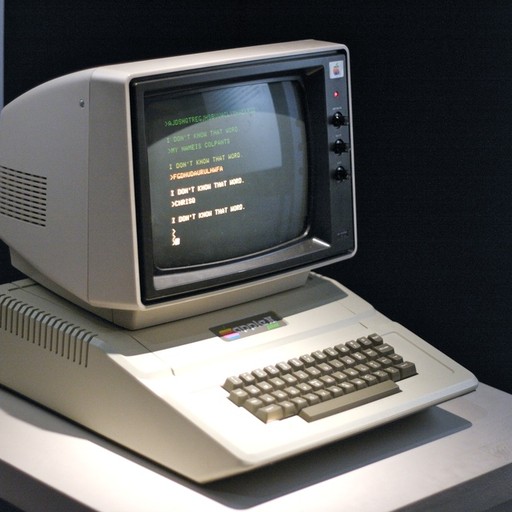} &
    \includegraphics[width=0.095\textwidth]{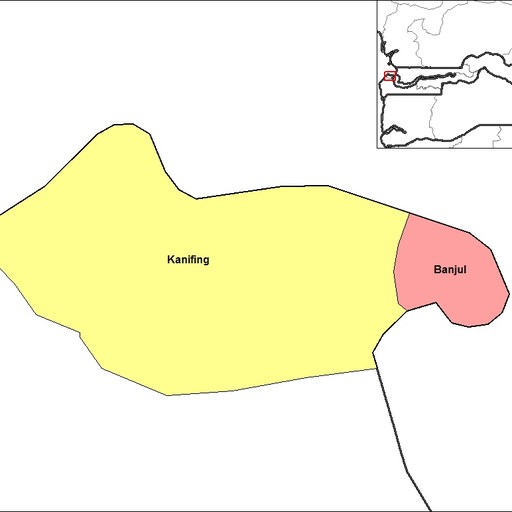} &
    \includegraphics[width=0.095\textwidth]{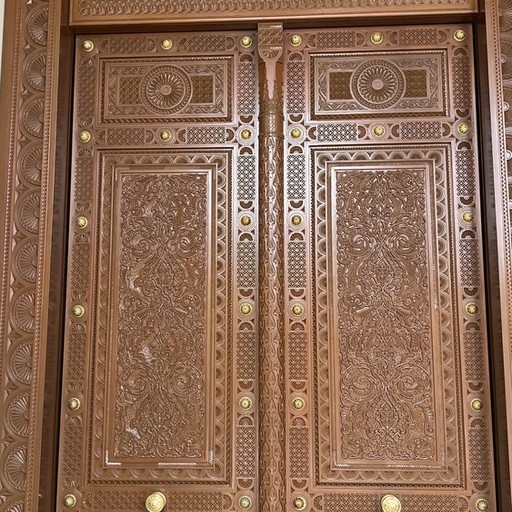} &
    \includegraphics[width=0.095\textwidth]{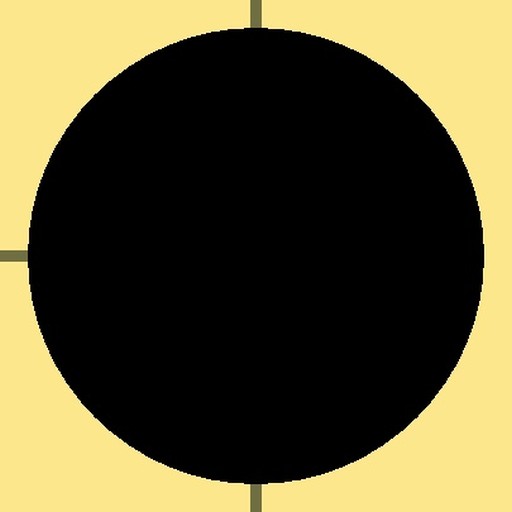} &
    \includegraphics[width=0.095\textwidth]{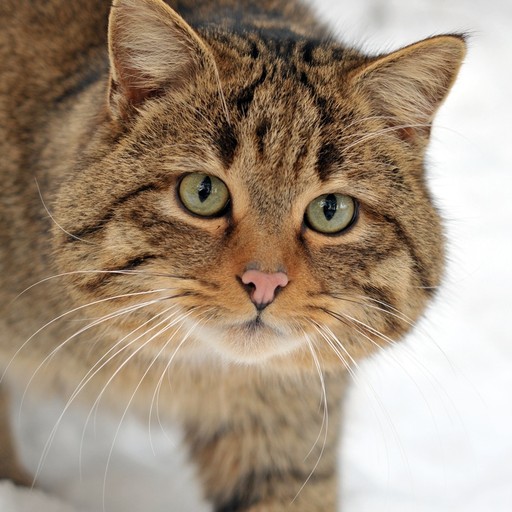} &
    \includegraphics[width=0.095\textwidth]{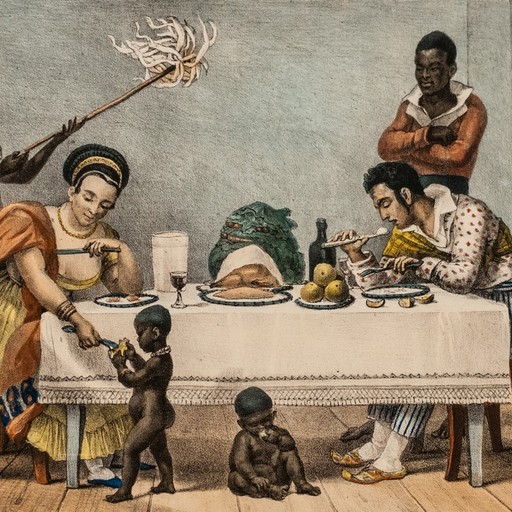} &
    \includegraphics[width=0.095\textwidth]{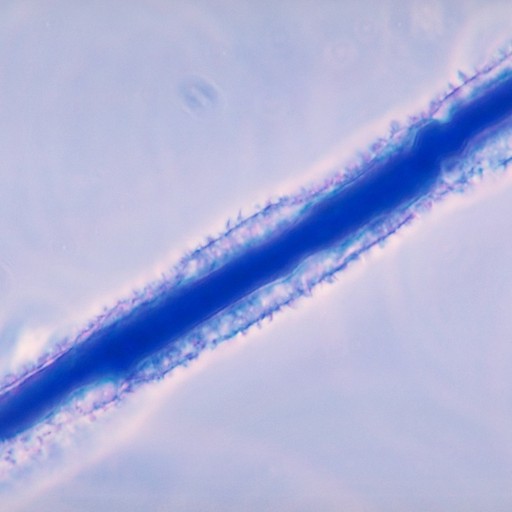} \\
    \includegraphics[width=0.095\textwidth]{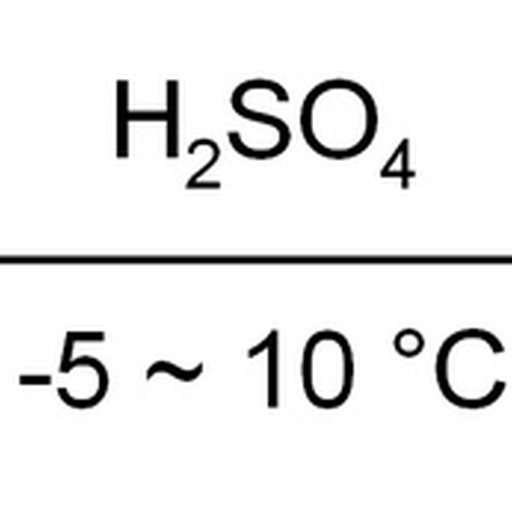} &
    \includegraphics[width=0.095\textwidth]{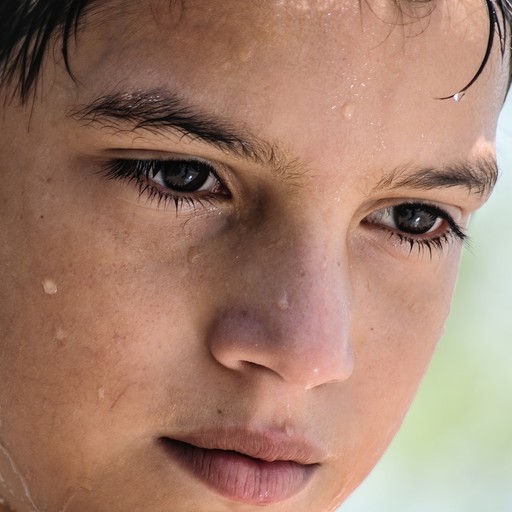} &
    \includegraphics[width=0.095\textwidth]{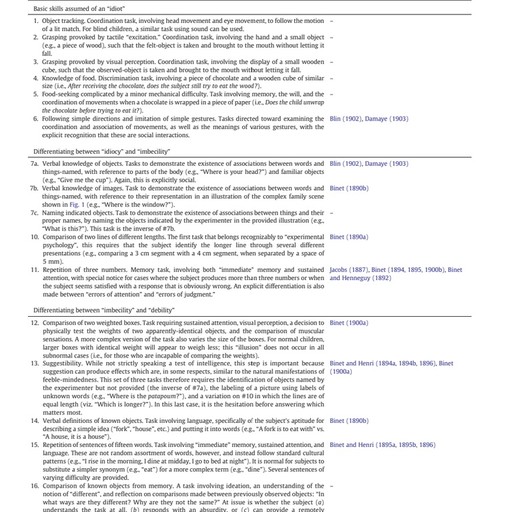} &
    \includegraphics[width=0.095\textwidth]{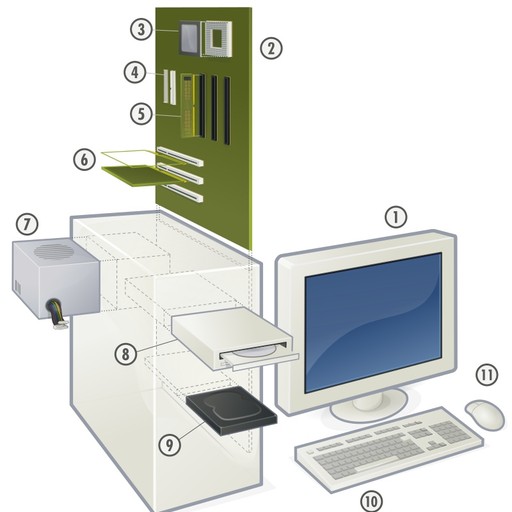} &
    \includegraphics[width=0.095\textwidth]{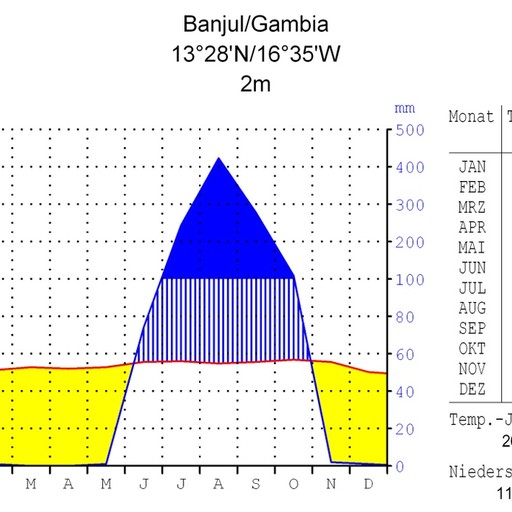} &
    \includegraphics[width=0.095\textwidth]{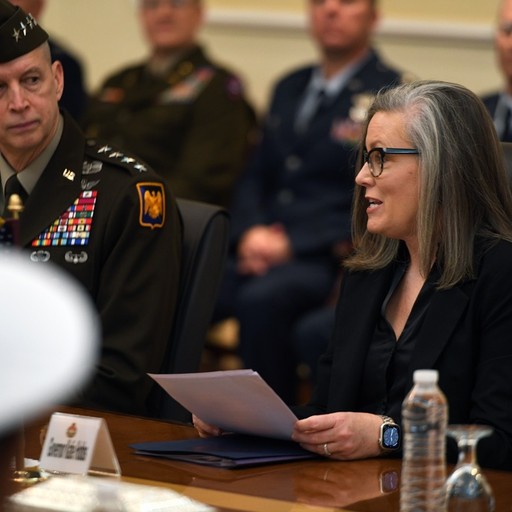} &
    \includegraphics[width=0.095\textwidth]{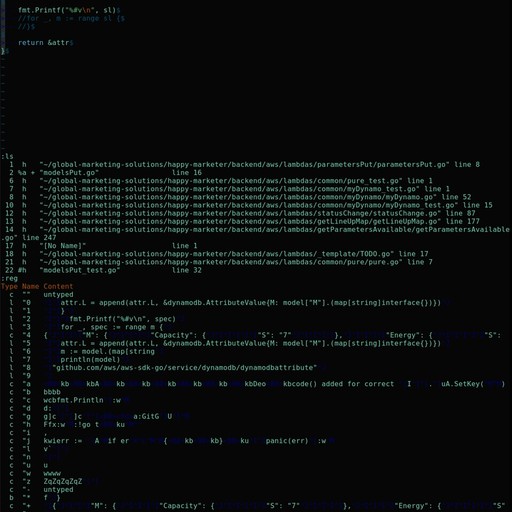} &
    \includegraphics[width=0.095\textwidth]{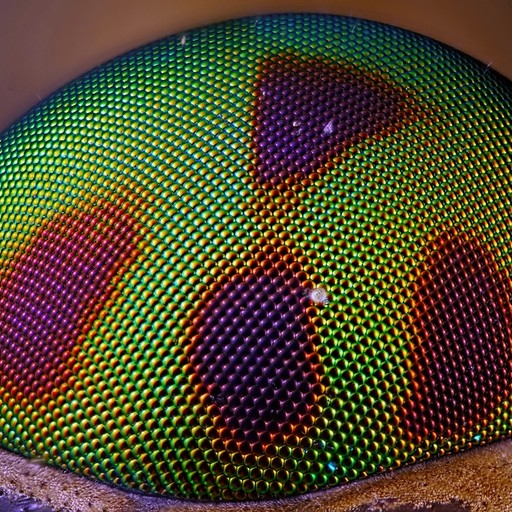} &
    \includegraphics[width=0.095\textwidth]{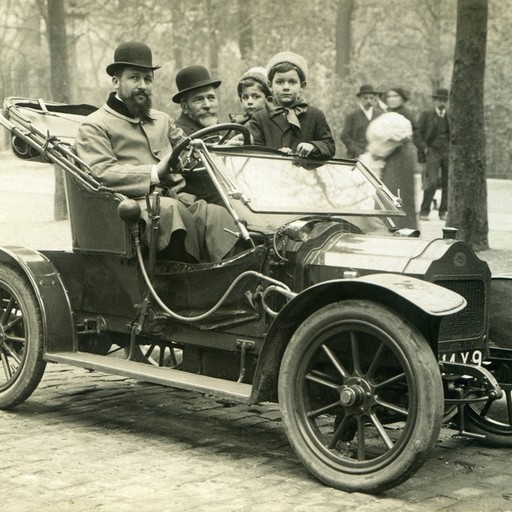} &
    \includegraphics[width=0.095\textwidth]{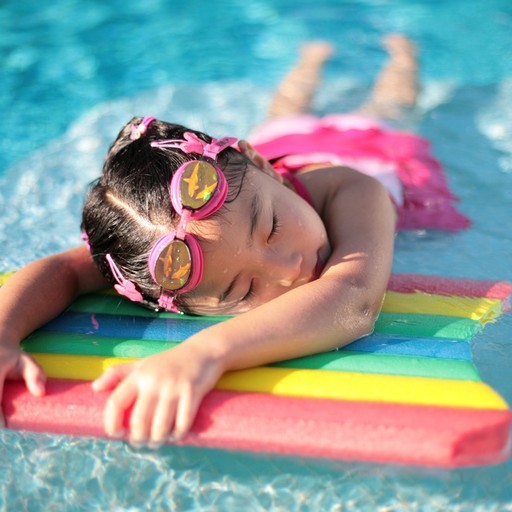} \\
    \includegraphics[width=0.095\textwidth]{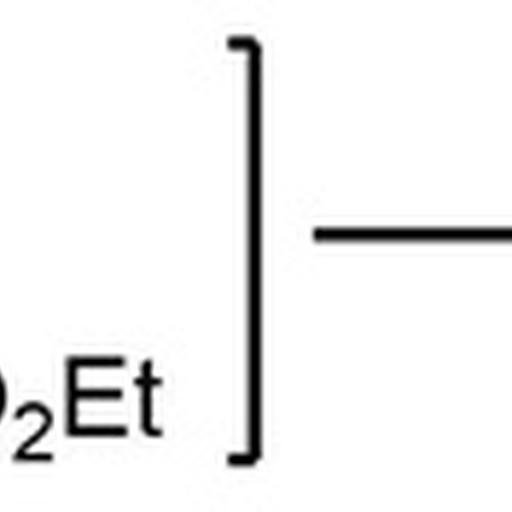} &
    \includegraphics[width=0.095\textwidth]{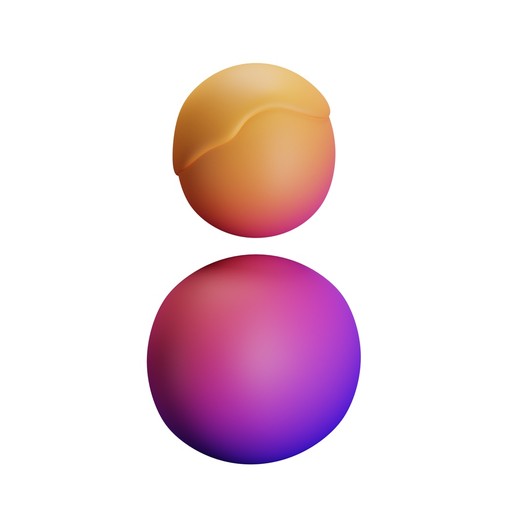} &
    \includegraphics[width=0.095\textwidth]{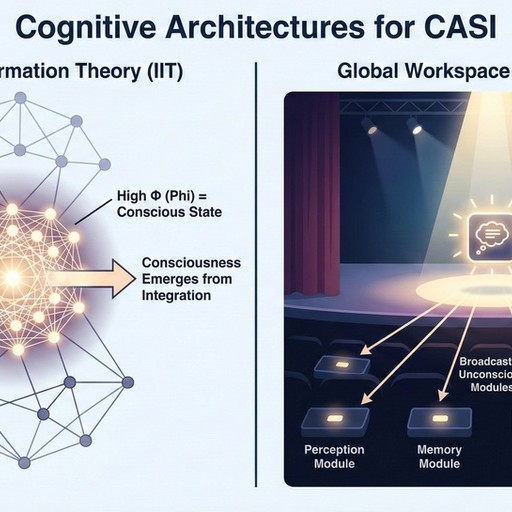} &
    \includegraphics[width=0.095\textwidth]{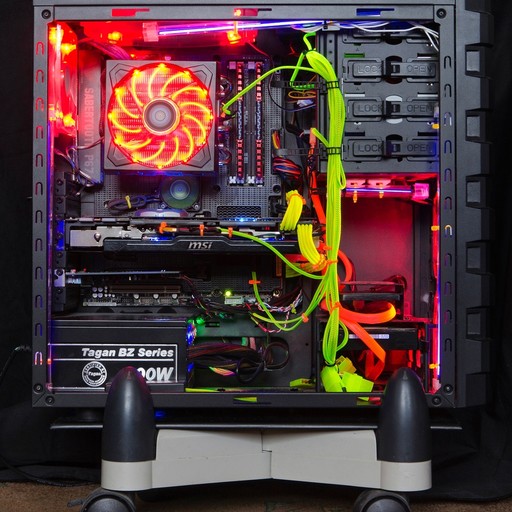} &
    \includegraphics[width=0.095\textwidth]{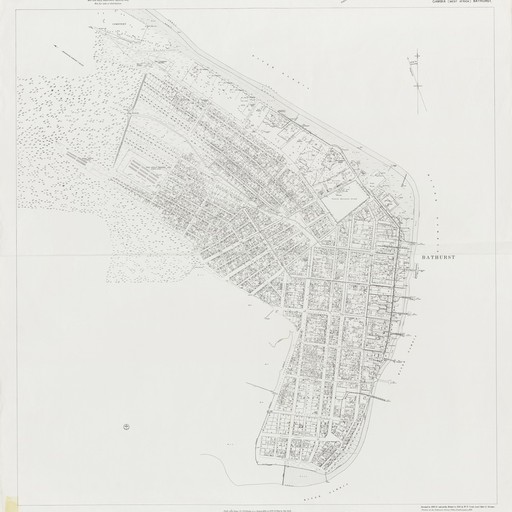} &
    \includegraphics[width=0.095\textwidth]{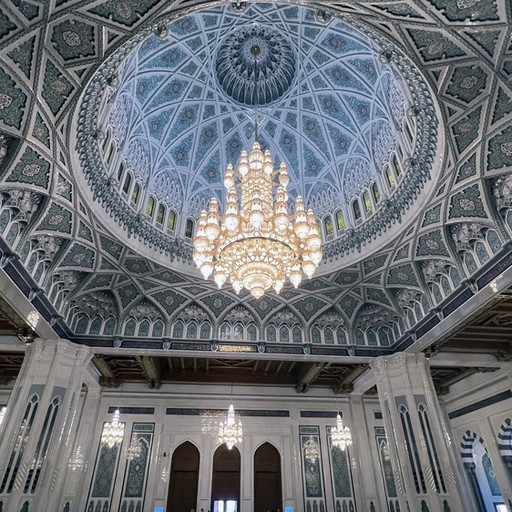} &
    \includegraphics[width=0.095\textwidth]{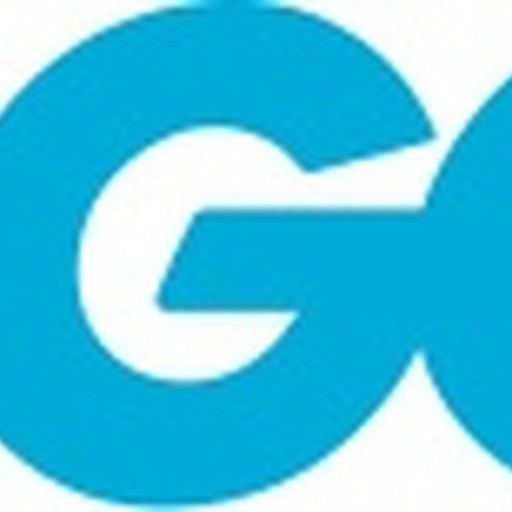} &
    \includegraphics[width=0.095\textwidth]{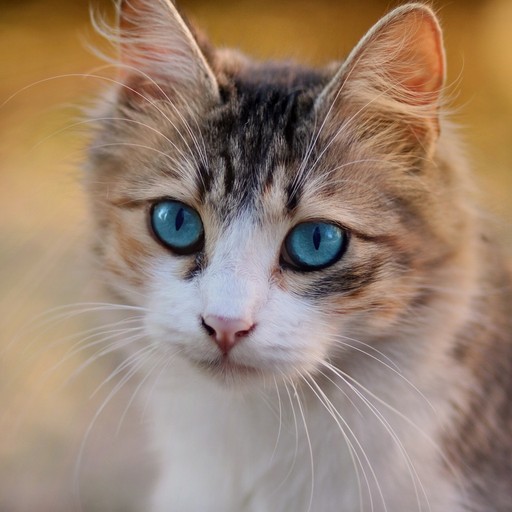} &
    \includegraphics[width=0.095\textwidth]{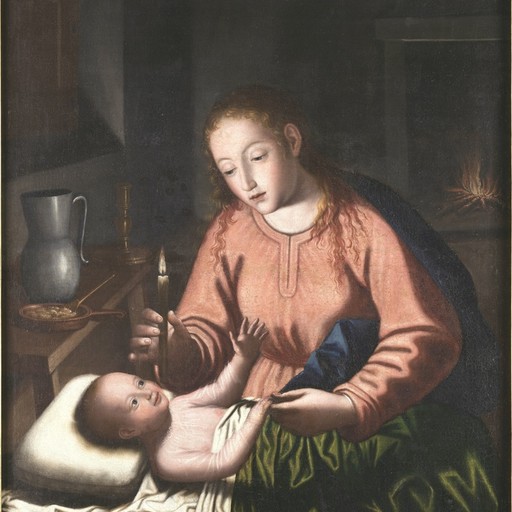} &
    \includegraphics[width=0.095\textwidth]{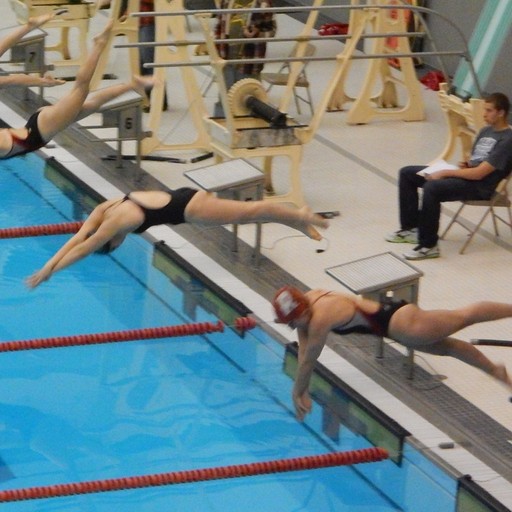} \\
    \includegraphics[width=0.095\textwidth]{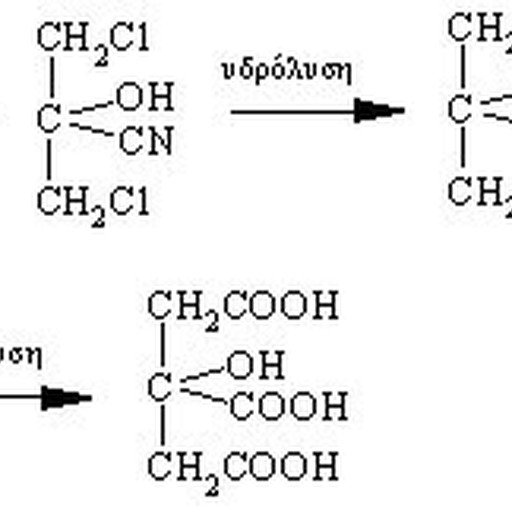} &
    \includegraphics[width=0.095\textwidth]{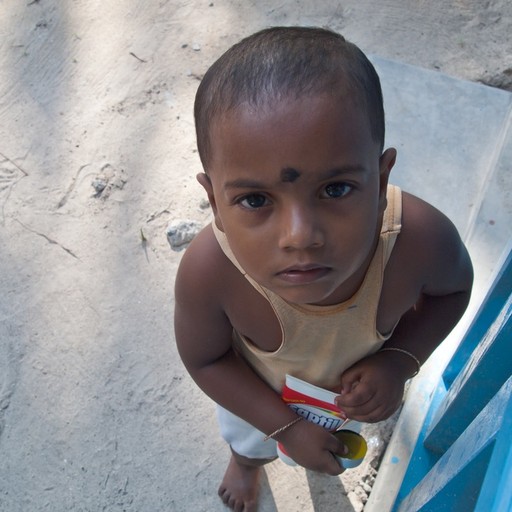} &
    \includegraphics[width=0.095\textwidth]{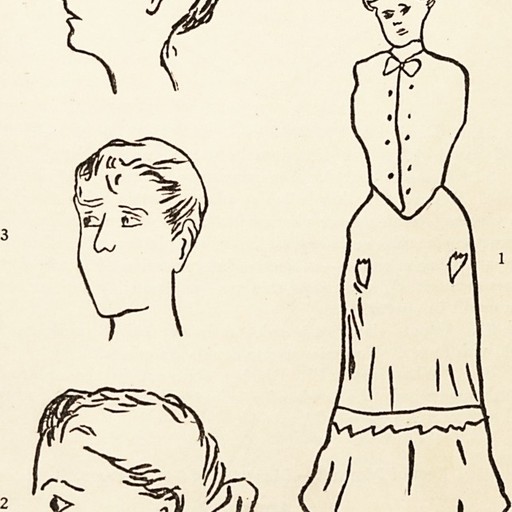} &
    \includegraphics[width=0.095\textwidth]{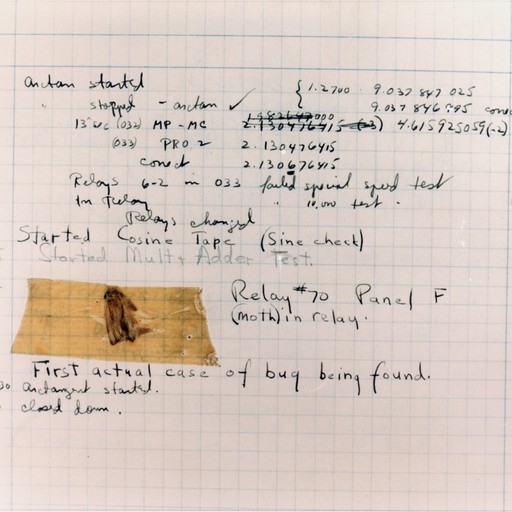} &
    \includegraphics[width=0.095\textwidth]{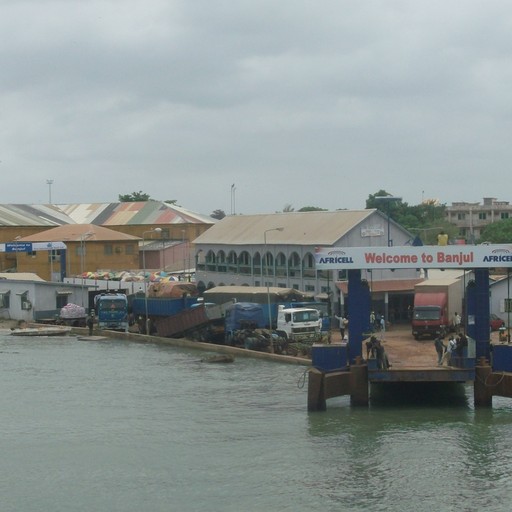} &
    \includegraphics[width=0.095\textwidth]{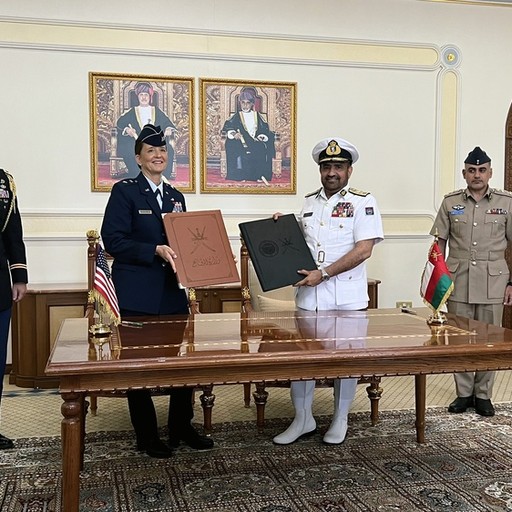} &
    \includegraphics[width=0.095\textwidth]{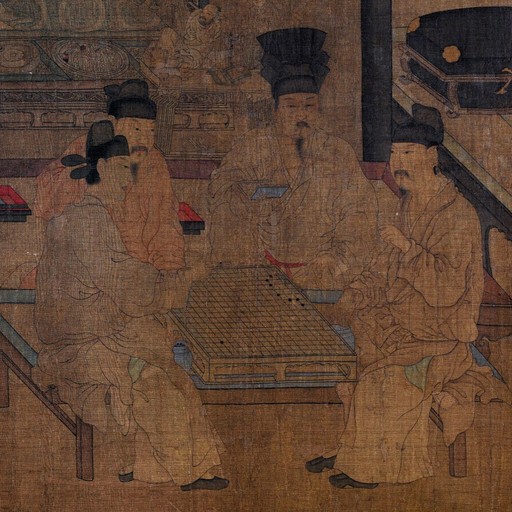} &
    \includegraphics[width=0.095\textwidth]{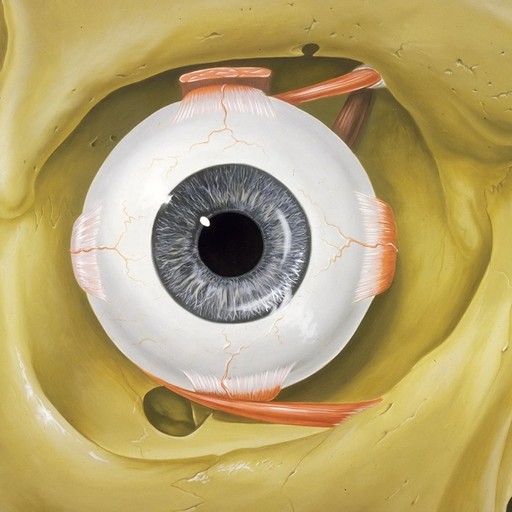} &
    \includegraphics[width=0.095\textwidth]{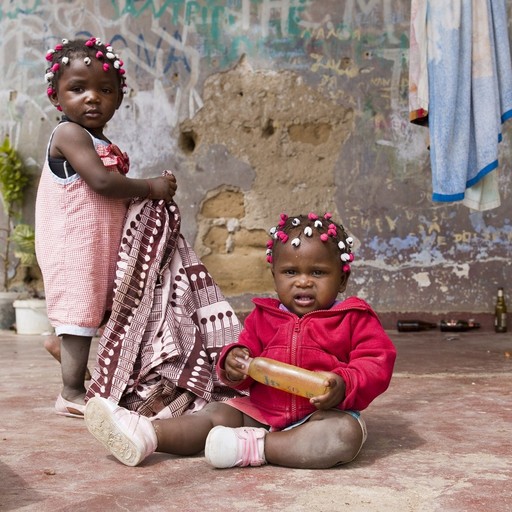} &
    \includegraphics[width=0.095\textwidth]{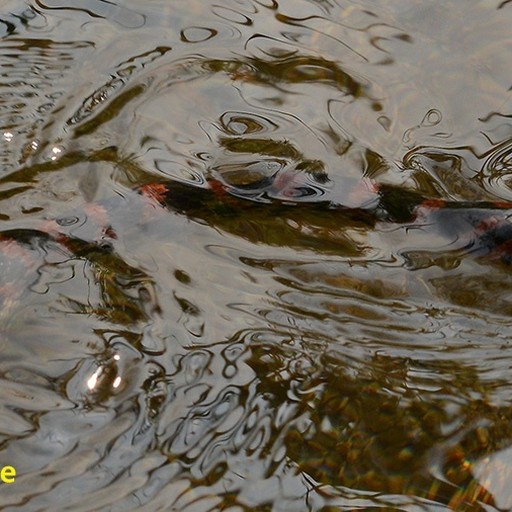} \\
    \scriptsize weaker & \scriptsize boy & \scriptsize smartest & \scriptsize computer & \scriptsize Banjul & \scriptsize Oman & \scriptsize go & \scriptsize eyes & \scriptsize child & \scriptsize swam \\
    \noalign{\vspace{6pt}}
    \includegraphics[width=0.095\textwidth]{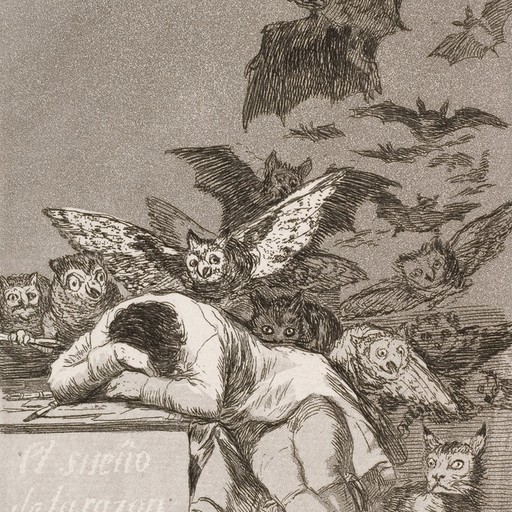} &
    \includegraphics[width=0.095\textwidth]{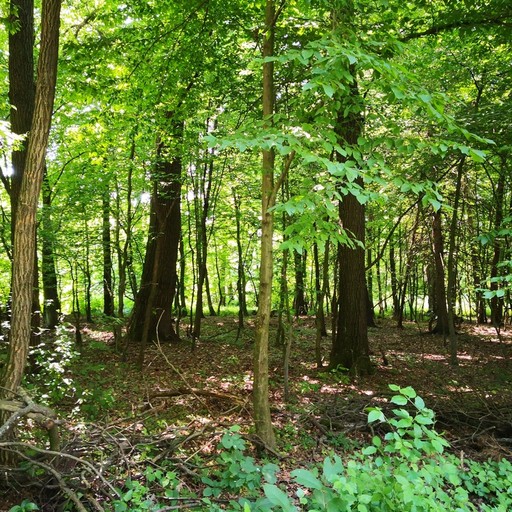} &
    \includegraphics[width=0.095\textwidth]{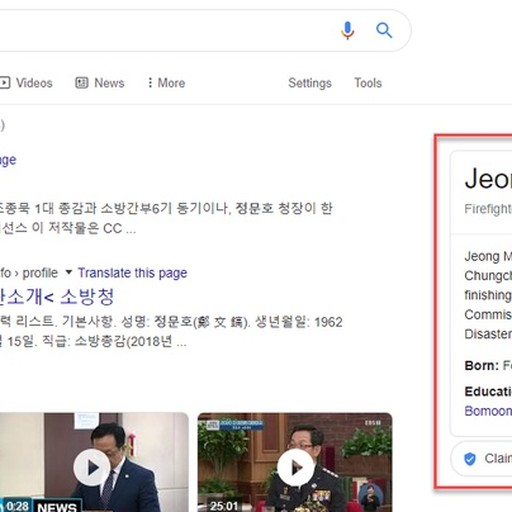} &
    \includegraphics[width=0.095\textwidth]{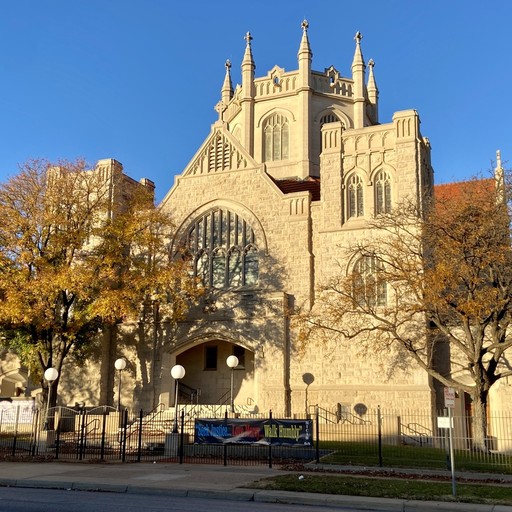} &
    \includegraphics[width=0.095\textwidth]{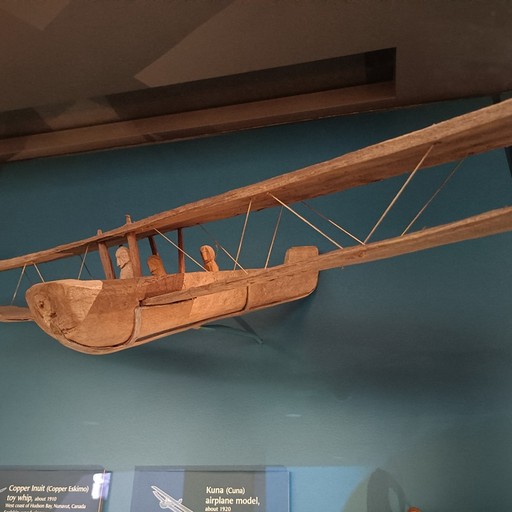} &
    \includegraphics[width=0.095\textwidth]{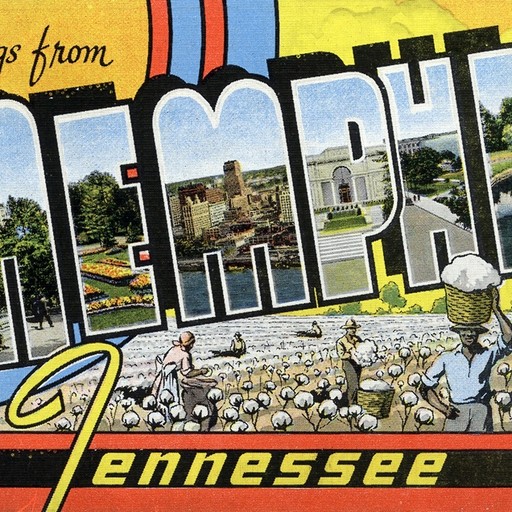} &
    \includegraphics[width=0.095\textwidth]{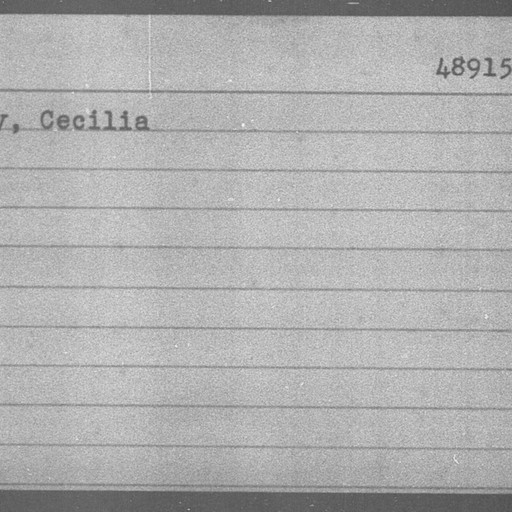} &
    \includegraphics[width=0.095\textwidth]{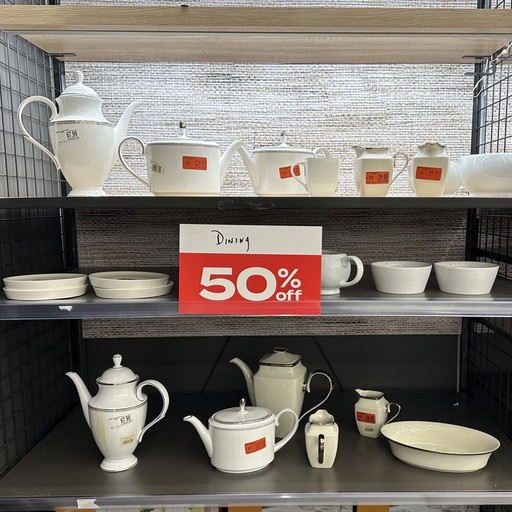} &
    \includegraphics[width=0.095\textwidth]{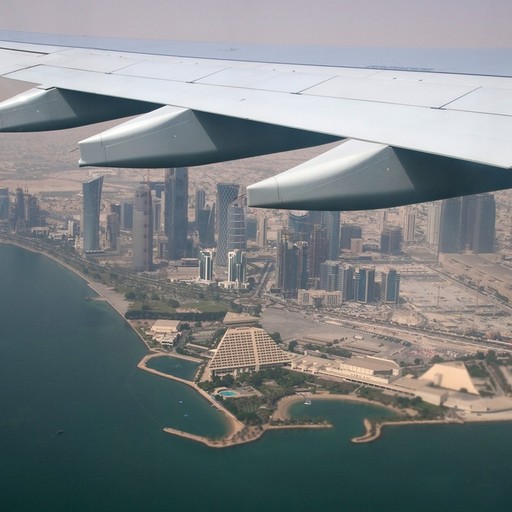} &
    \includegraphics[width=0.095\textwidth]{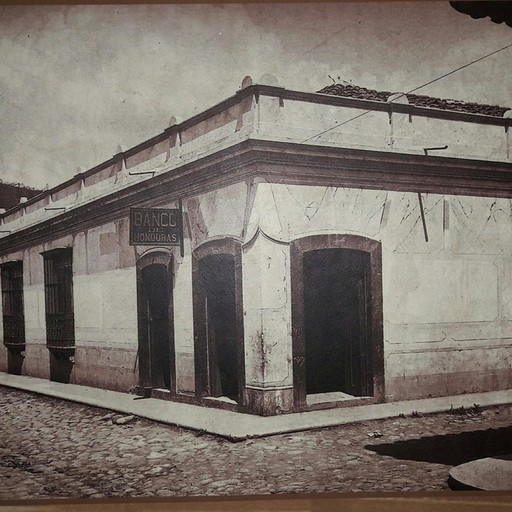} \\
    \includegraphics[width=0.095\textwidth]{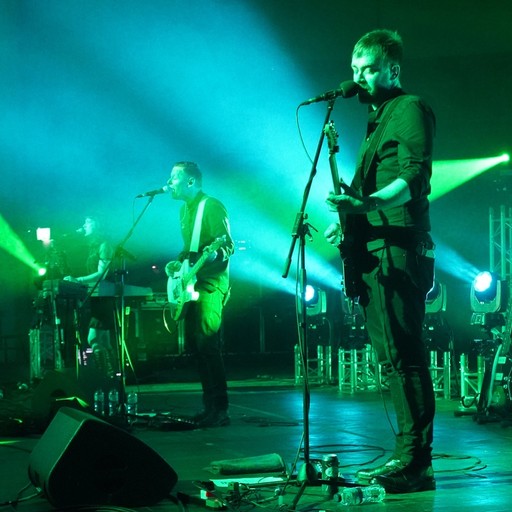} &
    \includegraphics[width=0.095\textwidth]{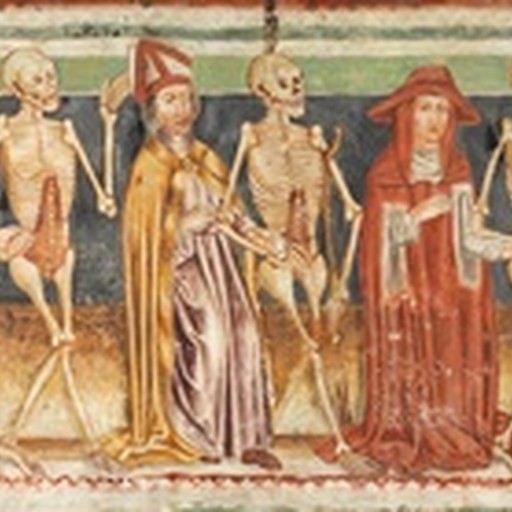} &
    \includegraphics[width=0.095\textwidth]{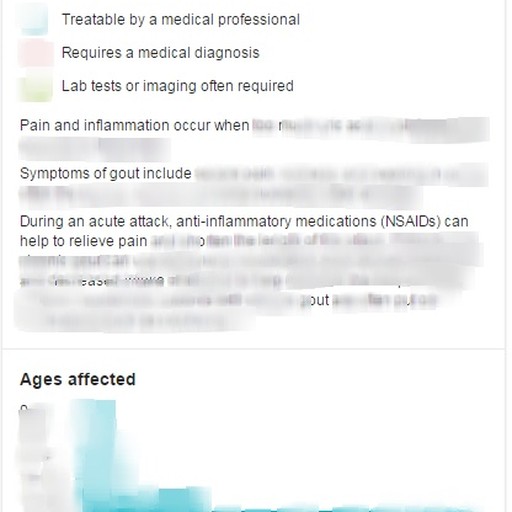} &
    \includegraphics[width=0.095\textwidth]{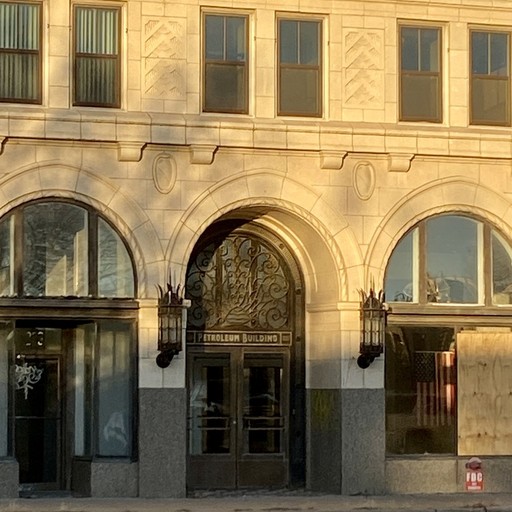} &
    \includegraphics[width=0.095\textwidth]{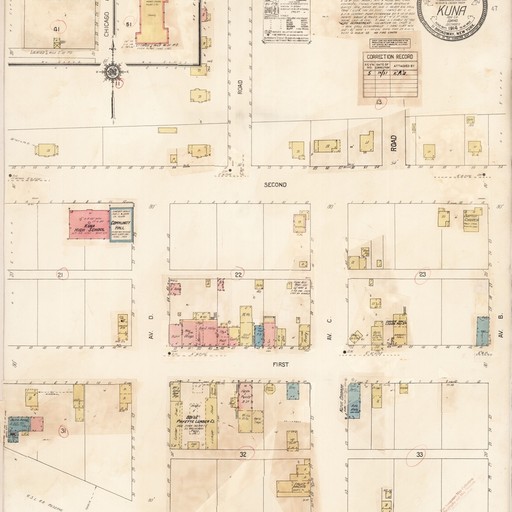} &
    \includegraphics[width=0.095\textwidth]{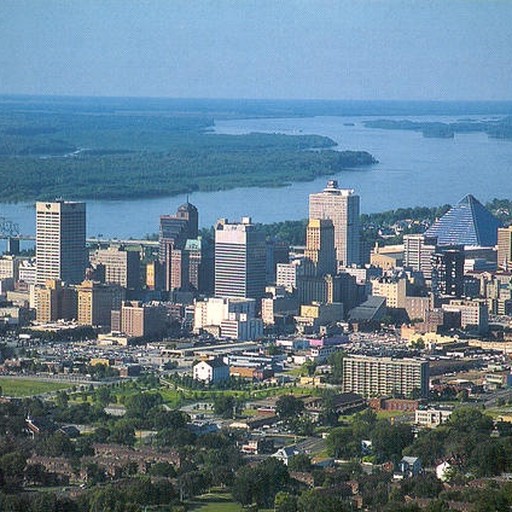} &
    \includegraphics[width=0.095\textwidth]{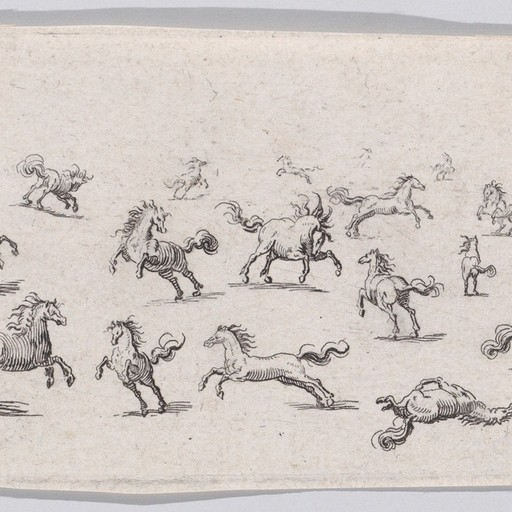} &
    \includegraphics[width=0.095\textwidth]{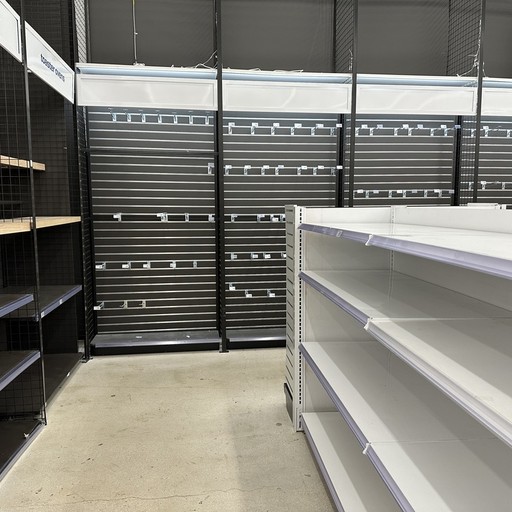} &
    \includegraphics[width=0.095\textwidth]{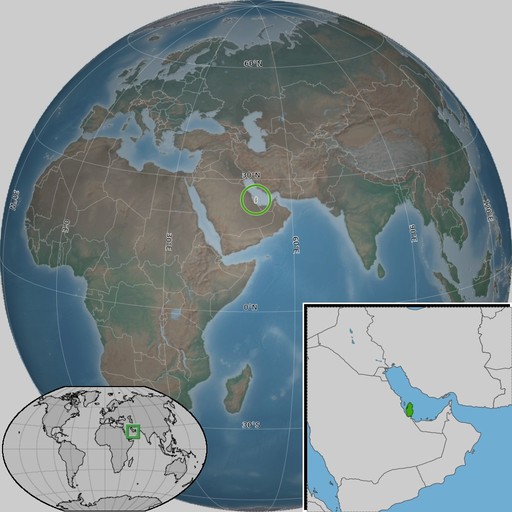} &
    \includegraphics[width=0.095\textwidth]{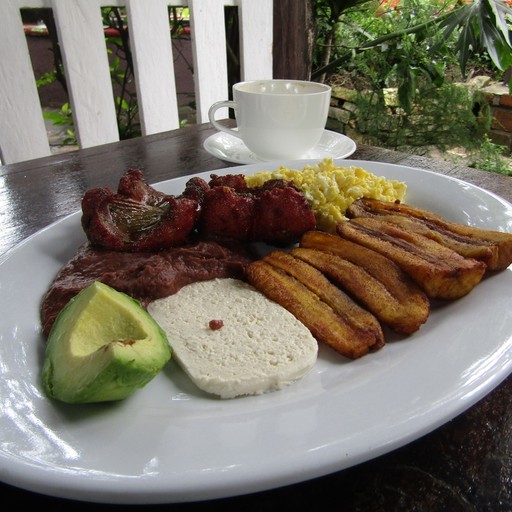} \\
    \includegraphics[width=0.095\textwidth]{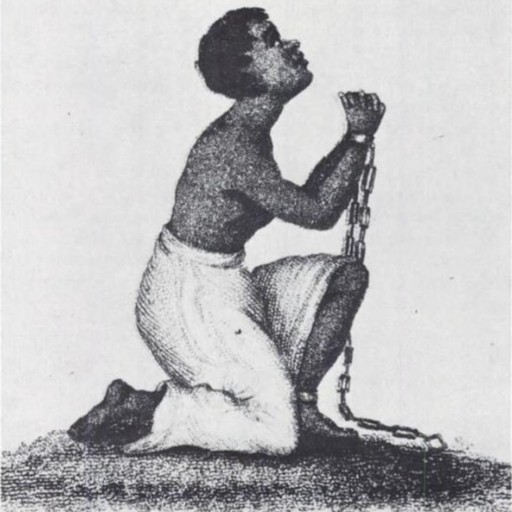} &
    \includegraphics[width=0.095\textwidth]{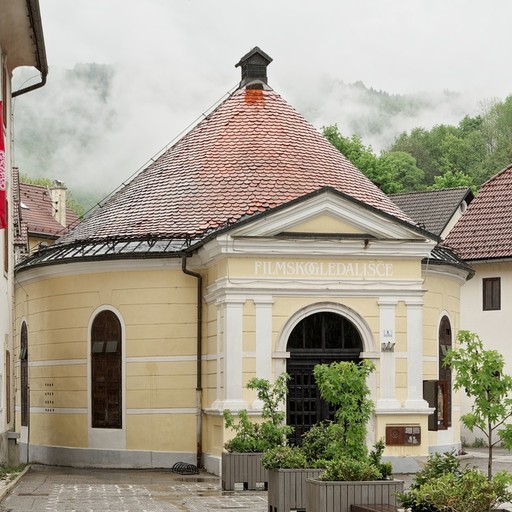} &
    \includegraphics[width=0.095\textwidth]{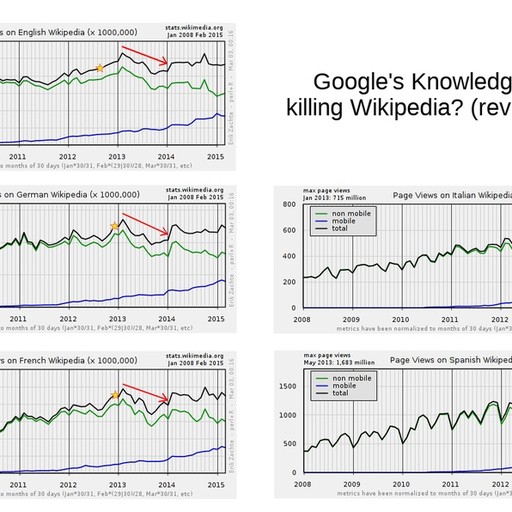} &
    \includegraphics[width=0.095\textwidth]{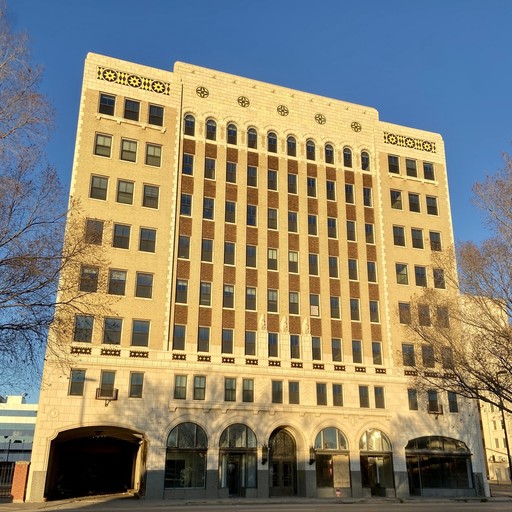} &
    \includegraphics[width=0.095\textwidth]{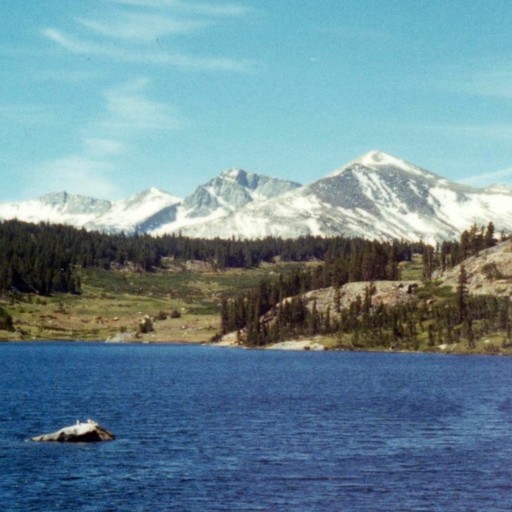} &
    \includegraphics[width=0.095\textwidth]{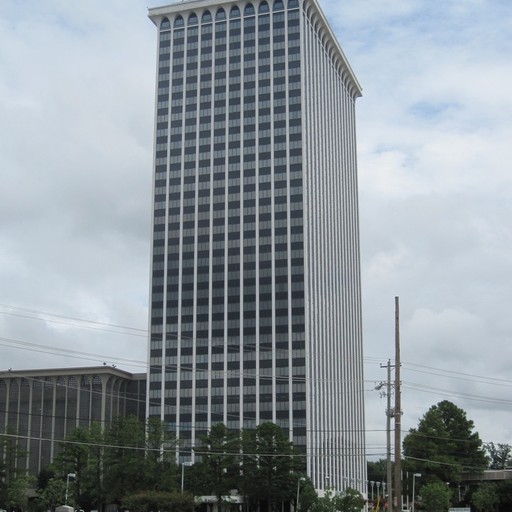} &
    \includegraphics[width=0.095\textwidth]{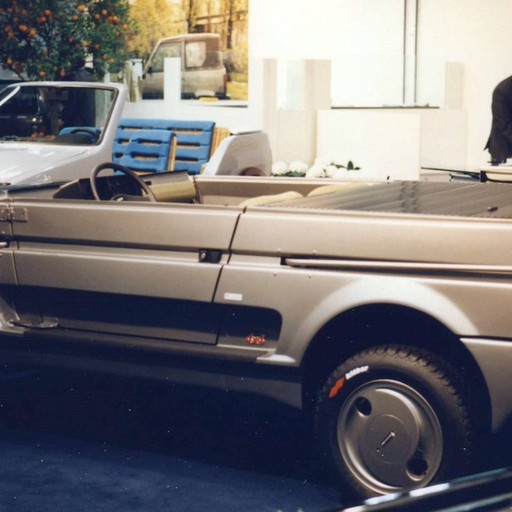} &
    \includegraphics[width=0.095\textwidth]{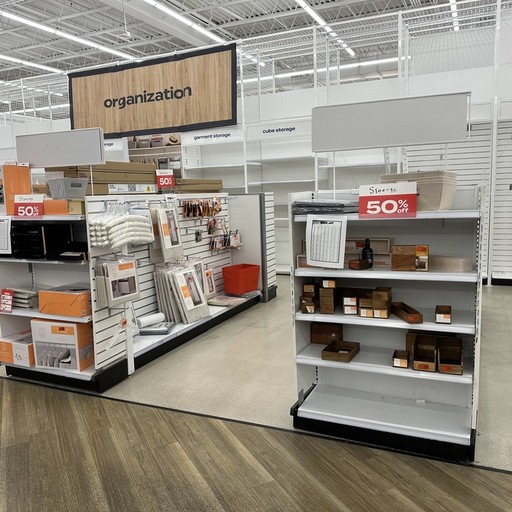} &
    \includegraphics[width=0.095\textwidth]{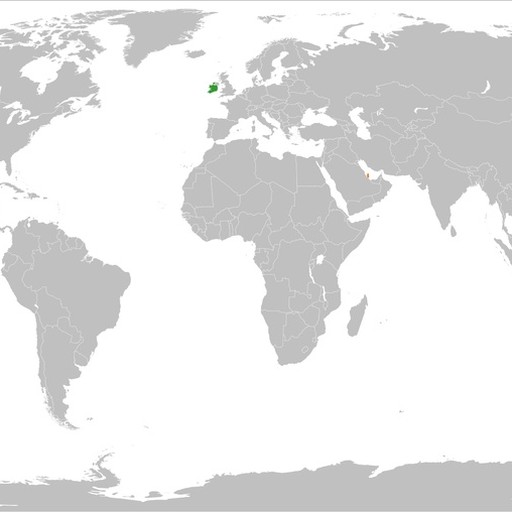} &
    \includegraphics[width=0.095\textwidth]{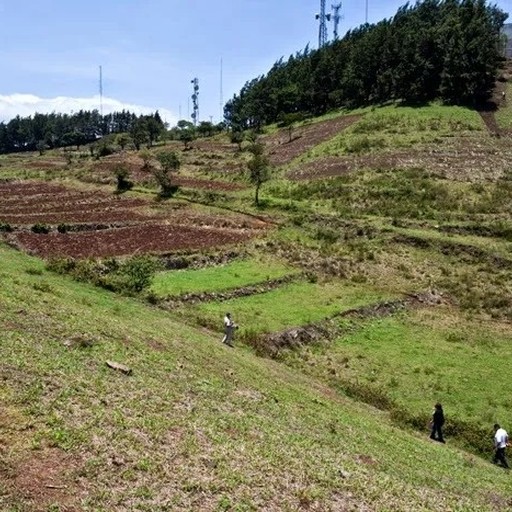} \\
    \includegraphics[width=0.095\textwidth]{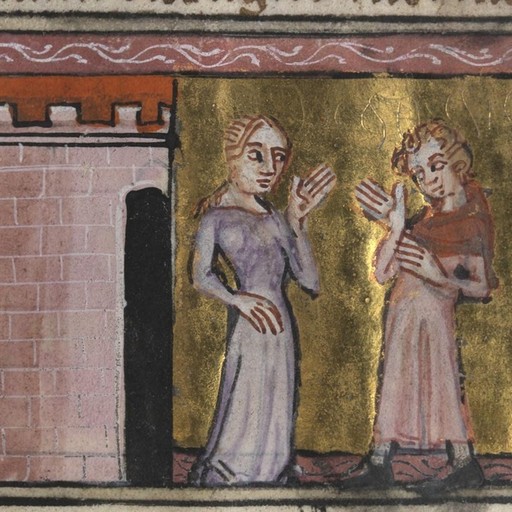} &
    \includegraphics[width=0.095\textwidth]{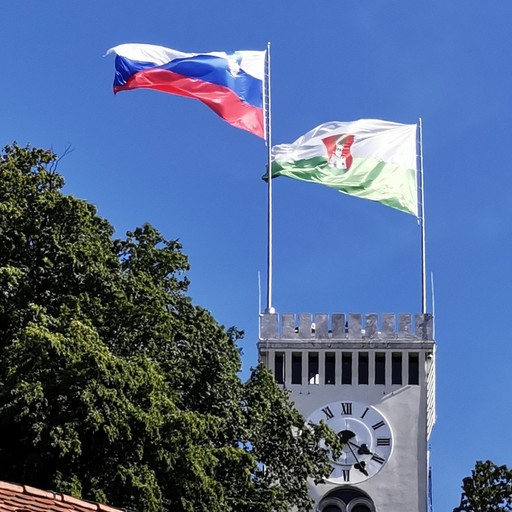} &
    \includegraphics[width=0.095\textwidth]{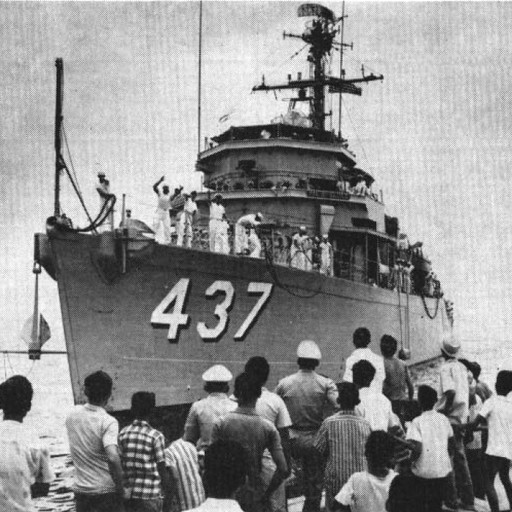} &
    \includegraphics[width=0.095\textwidth]{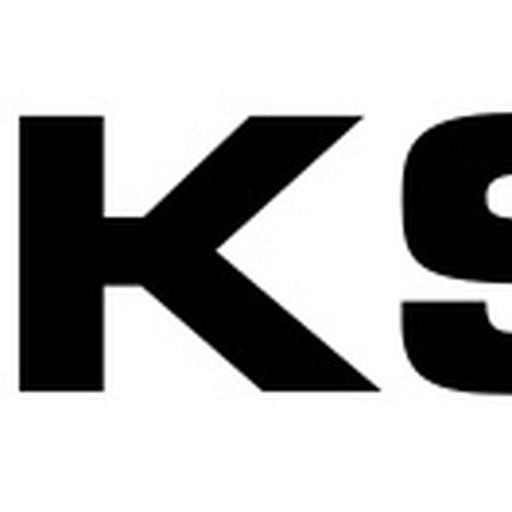} &
    \includegraphics[width=0.095\textwidth]{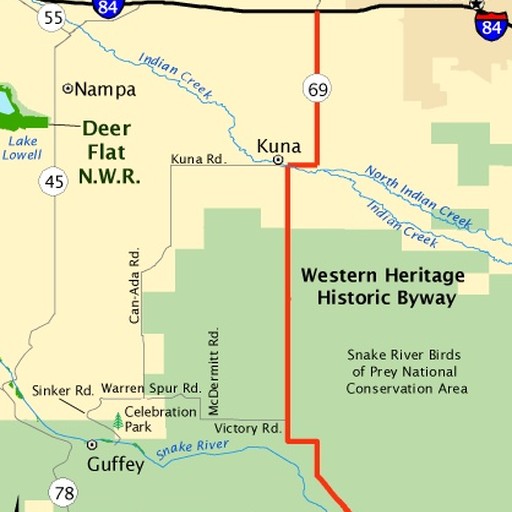} &
    \includegraphics[width=0.095\textwidth]{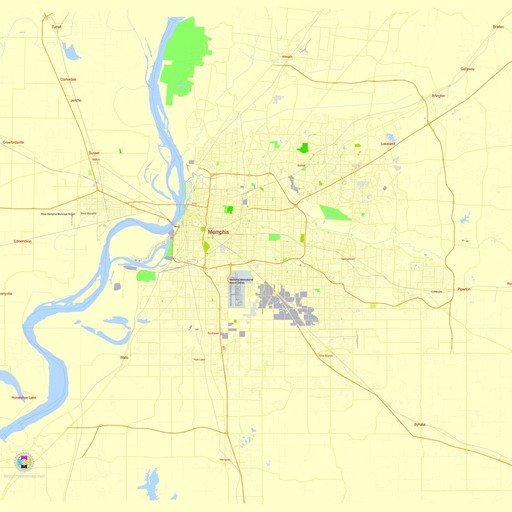} &
    \includegraphics[width=0.095\textwidth]{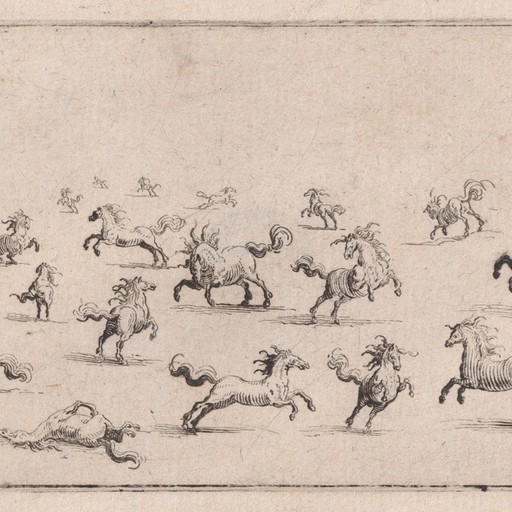} &
    \includegraphics[width=0.095\textwidth]{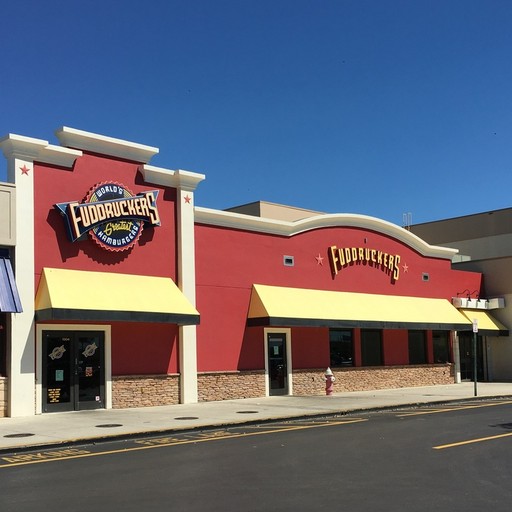} &
    \includegraphics[width=0.095\textwidth]{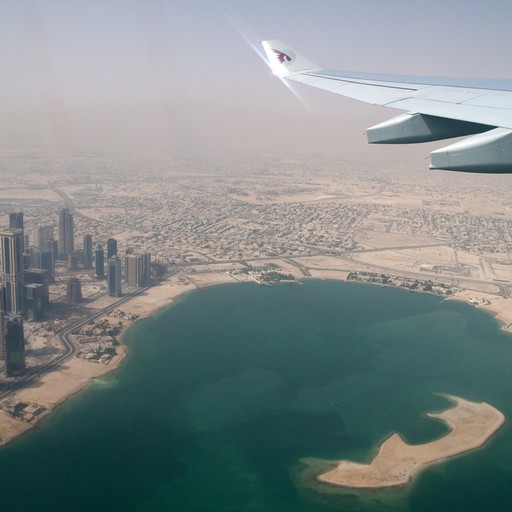} &
    \includegraphics[width=0.095\textwidth]{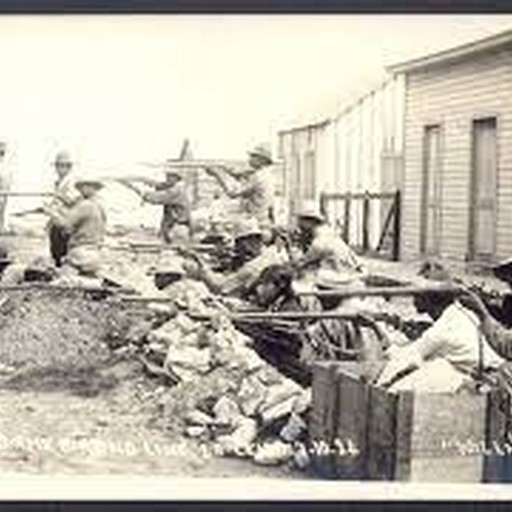} \\
    \includegraphics[width=0.095\textwidth]{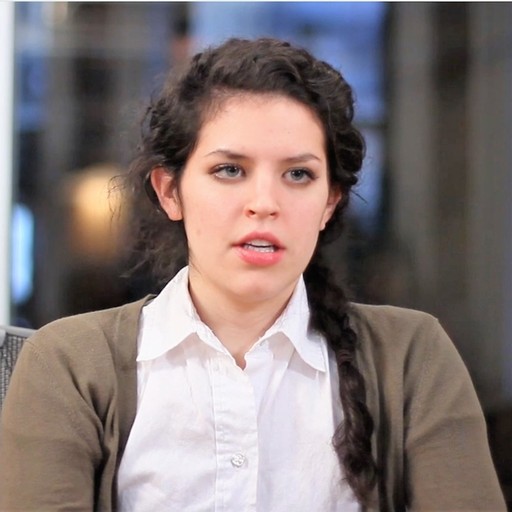} &
    \includegraphics[width=0.095\textwidth]{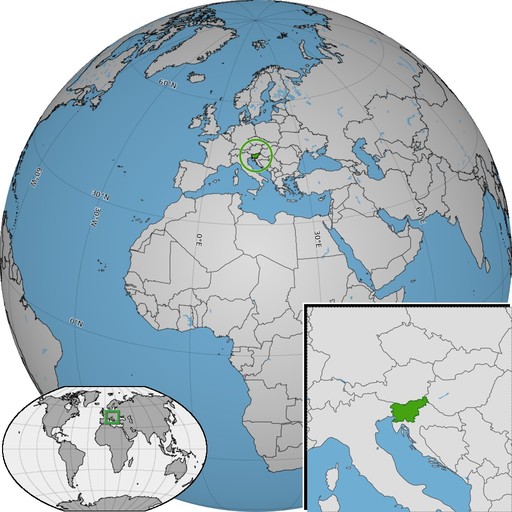} &
    \includegraphics[width=0.095\textwidth]{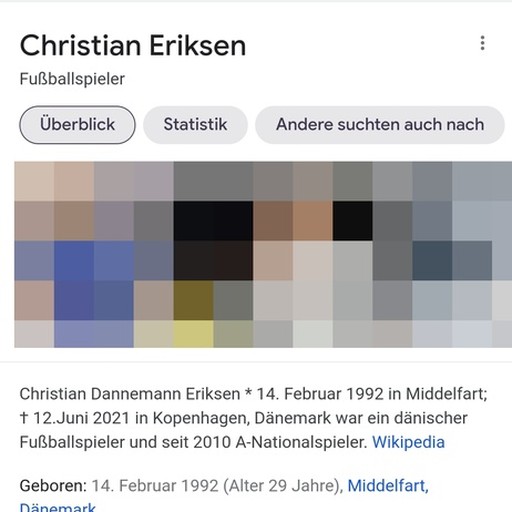} &
    \includegraphics[width=0.095\textwidth]{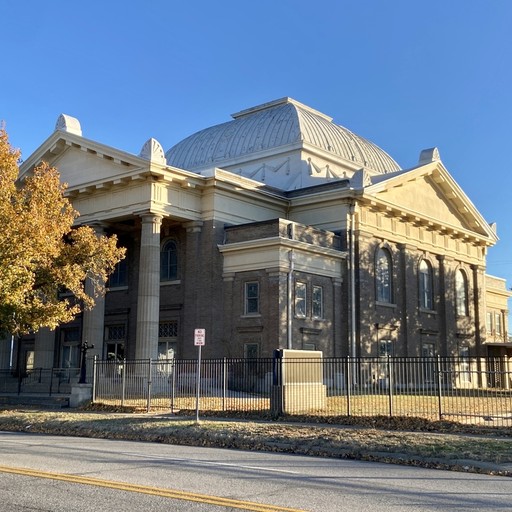} &
    \includegraphics[width=0.095\textwidth]{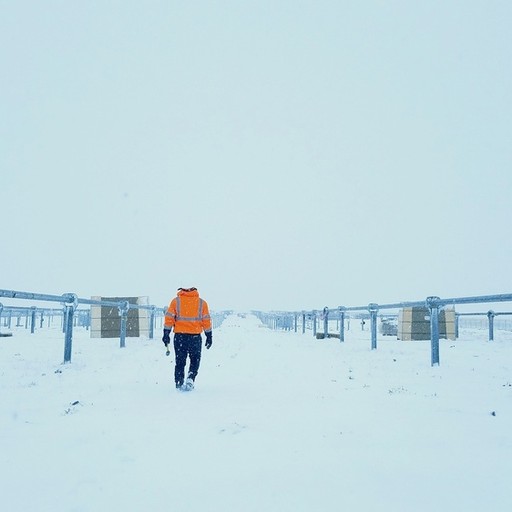} &
    \includegraphics[width=0.095\textwidth]{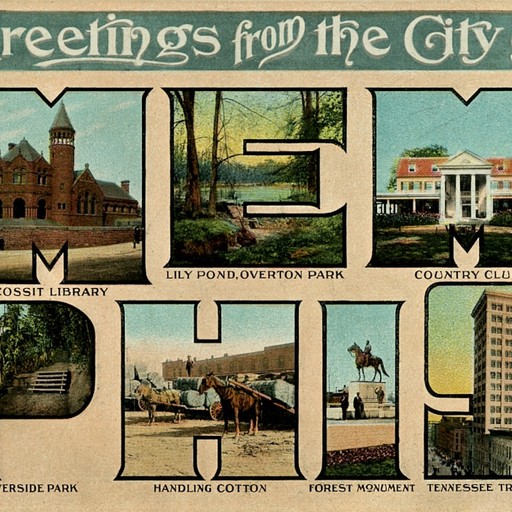} &
    \includegraphics[width=0.095\textwidth]{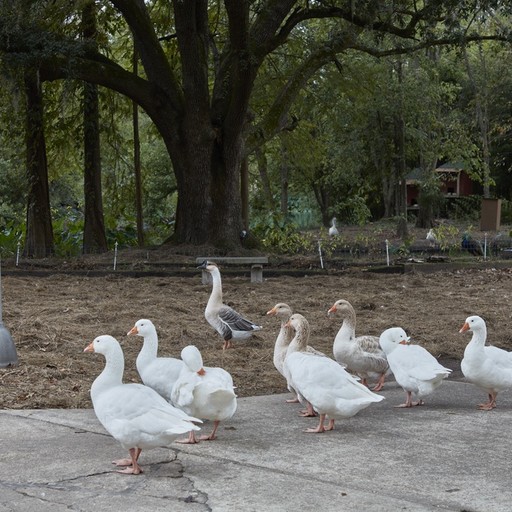} &
    \includegraphics[width=0.095\textwidth]{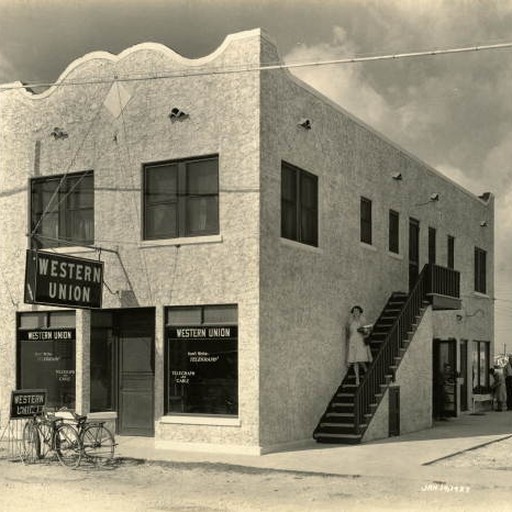} &
    \includegraphics[width=0.095\textwidth]{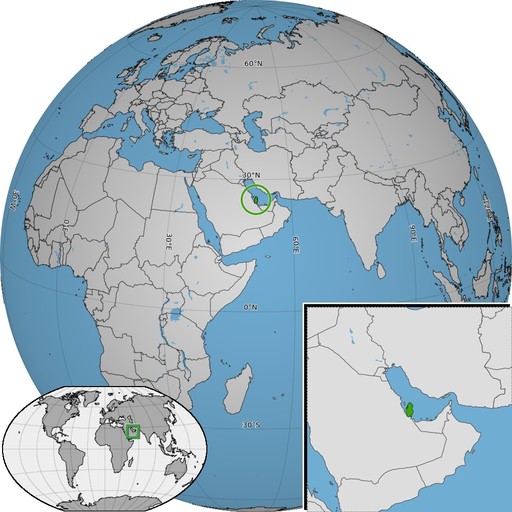} &
    \includegraphics[width=0.095\textwidth]{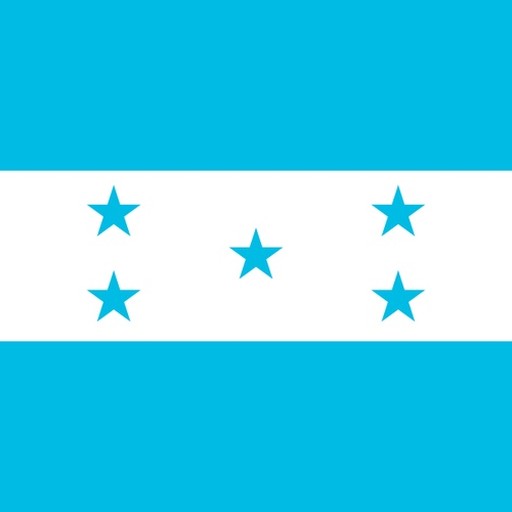} \\
    \scriptsize unreasonable & \scriptsize Slovenia & \scriptsize enhance & \scriptsize Wichita & \scriptsize kuna & \scriptsize Memphis & \scriptsize freely & \scriptsize Hialeah & \scriptsize Qatar & \scriptsize Honduras \\
  \end{tabular}
  \caption{Randomly sampled VisLingA concepts and some of their images. Each column shows 5 randomly selected images for a randomly selected concept, named below the column. Images are center-cropped for display only and in practice have varying ratios.}
  \label{fig:vislinga}
\end{figure}

\subsection{Attribution for CC BY Images in \Cref{fig:vislinga}}
\label{app:vislinga_mini_attribution}
All images in \Cref{fig:vislinga} are from Wikimedia Commons and are in the public domain or have CC0/various CC BY licenses. The 46 images whose licenses require attribution are credited below, identified by the concept named under their column and their row (counted from the top of each block). Each row number links to the image's Commons page and each license name to the license. The other 54 images are in the public domain or released under CC0.

{\scriptsize
\begin{itemize}\setlength{\itemsep}{0pt}\setlength{\parskip}{0pt}
  \item \textit{boy} (top): row \href{https://commons.wikimedia.org/wiki/File:Havelock_Island,_Boy,_Andamans.jpg}{5}: Vyacheslav Argenberg, \href{https://creativecommons.org/licenses/by/4.0}{CC BY 4.0}.
  \item \textit{smartest} (top): row \href{https://commons.wikimedia.org/wiki/File:Table_(first_part)_including_the_contents_and_origins_of_the_first_metric_scale_of_intelligence_Binet-Simon_1905.png}{3}: Serge Nicolas, Bernard Andrieu, Jean-Claude Croizet, Rasyid B. Sanitisio, Jeremy Trevelyan Burman, \href{https://creativecommons.org/licenses/by/3.0}{CC BY 3.0}; row \href{https://commons.wikimedia.org/wiki/File:Binet-Simon_Intelligence_Test_\%27What_is_missing_in_this_picture\%3F\%27.jpg}{5}: J. E. Wallace Wallin, \href{https://creativecommons.org/licenses/by/4.0}{CC BY 4.0}.
  \item \textit{computer} (top): rows \href{https://commons.wikimedia.org/wiki/File:Personal_computer,_exploded.svg}{1}, \href{https://commons.wikimedia.org/wiki/File:Personal_computer,_exploded_4.svg}{3}: Gustavb, \href{https://creativecommons.org/licenses/by/2.5}{CC BY 2.5}; row \href{https://commons.wikimedia.org/wiki/File:Apple_II_Plus,_Museum_of_the_Moving_Image.jpg}{2}: Marcin Wichary, \href{https://creativecommons.org/licenses/by/2.0}{CC BY 2.0}; row \href{https://commons.wikimedia.org/wiki/File:2014_Modding_komputera_01.jpg}{4}: Halicki, \href{https://creativecommons.org/licenses/by/3.0}{CC BY 3.0}.
  \item \textit{Banjul} (top): row \href{https://commons.wikimedia.org/wiki/File:Banjul,_The_Gambia_(CAPELLA_C09_SP_GEO_HH_20241121154435_20241121154459).png}{1}: Capella Space, \href{https://creativecommons.org/licenses/by/4.0}{CC BY 4.0}; row \href{https://commons.wikimedia.org/wiki/File:Klimadiagramm-deutsch-Banjul-Gambia.png}{3}: Hedwig in Washington, \href{https://creativecommons.org/licenses/by/2.5}{CC BY 2.5}; row \href{https://commons.wikimedia.org/wiki/File:Banjul_dock.jpg}{5}: Niels Broekzitter, \href{https://creativecommons.org/licenses/by/2.0}{CC BY 2.0}.
  \item \textit{Oman} (top): rows \href{https://commons.wikimedia.org/wiki/File:United_States_National_Guard_Bureau_visit_in_Muscat,_Sultanate_of_Oman,_March_12,_2023_-_10.jpg}{1}, \href{https://commons.wikimedia.org/wiki/File:United_States_National_Guard_Bureau_visit_in_Muscat,_Sultanate_of_Oman,_March_12,_2023_-_53.jpg}{2}, \href{https://commons.wikimedia.org/wiki/File:United_States_National_Guard_Bureau_visit_in_Muscat,_Sultanate_of_Oman,_March_12,_2023_-_7.jpg}{3}, \href{https://commons.wikimedia.org/wiki/File:United_States_National_Guard_Bureau_visit_in_Muscat,_Sultanate_of_Oman,_March_12,_2023_-_37.jpg}{4}, \href{https://commons.wikimedia.org/wiki/File:United_States_National_Guard_Bureau_visit_in_Muscat,_Sultanate_of_Oman,_March_12,_2023_-_19.jpg}{5}: Chief, National Guard Bureau, \href{https://creativecommons.org/licenses/by/2.0}{CC BY 2.0}.
  \item \textit{eyes} (top): row \href{https://commons.wikimedia.org/wiki/File:MC_Timberwolf.jpg}{1}: Christian Mehlführer, User:Chmehl, \href{https://creativecommons.org/licenses/by/2.5}{CC BY 2.5}; row \href{https://commons.wikimedia.org/wiki/File:Felis_silvestris_silvestris.jpg}{2}: Michael Gäbler, \href{https://creativecommons.org/licenses/by/3.0}{CC BY 3.0}; row \href{https://commons.wikimedia.org/wiki/File:\%D0\%93\%D0\%BB\%D0\%B0\%D0\%B7_\%D1\%84\%D0\%BB\%D0\%B5\%D1\%80\%D0\%BD\%D0\%B8\%D1\%86\%D1\%8B.jpg}{3}: Gribkov mikhail, \href{https://creativecommons.org/licenses/by/4.0}{CC BY 4.0}; row \href{https://commons.wikimedia.org/wiki/File:Eye_orbit_anatomy_anterior2.jpg}{5}: Patrick J. Lynch, medical illustrator, \href{https://creativecommons.org/licenses/by/2.5}{CC BY 2.5}.
  \item \textit{child} (top): row \href{https://commons.wikimedia.org/wiki/File:Kids,_Namibe,_Angola_cropped.jpg}{5}: Alfred Weidinger from Vienna, Austria; derivative work: Tomer T, \href{https://creativecommons.org/licenses/by/2.0}{CC BY 2.0}.
  \item \textit{swam} (top): row \href{https://commons.wikimedia.org/wiki/File:CrocoLoco-swimming-01.jpg}{1}: MathKnight and Zachi Evenor, \href{https://creativecommons.org/licenses/by/4.0}{CC BY 4.0}; row \href{https://commons.wikimedia.org/wiki/File:Girl_with_swimming_board.jpg}{3}: Tommy Wong, \href{https://creativecommons.org/licenses/by/2.0}{CC BY 2.0}; row \href{https://commons.wikimedia.org/wiki/File:PB141203_(15176335693).jpg}{4}: Dick Powell Photos Indy, \href{https://creativecommons.org/licenses/by/2.0}{CC BY 2.0}; row \href{https://commons.wikimedia.org/wiki/File:Lycodon_rufozonatus_in_Yesan_South_Korea_swimming_pic3.jpg}{5}: pintail, \href{https://creativecommons.org/licenses/by/4.0}{CC BY 4.0}.
  \item \textit{unreasonable} (bottom): row \href{https://commons.wikimedia.org/wiki/File:Pure_Reason_Revolution_live_in_London_2022_-_52017598276.jpg}{2}: kitmasterbloke, \href{https://creativecommons.org/licenses/by/2.0}{CC BY 2.0}; row \href{https://commons.wikimedia.org/wiki/File:Emily_Axford_in_A_Totally_Unreasonable_Will.jpg}{5}: Isaac Leicht, \href{https://creativecommons.org/licenses/by/3.0}{CC BY 3.0}.
  \item \textit{Slovenia} (bottom): row \href{https://commons.wikimedia.org/wiki/File:Slovenia_-_Grajevnik_-_gozd.jpg}{1}: Dibko, \href{https://creativecommons.org/licenses/by/4.0}{CC BY 4.0}; row \href{https://commons.wikimedia.org/wiki/File:Idrija_miners\%27_theatre,_Slovenia,_20240503_1313_8192.jpg}{3}: Jakub Hałun, \href{https://creativecommons.org/licenses/by/4.0}{CC BY 4.0}; row \href{https://commons.wikimedia.org/wiki/File:SLO_\%E2\%80\%94_Ljubljana_(Blick_auf_Burg)_2020.JPG}{4}: User:Mateus2019, \href{https://creativecommons.org/licenses/by/3.0/de/deed.en}{CC BY 3.0 de}.
  \item \textit{Wichita} (bottom): rows \href{https://commons.wikimedia.org/wiki/File:First_Presbyterian_Church,_Wichita,_Kansas.jpg}{1}, \href{https://commons.wikimedia.org/wiki/File:Petroleum_Building,_Wichita,_Kansas_01.jpg}{2}, \href{https://commons.wikimedia.org/wiki/File:Petroleum_Building,_Wichita,_Kansas_02.jpg}{3}, \href{https://commons.wikimedia.org/wiki/File:Former_First_Church_of_Christ,_Scientist,_Wichita,_Kansas.jpg}{5}: Myotus, \href{https://creativecommons.org/licenses/by/4.0}{CC BY 4.0}.
  \item \textit{kuna} (bottom): row \href{https://commons.wikimedia.org/wiki/File:Kuna_(Cuna)_aeroplane_model_circa_1920.jpg}{1}: Brigade Piron, \href{https://creativecommons.org/licenses/by/3.0}{CC BY 3.0}; row \href{https://commons.wikimedia.org/wiki/File:Mammoth_Peak_and_Kuna_Crest.jpg}{3}: David Ackerman, \href{https://creativecommons.org/licenses/by/2.0}{CC BY 2.0}.
  \item \textit{Memphis} (bottom): rows \href{https://commons.wikimedia.org/wiki/File:Greetings_from_Memphis,_Tennessee_-_Large_Letter_Postcard_(4601111792).jpg}{1}, \href{https://commons.wikimedia.org/wiki/File:Greetings_from_the_City_of_Memphis,_Tennessee_-_Large_Letter_Postcard_(53156469449).jpg}{5}: Steve Shook from Moscow, Idaho, USA, \href{https://creativecommons.org/licenses/by/2.0}{CC BY 2.0}; row \href{https://commons.wikimedia.org/wiki/File:Clark_Tower_Memphis_TN_01.jpg}{3}: Thomas R Machnitzki, \href{https://creativecommons.org/licenses/by/3.0}{CC BY 3.0}.
  \item \textit{Hialeah} (bottom): rows \href{https://commons.wikimedia.org/wiki/File:Bed_Bath_\%26_Beyond_Closing_Last_10_Days,_Hialeah,_Florida,_March_2023_-_6.jpg}{1}, \href{https://commons.wikimedia.org/wiki/File:Bed_Bath_\%26_Beyond_Closing_Last_10_Days,_Hialeah,_Florida,_March_2023_-_7.jpg}{2}, \href{https://commons.wikimedia.org/wiki/File:Bed_Bath_\%26_Beyond_Closing_Last_10_Days,_Hialeah,_Florida,_March_2023_-_8.jpg}{3}: Phillip Pessar, \href{https://creativecommons.org/licenses/by/2.0}{CC BY 2.0}; row \href{https://commons.wikimedia.org/wiki/File:Fuddruckers_Westland_Mall_Hialeah_Florida_March_2018.jpg}{4}: Phillip Pessar from Miami, USA, \href{https://creativecommons.org/licenses/by/2.0}{CC BY 2.0}.
  \item \textit{Qatar} (bottom): rows \href{https://commons.wikimedia.org/wiki/File:Qatar,_Doha,_Skyscrapers,_Qatar.jpg}{1}, \href{https://commons.wikimedia.org/wiki/File:Qatar,_Doha,_Qatar,_Persian_Gulf.jpg}{4}: Vyacheslav Argenberg, \href{https://creativecommons.org/licenses/by/4.0}{CC BY 4.0}.
  \item \textit{Honduras} (bottom): row \href{https://commons.wikimedia.org/wiki/File:Honduran_food_breakfast.jpg}{2}: Luis Alfredo Romero, \href{https://creativecommons.org/licenses/by/4.0}{CC BY 4.0}; row \href{https://commons.wikimedia.org/wiki/File:Las_terrazas_Ruins_Honduras.jpg}{3}: Diario El Tiempo, \href{https://creativecommons.org/licenses/by/4.0}{CC BY 4.0}.
\end{itemize}
}

\section{Small-Scale Experiments Details}
\label{app:smallscale_details}
For captioning, a linear layer is trained between the vision and language models as an adapter. For retrieval each has a linear layer trained on top of its last layer representations, by default using the class token for the vision head and averaging the post-adapter token representations for the language model. Different, more powerful adapters and representation aggregation schemes were not found to make a big difference. The training data is COCO-2014 \citep{lin2014microsoft}.

To ensure the models are well trained optimal learning rates and weight decays were chosen by doing a grid search over learning rates and weight decays, spaced log-linearly in $10^{-5},10^{-4.5},...10^{-2}$ and $10^{-4},10^{-3},...10^{-1}$ respectively. Optimal retrieval hyperparameters were chosen based on the model's top-1 ImageNet classification accuracy over ImageNet's validation set. Generation hyperparameters were optimized on the BLEU score over COCO's validation set. Models were trained for 10 epochs with an early stopping patience of 5, with validation metrics computed once every 0.25 epochs. 

Resulting optimal hyperparameters for all generative model configurations, including those referenced in the next section, are listed in \Cref{tab:gen_hyps}. For retrieval a learning rate of $10^{-4}$ and weight decay of $10^{-2}$ was found to work well for all settings.

\begin{table}[h]
    \centering
    \begin{tabular}{lcc}
         Adapter type & Learning rate & Weight decay  \\
         \midrule
         Linear & $10^{-2.5}$ & $10^{-2}$ \\
         MLP & $10^{-3}$ & $10^{-2}$ \\
         Attention & $10^{-3.5}$ & $10^{-3}$
    \end{tabular}
    \caption{Optimal learning rates and weight decays for different adapters for the small-scale generative experiments.}
    \label{tab:gen_hyps}
    \vspace{-15pt}
\end{table}

\subsection{Different Adapters Yield Similar Results}
\label{app:diff_adapters}
To see if better adapters help we repeated the two small-scale experiments with two more powerful adapters -- an MLP and an attention block between the tokens and learnable embeddings. The attention adapter has a single learnable token embedding for retrieval and 200 learnable embeddings for generation, where a ViT-B/16 has 197 vision tokens by default. For retrieval we also experiment with three different kinds of pooling -- mean pooling, max pooling, and, specifically for the vision transformer, taking the class token. When the vision transformer uses the class token the language model defaults to mean pooling. The attention adapter needs no pooling due to the cross attention already aggregating information.

All adapters and choices of pooling yielded similar held-out concept detection accuracies. At the end of training, the models trained without the held-out concepts have detection rates of 1.2\%, 2.0\% and 2.2\% of the time with the linear, MLP and attention adapters respectively. The retrieval models reach concept detection accuracies between 1.6\% and 8.9\%, with a linear adapter and max pooling doing best. When trained on the full dataset, all generative variants reach 84.0--85.9\% and all retrieval variants 79.0--81.5\%. Even the best held-out concept detection accuracy at any point during training, a much more lenient metric, remains low: 7.2\% for generation and 10.3\% for retrieval, both with a linear adapter. Held-out concept accuracies are averaged over concepts and measured every 0.1 epochs.

\section{What Does it Mean for a Model to ``Know'' of a Concept?}
\label{app:know_conc}
For the vision models this is simple -- all the concepts are ImageNet classes and the models are ImageNet classifiers that get high accuracies. For the language models, they can be prompted to complete texts such as ``a \texttt{<concept>} is'' or asked questions about it, thereby gauging some form of recognition.

To see whether the LLaVA backbone knows of the held-out concepts we ask it various questions about them, such as ``What is a zebra?'' and ``What is a horse-like animal with black and white stripes?''. Doing so with the LLaVA model's LLM backbone after it was trained showed that the language model by itself still recognizes the held-out concepts.

\section{Concept Filtering}
\label{app:concept_filt}
The held-out concepts are ``zebra'', ``banana'', ``pizza'', ``umbrella'', and ``toothbrush'' for the small-scale experiments, with additionally ``broccoli'', ``microwave'', ``strawberry'', ``castle'', ``envelope'', ``violin'', ``tractor'', ``flamingo'', ``printer'', ``volcano'' for the large-scale experiments. The small-scale experiments use a smaller concept set as even the small scale models trained on the entire dataset fail to detect most of the additional concepts. To ensure the datasets are not overly reduced, for the large-scale experiments we train three models without the held-out concepts, each having only five concepts held-out.

To filter the held-out concepts from the finetuning data we remove all image-text pairs where a) the text contains the object's name or b) the image is likely to contain it. Image-based filtering is important to prevent misclassified data from hampering the experiment, e.g. zebra images accidentally captioned as giraffes. This is done using a CLIP ViT-L/14 model and filtering images with a similarity above a chosen threshold to the text ``a photo of a \texttt{<concept>}'', with the threshold found through manual tuning. For the small-scale experiments the threshold was 0.2 while for the large-scale ones it was 0.25. This reduces the datasets by at most 9\%.

Specifically for the large-scale experiment we also filter the concepts ``tooth brush'' and ``volcanic'' as we find they are occasionally not detected by the image-based filter. There may be other, less direct forms of leakage we are unaware of that artificially boost the large-scale model's performance on detecting held-out concepts.

\section{More Qualitative Examples of LLaVA Generalization Failures}
\label{app:qual_llava_fail}
See \Cref{fig:more_fail_ex}. Note that all descriptions are generated at $T=0$, except for the successful banana example (o), as none of the $T=0$ descriptions of the banana images mention a banana. Example (o) was obtained by sampling with $T=0.2$ repeatedly.

\begin{figure}
  \centering

  \begin{subfigure}[t]{0.32\textwidth}
    \centering
    \qualimg{pizza_neg_n07873807_440.JPEG}
    \caption{The image features a plate of food, which includes \boldred{a sandwich and a salad}. The sandwich is placed on a pink plate...}
  \end{subfigure}\hfill
  \begin{subfigure}[t]{0.32\textwidth}
    \centering
    \qualimg{pizza_neg_n07873807_1181.JPEG}
    \caption{The image features a table with \boldred{a cake} on it, along with a birthday candle and a birthday card...}
  \end{subfigure}\hfill
  \begin{subfigure}[t]{0.32\textwidth}
    \centering
    \qualimg{pizza_pos_n07873807_207.JPEG}
    \caption{The image features a large, delicious-looking \boldgreen{pizza} with various toppings, sitting on a plate.}
  \end{subfigure}

  \begin{subfigure}[t]{0.32\textwidth}
    \centering
    \qualimg{umbrella_neg_rank090_24-860_train_n04507155_n04507155_6654.JPEG}
    \caption{The image features a young boy standing in the rain, holding \boldred{a kite}.}
  \end{subfigure}\hfill
  \begin{subfigure}[t]{0.32\textwidth}
    \centering
    \qualimg{umbrella_neg_rank025_25-910_train_n04507155_n04507155_8167.JPEG}
    \caption{...a man standing outside, holding a cell phone in his hand. He is also holding \boldred{a blue bag or purse}.}
  \end{subfigure}\hfill
  \begin{subfigure}[t]{0.32\textwidth}
    \centering
    \qualimg{umbrella_pos_rank100_24-666_train_n04507155_n04507155_14898.JPEG}
    \caption{The image features \boldgreen{a blue umbrella} sitting on the grass in a park.}
  \end{subfigure}

  \begin{subfigure}[t]{0.32\textwidth}
    \centering
    \qualimg{toothbrush_neg_rank294_21-758_train_n03742115_n03742115_6826.JPEG}
    \caption{The image features a bathroom with a sink, mirror, and \boldred{a large collection of toiletries and personal care items}.}
  \end{subfigure}\hfill
  \begin{subfigure}[t]{0.32\textwidth}
    \centering
    \qualimg{toothbrush_neg_rank257_21-887_train_n04553703_n04553703_38705.JPEG}
    \caption{The image features a woman \boldred{washing her hands} in a bathroom sink.}
  \end{subfigure}\hfill
  \begin{subfigure}[t]{0.32\textwidth}
    \centering
    \qualimg{toothbrush_pos_rank307_21-722_train_n03742115_n03742115_10980.JPEG}
    \caption{...bathroom with a mirrored cabinet, a sink, and a shelf. The shelf is filled with various items, including cups, \boldgreen{a toothbrush}, and a bottle.}
  \end{subfigure}
  
  \begin{subfigure}[t]{0.32\textwidth}
    \centering
    \qualimg{zebra_neg_rank049_26-575_train_n02391049_n02391049_2141.JPEG}
    \caption{The image features \boldred{a giraffe} standing in a grassy field.}
  \end{subfigure}\hfill
  \begin{subfigure}[t]{0.32\textwidth}
    \centering
    \qualimg{zebra_neg_rank073_26-356_train_n02391049_n02391049_5633.JPEG}
    \caption{The image features \boldred{a black and white striped giraffe}.}
  \end{subfigure}\hfill
  \begin{subfigure}[t]{0.32\textwidth}
    \centering
    \qualimg{zebra_pos_rank095_26-181_train_n02391049_n02391049_147.JPEG}
    \caption{The image features a close-up of \boldgreen{a zebra's face}, with its eyes and mouth visible.}
  \end{subfigure}

  \begin{subfigure}[t]{0.32\textwidth}
    \centering
    \qualimg{banana_neg_rank049_25-610_train_n07753592_n07753592_11210.JPEG}
    \caption{...a man riding a bicycle with a cart attached to it, filled with \boldred{a large number of apples}...}
  \end{subfigure}\hfill
  \begin{subfigure}[t]{0.32\textwidth}
    \centering
    \qualimg{banana_neg_rank097_25-192_train_n07753592_n07753592_12069.JPEG}
    \caption{The image features a close-up of \boldred{two peeled oranges} sitting on a wooden table.}
  \end{subfigure}\hfill
  \begin{subfigure}[t]{0.32\textwidth}
    \centering
    \qualimg{banana_pos_rank064_25-480_train_n07753592_n07753592_3872.JPEG}
    \caption{The image features a large pile of yellow \boldgreen{bananas}, which are stacked on top of each other.}
  \end{subfigure}
\vspace{-5pt}
  \caption{Additional examples of generalization failures from the LLaVA-1.5-7B model from \Cref{fig:llava_examples}, trained without the held-out concepts, when prompted ``What is in this image?''. Rows correspond to the concepts pizza, umbrella, toothbrush, zebra, and banana respectively. For each concept we randomly sample two images whose description does not mention the concept (left, center) and one image where the concept was correctly detected (right). Note that this 2:1 ratio is purely illustrative and is not representative of the real detection rate per concept.}
  \label{fig:more_fail_ex}
\end{figure}

\section{Discussion on Large-Scale Results}
\label{app:large_scale_disc}
The large-scale model getting better held-out concept detection accuracies than the smaller models could mean that the failure described in \Cref{sec:vlms_failure} is partially alleviated by scale. However, there are other possible reasons a larger model could do better, such as LLaVA generally outputting longer completions or it potentially relying on its internal knowledge more so than what it sees, akin to the failure described by \citet{vo2025vision}. For example, the completion of ``a bathroom with a toilet, mirror, toothpaste, ...'' would likely mention a toothbrush regardless of whether it actually exists in the image.

To test this, we benchmark the large-scale models when prompting them to only name the main object they see in the image, limiting them to output answers with at most 8 tokens. In this setting the models trained without the held-out concepts performed comparably well, getting an average detection accuracy of 19.6\%. Thus, the large-scale model's success is not due to relying on the language model's knowledge while ignoring the vision model.

Another option is that there is some data leakage. While running these experiments, we initially filtered for ``toothbrush'' but not for ``tooth brush'', thereby having some examples slip by. As larger models are more sample efficient, these $\sim$20 extra image-text pairs resulted in toothbrushes being detected much more often. In the current setup, the large-scale models detect ``pizza'' much more than any other concept, although it is hard to say whether there is some data leakage we have not managed to find, the better detection performance is due to scale, or some mixture thereof. When training the model with the dataset visual similarity filtering threshold set to 0.2 instead of 0.25 models generally perform much worse, getting accuracies around half of those reported in \Cref{tab:concept_detection}, which lends credence to there possibly being some leakage.

\section{Details on \Cref{sec:repr_align_failures} LLaVA Models}
\label{app:align_failures}
For the vision backbones we use 11 different standard pretrained ViT-B models and 26 finetunes thereof. ViT-B are used and not ViT-L, which is more typical for LLaVA, due to not all models having pretrained ViT-L variants. For the original set of 11 models, we use BEiT \citep{bao2021beit}, CLIP models trained by OpenAI and on the Datacomp dataset \citep{radford2021learning, datacomp}, DEiT \citep{touvron2021training}, DINO \citep{caron2021emerging}, DINOv2 \citep{oquab2023dinov2}, MAE \citep{he2022masked}, SIGLip and SIGLip2 \citep{zhai2023sigmoid,tschannen2025siglip}, and ViTs trained on ImageNet 1k and 21k \citep{Dosovitskiy2020AnII}. For the 26 finetunes we use 6 models from FusionBench \citep{tang2025fusionbench} and various other models uploaded to HuggingFace, all with at least 400 downloads over the last month as of writing \citep{nurul2023watermarkdetector,iakubovskyi2023aivsreal,iakubovskyi2023closedeyes,iakubovskyi2023deepfakevsreal,iakubovskyi2023facialemotions,iakubovskyi2023manwomanface,enright2023rorshark,ng2023blurvsclean,vishwakarma2023humanaction,sharma2024ulbase,sharma2024ulbedroom,sharma2024ulinterior,iakubovskyi2024fairfaceage,iakubovskyi2024fairfacegender,codewithdark2025vitchestxray,sakthi2025genderclassifiermini,sakthi2025deepfakedetectorv2,sakthi2025agesiglip2,sakthi2025watermarksiglip2,sahoo2025faceemotion}.

\section{Limitations}
\label{app:limits}
Representational similarity is generally a vague notion without a universally agreed upon definition, making it difficult to making decisive claims on representational similarity or alignability \citep{sucholutsky2023getting}. While we attempt to make broad claims by showing connections between different measures of similarity (see section \ref{sec:align_equiv}) and grounding results in theory where possible (section \ref{sec:intermod_hard}), these are naturally not exhaustive over the many notions of similarity. Additionally, results are believed to be general due to their underlying cause, namely inherent differences in the information content in different modalities. The difference in information content has been discussed in several prior works \citep{gal2021robustness,schrodi2025two}, with the results herein providing further evidence.

Most of the theoretical results are on linear alignability while in practice many VLMs are built using nonlinear adapters. Some results show that linear alignability bounds some notions of nonlinear similarity between representations (Appendix \ref{app:repr_sim_lin_align}), while others show that failures appear also given nonlinear alignment, namely the LLaVA experiments in sections \ref{sec:vlms_failure} and \ref{sec:repr_align_failures}, and the ablations in Appendix \ref{app:diff_adapters}. 

Most results are circumstantial in supporting that many VLM failures likely arise due to representational misalignment. In this sense the results are consistent with the hypothesis but do not causally show that representational misalignment is what caused a failure. Results in section \ref{sec:vlms_failure} are an important exception, as based on the theory they make a causal prediction which turns out to be correct -- that if vision-language representations are misaligned then the generalization failure would occur.

\end{document}